\newacronym{KL}{KL}{Kullback-Leibler}
\newacronym{ELBO}{elbo}{\emph{evidence lower bound}}
\newacronym{MCMC}{mcmc}{Markov chain Monte Carlo}
\newacronym{ML}{ML}{machine learning}
\newacronym{VAE}{VAE}{variational auto-encoder}
\newacronym{AE}{AE}{auto-encoder}
\newacronym{SGD}{SGD}{stochastic gradient descent}
\newacronym[sort=beta]{BVAE}{\(\beta\)-vae}{}
\newacronym{TC}{TC}{total correlation}
\newacronym{MMD}{MMD}{maximum mean discrepancy}
\newacronym{GAN}{GAN}{generative adversarial network}
\newacronym{AAE}{AAE}{adversarial auto-encoder}
\newacronym{CCM}{CCM}{constant curvature manifold}
\newacronym{IWAE}{IWAE}{importance weighted auto-encoder}
\newacronym{MC}{MC}{Monte Carlo}
\newacronym{ARS}{ARS}{adaptive rejection sampling}
\newacronym{OT}{OT}{optimal transport}
\newacronym{SDE}{SDE}{stochastic differential equation}
\newacronym{ODE}{ODE}{ordinary differential equation}
\newacronym{IVP}{IVP}{initial value problem}
\newacronym{NF}{NF}{normalizing flow}
\newacronym{CNF}{CNF}{continuous normalizing flow}
\newacronym{RCNF}{RCNF}{Riemmanian continuous normalizing flow}
\newacronym{NFE}{NFE}{number of function evaluations}
\newacronym{NN}{NN}{neural network}
\newacronym{FNN}{FNN}{feedforward neural network}
\newacronym{MLP}{MLP}{multilayer perceptron}
\newacronym{SVM}{SVM}{support vector machine}
\newacronym{KDE}{KDE}{kernel density estimation}
\newacronym{RKHS}{RKHS}{reproducing kernel Hilbert space}
\newacronym{VI}{VI}{variational inference}
\newacronym{maxlike}{ML}{maximum likelihood}
\newacronym{RK}{RK}{Runge-Kutta}
\newcommand{\g}{\,|\,}
\DeclareRobustCommand{\KL}[2]{\ensuremath{D_{\textrm{KL}}\left(#1\;\|\;#2\right)}}
\newcommand{\diag}{\textrm{diag}}
\newcommand{\supp}{\textrm{supp}}
\DeclareMathOperator*{\argmin}{arg\,min}
\def\x{\bm{x}}
\def\z{\bm{z}}
\def\y{\bm{y}}
\def\w{\bm{w}}
\def\v{\bm{v}}
\def\f{\bm{f}}
\def\a{\bm{a}}
\def\p{\bm{p}}
\def\r{\bm{r}}
\def\e{\bm{\text{e}}}
\def\M{\mathcal{M}}
\def\T{\mathcal{T}}
\def\R{\mathbb{R}}
\def\B{\mathbb{B}}
\def\S{\mathbb{S}}
\def\g{\mathfrak{g}}
\def\vMF{\text{vMF}}
\DeclareMathOperator{\tr}{tr}
\DeclareMathOperator{\diver}{div}
\DeclareMathOperator{\vol}{Vol}
\DeclareMathOperator{\leb}{Leb}
\def\1{\mathbb{1}}
\renewcommand{\subparagraph}[1]{\paragraph{\normalfont\itshape{#1}}}
\let\cline\cmidrule
\newtheorem{proposition}{Proposition}
\newtheorem{corollary}{Corollary}[proposition]
\newcommand{\appendixhead}{
  \centerline{\textbf{\huge Appendix for}\vspace{0.15in}}
  \centerline{\textbf{\huge Riemannian Continuous Normalizing Flows}\vspace{0.25in}}
  }
\let\origappendix\appendix % save the existing appendix command
\renewcommand\appendix{\pagenumbering{arabic}\origappendix}
  \definecolor{orange}{HTML}{ff7f0e}
  \definecolor{blue}{HTML}{1f77b4}
  \definecolor{green}{HTML}{2ca02c}
\title{Riemannian Continuous Normalizing Flows}
\author{
  Emile Mathieu$^{\dagger}$\thanks{Work done while at Facebook AI research.}\ , \ 
  % Yee Whye Teh$^{\ddagger, *}$ \\
  Maximilian Nickel$^{\ddagger}$ \\
    %information about author (webpage, alternative
    %address)---\emph{not} for acknowledging funding agencies.} \\
  \texttt{emile.mathieu@stats.ox.ac.uk}, \
  \texttt{maxn@fb.com} \\
  $^\dagger$ Department of Statistics, University of Oxford, UK\\
  $^\ddagger$ Facebook Artificial Intelligence Research, New York, USA \\
}
\begin{document}

\maketitle

\begin{abstract}
Normalizing flows have shown great promise for modelling flexible probability
distributions in a computationally tractable way. However, whilst data is often
naturally described on Riemannian manifolds such as spheres, tori, and
hyperbolic spaces, most normalizing flows implicitly assume a flat geometry,
making them either misspecified or ill-suited in these situations. To overcome
this problem, we introduce \emph{Riemannian continuous normalizing flows}, a
model which admits the parametrization of flexible probability measures on
smooth manifolds by defining flows as the solution to \acrlongpl{ODE}.
We show that this approach can lead to substantial improvements on both
synthetic and real-world data when compared to standard flows or previously
introduced projected flows.
\end{abstract}

% !TEX root =  ../geoflow.tex

\section{Introduction}
\begin{wrapfigure}{r}{0.33\textwidth}
\vspace{-1.5em}
\centering
  \includegraphics[width=\linewidth]{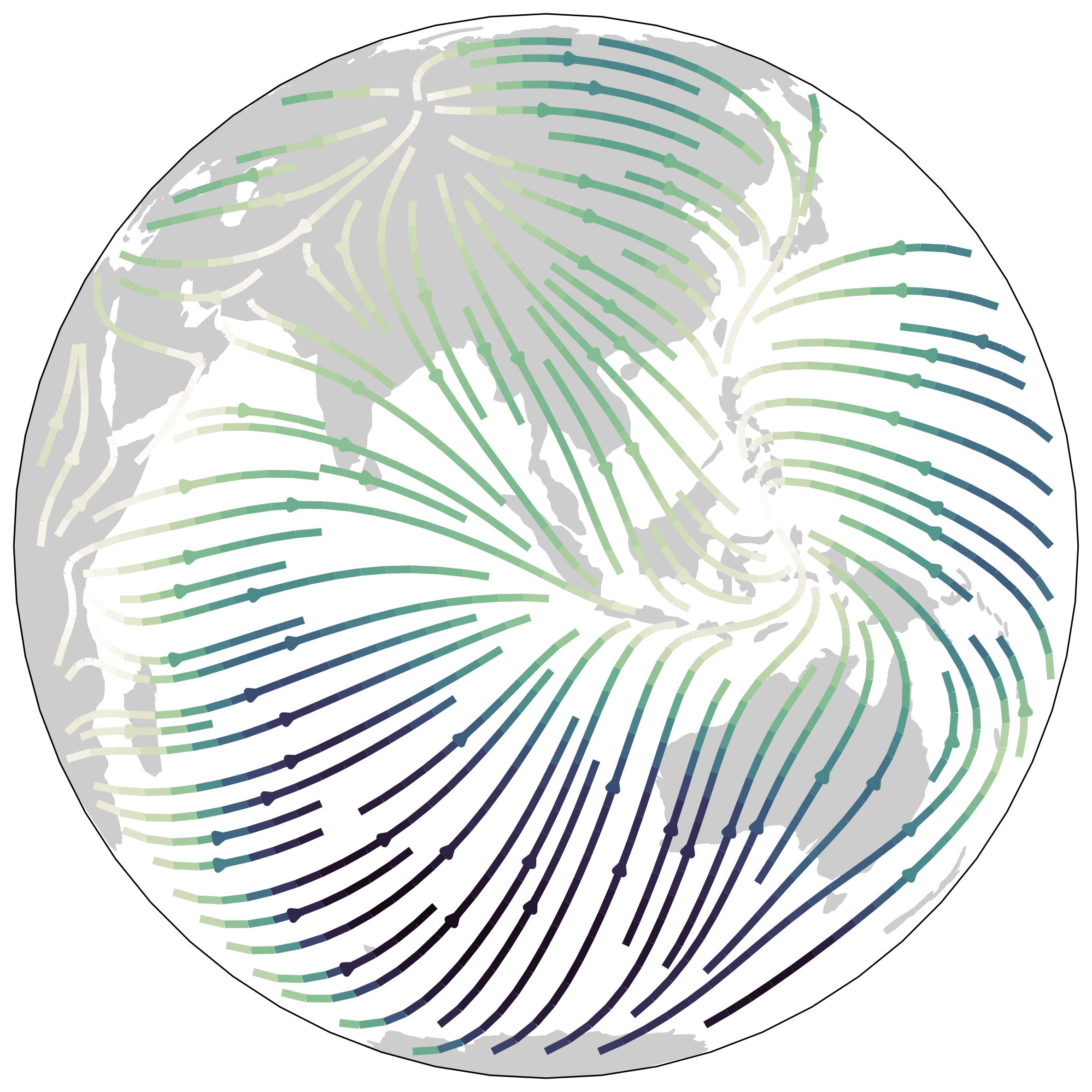}
  \caption{
    Trajectories generated on the sphere to model volcano eruptions.
    Note that these converge to the known \emph{Ring of Fire}.
  }
  \label{fig:sphere_integral_path}
  \vspace{-0.8em}
\end{wrapfigure}
Learning well-specified probabilistic models is at the heart of many problems in
machine learning and statistics. Much focus has therefore been placed on
developing methods for modelling and inferring expressive probability distributions.
\Acrlongpl{NF} \citep{rezende2016Variational} have shown great promise for
this task as they provide a general and extensible framework for modelling highly complex and multimodal distributions~\citep{papamakarios2019Normalizing}.

An orthogonal but equally important aspect of well-specified models is to
correctly characterize the geometry which describes the proximity of data points.
Riemannian manifolds provide a general framework for this purpose and are a natural
approach to model tasks in many scientific fields ranging from earth and climate
science to biology and computer vision.
For instance, storm trajectories may be modelled as paths on the sphere
\citep{karpatne2017Machine}, the shape of proteins can be parametrized using
tori \citep{hamelryck2006Sampling}, cell developmental processes can be described
through paths in hyperbolic space \citep{klimovskaia2019Poincare}, and human
actions can be recognized in video using matrix
manifolds \citep{lui2012Advances}. If appropriately chosen, manifold-informed
methods can lead to improved sample complexity and generalization, improved fit
in the low parameter regime, and guide inference methods to interpretable
models. They can also be understood as a geometric prior that encodes a
practitioner's assumption about the data and imposes an inductive bias.

However, conventional normalizing flows are not readily applicable to such manifold-valued
data since their implicit Euclidean assumption makes them unaware of the
underlying geometry or borders of the manifold. As a result they would yield
distributions having some or all of their mass lying outside the manifold,
rendering them ill-suited or even misspecified so that central concepts like the
reverse \gls{KL} divergence would not even be defined.

In this work, we propose a principled way to combine both of these aspects and
parametrize flexible probability distributions on Riemannian manifolds.
Specifically, we introduce \emph{Riemmanian continuous normalizing flows} in
which flows are defined via vector fields on manifolds and computed as the
solution to the associated \gls{ODE}  (see \cref{fig:sphere_integral_path} for an illustration). Intuitively, our method operates by first
parametrizing a vector field on the manifold with a neural network, then sampling
particles from a base distribution, and finally
approximating their flow along the vector field using a numerical solver. Both the
neural network and the solver are aware of the underlying geometry which ensures
that the flow is always located on the manifold -- yielding a \emph{Riemannian} method.

This approach allows us to combine multiple important advantages: One major
challenge of normalizing flows lies in designing transformations that enable
efficient sampling and density computation. By basing our approach on
\emph{\glspl{CNF}} \citep{chen2019Neural,grathwohl2018FFJORD,salman2018Deep}
we avoid strong structural constraints to be imposed on
the flow, as is the case for most \emph{discrete} \acrlongpl{NF}.
Such unconstrained \emph{free-form} flows have empirically been shown to be highly expressive \citep{chen2020Residual,grathwohl2018FFJORD}.
Moreover, \emph{projected} methods require a differentiable mapping from a Euclidean space to the manifold, yet such a function cannot be bijective, which in turn leads to numerical challenges.
By taking a Riemannian approach, our method is more
versatile since it does not rely on an ad-hoc projection map and simultaneously reduces numerical artefacts that interfere with training.
To the best of our knowledge, our method is the first to combine these
properties as existing methods for normalizing flows on manifolds are either
discrete~\citep{bose2020Latent, rezende2020Normalizing},
projected~\citep{gemici2016Normalizing,falorsi2019Reparameterizing,bose2020Latent} or manifold-specific \citep{sei2011Jacobian, bose2020Latent, rezende2020Normalizing}.

We empirically demonstrate the advantages of our method on constant curvature
manifolds -- i.e., the
Poincar\'e disk and the sphere -- and show the benefits of the proposed approach
compared to non-Riemannian and projected methods for maximum likelihood
estimation and reverse \gls{KL} minimization. We also apply our method to density
estimation on earth-sciences data (e.g., locations of earthquakes, floods and
wildfires) and show that it yields better generalization performance and
faster convergence.
%

%%% Local Variables:
%%% mode: latex
%%% TeX-master: "../geoflow"
%%% End:

% !TEX root =  ../geoflow.tex
\begin{table}[b]
% \vspace{-2em}
\centering
\caption{Summary of d-dimensional continuous constant (sectional) curvature manifolds.}
\label{table:geometry}
\resizebox{\textwidth}{!}{
\begin{tabular}{lcllllll}
\toprule
  Geometry  & & Model & Curvature & Coordinates  &  $\sqrt{\det g}=d\vol/d\leb_{\R^d}$  & Compact  \\
  \cmidrule(r){1-2}\cmidrule(lr){3-3}\cmidrule(lr){4-4}\cmidrule(lr){5-5}\cmidrule(l){6-6}\cmidrule(l){7-7}
  \bf Euclidean & $\R^d$ & Real vector space & $K = 0$ & Cartesian $\z$ & $1$ & No \\
  \bf Hyperbolic & $\B_K^d$ & Poincar\'e ball  & $K < 0$ & Cartesian $\z$ & $\left(2~ / ~1 + K\left\|\z\right\|^2 \right)^d $ & No \\
%  \cmidrule(r){1-1}\cmidrule(lr){2-2}\cmidrule(lr){3-3}\cmidrule(lr){4-4}\cmidrule(lr){5-5}\cmidrule(l){6-6}
 \bf Elliptic & $\S_K^d$ & Hypersphere & $K > 0$ & n-spherical $\bm{\varphi}$ &  $K^{-\frac{d-1}{2}} \prod^{d-2}_{i=1} {\sin(\varphi_{i})^{d-i-1}}$ & Yes \\
\bottomrule
\end{tabular}
}
\end{table}

\section{Continuous Normalizing Flows on Riemannian Manifolds} \label{sec:model}
\Acrlongpl{NF} operate by pushing a simple base distribution through a
series of parametrized invertible maps, referred as the \emph{flow}. This can
yield a highly complex and multimodal distribution which is
typically assumed to live in a Euclidean vector space.
% \citep{chen2018ContinuousTime,avelin2020Neural,dupont2019Augmented,zhang2018MongeAmp,rubanova2019Latent}
%
Here, we propose a principled approach to extend \acrlongpl{NF} to
manifold-valued data, i.e.\ \emph{\acrlongpl{RCNF}} (\acrshortpl{RCNF}).
Following \glspl{CNF} \citep{chen2019Neural,grathwohl2018FFJORD,salman2018Deep} we define
manifold flows as the solutions to \glspl{ODE}. The high-level idea is to
parametrize flows through the time-evolution of manifold-valued particles $\z$
-- in particular via their velocity $\dot{\z}(t) = \f_{\theta}(\z(t), t)$ where $\f_{\theta}$ denotes a
\emph{vector field}. Particles are first sampled from a simple base
distribution, and then their evolution is integrated by a manifold-aware
numerical solver, yielding a new complex multimodal distribution of the
particles.
This \emph{Riemannian and continuous} approach has the advantages of allowing
almost \emph{free-form} neural networks and of not requiring any mapping from a
Euclidean space which would potentially lead to numerical challenges.

For practical purposes, we focus our theoretical and experimental discussion on
\emph{constant curvature manifolds} (see \cref{table:geometry}). In addition to
being widely used in the literature \citep{nickel2017Poincar,davidson2018Hyperspherical,mardia2000Directional,hasnat2017MisesFisher}, these manifolds are convenient to work with
since most related geometrical quantities are available in closed-form. However,
our proposed approach is generic and could be used on a broad class of manifolds
such as product and matrix manifolds like tori and Grassmanians. For a brief
overview of relevant concepts in Riemannian geometry please see
\cref{sec:riem_review} or \cite{lee2003Introduction} for a more thorough
introduction.

In the following, we develop the key components which allow us to define continuous
normalizing flows that are aware of the underlying Riemannian geometry:
flow, likelihood, and vector field. % architecture.

\paragraph{Vector flows}
Flows in conventional \acrlongpl{NF} are defined as smooth mappings
$\phi: \R^{d} \to \R^{d}$ which transform a base distribution $z \sim P_{0}$
into a complex distribution $P_{\theta}$. For \acrlongpl{NF} to be well-behaved
and convenient to work with, the flow is required to be \emph{bijective} and
\emph{differentiable} which introduces significant structural constraints on
$\phi$. Continuous normalizing flows overcome this issue by defining the flow $\phi: \R^{d} \times \R \to \R^{d}$ generated by an
\acrlong{ODE}, allowing for unrestricted neural network
architectures. Here we show how \emph{vector fields} can be used to define
similar flows $\phi: \M \times \R \to \M$ on general \emph{Riemannian manifolds}.

Consider the temporal evolution of a particle $\z(t)$ lying on a d-dimensional
manifold $\M$, whose velocity is given by a \emph{vector field} $\f_{\theta}(\z(t), t)$.
Intuitively, $\f_{\theta}(\z(t), t)$ indicates the direction and speed along
which the particle is moving on the \emph{manifold's surface}. Classic examples for
such vector fields include weathercocks giving wind direction and compasses
pointing toward the magnetic north pole of the earth.
Formally, let $\T_{\z}\M$ denote the \emph{tangent space} at $\z$ and
$\T\M = \cap_{\z \in \M} ~\T_{\z}\M$ the associated \emph{tangent bundle}.
Furthermore, let $\f_{\theta}: \M \times \R \mapsto \T\M$ denote a vector field
on $\M$. The particle's time-evolution according to $\f_{\theta}$ is then given by the following \gls{ODE}
\begin{align} \label{eq:ode}
	\frac{d\z(t)}{dt} = \f_{\theta}(\z(t), t).
\end{align}
To transform a base distribution using this vector field, we are then interested
in a particle's position after time $t$.
When starting at an initial position $\z(0)=\z_0$,
the \emph{flow} operator $\phi: \M \times \R \mapsto \M$ gives the particle's position at any time $t$ as
$z(t) = \phi(\z_0, t)$.
Leveraging the fundamental theorem of flows \citep{lee2003Introduction}, we can show
that under mild conditions, this flow
is \emph{bijective} and \emph{differentiable}.
We write $C^1$ for the set of differentiable functions whose derivative are continuous.
\begin{proposition}[Vector flows]\label{prop:vff}
Let $\M$ be a \emph{smooth} complete manifold.
Furthermore, let $\f_{\theta}$ be a $C^1$-bounded time-dependent \emph{vector field}.
Then there exists a \emph{global flow} $\phi: \M \times \R \mapsto \M$
such that for each $t \in \R$,
the map $\phi(\cdot, t): \M \mapsto \M$ is a $C^1$-diffeomorphism (i.e.\ $C^1$ bijection with $C^1$ inverse).
\end{proposition}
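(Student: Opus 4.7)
The plan is to invoke the fundamental theorem of flows (Lee 2003, Theorem~9.12) for local existence and then globalize using the two hypotheses: completeness of $\M$ and $C^1$-boundedness of $\f_\theta$. First, since $\f_\theta$ is $C^1$ in $(\z,t)$, the classical theorem applied to the non-autonomous field (viewed as an autonomous field on $\M\times\R$) yields, for every initial condition $(\z_0,t_0)\in\M\times\R$, a unique maximal integral curve $\gamma_{\z_0,t_0}:J_{\z_0,t_0}\to\M$ solving $\dot\z(t)=\f_\theta(\z(t),t)$ with $\z(t_0)=\z_0$, on an open interval $J_{\z_0,t_0}\subset\R$ containing $t_0$. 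Moreover, the dependence on initial data is $C^1$ on the (open) domain where the curves exist. Without loss of generality I fix $t_0=0$ and write $J_{\z_0}$.

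The core step is to show $J_{\z_0}=\R$ for every $\z_0\in\M$. Suppose $J_{\z_0}$ has a finite right endpoint $T^+<\infty$. By the escape lemma for maximal integral curves, $\gamma_{\z_0}(t)$ must eventually leave every compact subset of $\M$ as $t\to T^+$. However, $C^1$-boundedness gives a uniform bound $\|\f_\theta(\z,t)\|_g\le M$, hence for any $t\in[0,T^+)$
\begin{equation*}
d_g\bigl(\z_0,\gamma_{\z_0}(t)\bigr)\;\le\;\int_0^t\|\dot\gamma_{\z_0}(s)\|_g\,ds\;\le\;Mt\;\le\;MT^+,
\end{equation*}
so the trajectory stays in the closed metric ball $\bar B_g(\z_0,MT^+)$. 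Since $\M$ is complete, Hopf--Rinow ensures that closed and bounded sets are compact, contradicting the escape lemma. The same argument on the left gives $J_{\z_0}=\R$.

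With global-in-time existence in hand, I define $\phi(\z_0,t):=\gamma_{\z_0}(t)$, obtaining a map $\phi:\M\times\R\to\M$. Uniqueness of integral curves yields the one-parameter group identity $\phi(\cdot,s+t)=\phi(\phi(\cdot,s),t)$, so in particular $\phi(\cdot,-t)$ is a two-sided inverse of $\phi(\cdot,t)$, giving bijectivity. Joint $C^1$ regularity of $\phi$ follows from $C^1$ dependence on initial conditions in the local theorem, globalized via a standard compactness-and-patching argument over the (now all of $\R$) time interval; applied to $\phi(\cdot,t)$ and its inverse $\phi(\cdot,-t)$, this delivers the $C^1$-diffeomorphism claim.

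The main obstacle is the globalization step in the second paragraph: one must upgrade the purely local guarantee of the fundamental theorem of flows into existence of integral curves for all $t\in\R$, and this is exactly where both hypotheses are essential. $C^1$-boundedness alone is insufficient (a trajectory on an incomplete manifold could reach a ``missing'' point in finite time), and completeness alone is insufficient (unbounded fields on $\R^d$ can blow up in finite time). Once both are combined through the distance estimate above and Hopf--Rinow, the remaining pieces -- group law, bijectivity, and $C^1$ regularity -- are largely mechanical consequences of uniqueness and smooth dependence on initial data.
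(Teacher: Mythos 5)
Your globalization argument is correct and rests on exactly the same two ingredients as the paper's proof of its Proposition~\ref{prop:global_flow}: the uniform bound on $\|\f_\theta\|$ controls the Riemannian length of any trajectory segment, and geodesic completeness combined with Hopf--Rinow converts that control into the compactness/completeness needed to preclude finite-time blow-up. The only cosmetic difference is that you invoke the escape lemma as a black box, whereas the paper re-derives its contrapositive from scratch: it shows the integral curve is Lipschitz (via the same $\int_s^t\|\f\|\le c|t-s|$ estimate), so it maps Cauchy time-sequences to Cauchy sequences in $(\M,d_\M)$, which converge by completeness, giving a continuous extension of $\gamma$ to the putative finite endpoint and hence contradicting maximality. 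Both routes are standard and equivalent.

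There is, however, a genuine flaw in your bijectivity step. For a \emph{time-dependent} vector field the maps $\phi(\cdot,t)$ do not form a one-parameter group: the identity $\phi(\cdot,s+t)=\phi(\phi(\cdot,s),t)$ fails in general, and $\phi(\cdot,-t)$ is not the inverse of $\phi(\cdot,t)$. A one-line counterexample on $\M=\R$ is $\f_\theta(\z,t)=t$, where $\phi(\z,t)=\z+t^2/2$, so $\phi(\phi(\z,t),-t)=\z+t^2\neq\z$. What uniqueness actually gives is the two-parameter cocycle law $\phi_{t,u}\circ\phi_{s,t}=\phi_{s,u}$ for the flow from time $s$ to time $u$; the inverse of $\phi(\cdot,t)=\phi_{0,t}$ is $\phi_{t,0}$, i.e.\ the flow run backwards from time $t$ to time $0$ (equivalently, the time-$t$ flow of the reversed field $(\z,s)\mapsto-\f_\theta(\z,t-s)$). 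Note that the autonomization trick you mention at the outset, lifting to $\M\times\R$, does yield a genuine one-parameter group on $\M\times\R$ and then projects precisely to this cocycle law --- but not to the group law you wrote. The conclusion (bijectivity and $C^1$-diffeomorphism) is still true via this corrected route; the paper itself does not spell this out, recording the diffeomorphism property only as an unproved corollary after establishing global existence.
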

\begin{proof}
% \vspace{-.8em}
See \cref{sec:global_flow} for a detailed derivation.
% \vspace{-.8em}
\end{proof}
Note that scaling the vector field as $f^\alpha_\theta \triangleq \alpha f_\theta$ results in a time-scaled flow $\phi^\alpha(\z, t) = \phi(\z, \alpha t)$.
The integration duration $t$ is therefore arbitrary.
Without loss of generality we set $t=1$ and write $\phi \triangleq \phi(\cdot, 1)$.
Concerning the evaluation of the flow $\phi$, it generally does no accept a closed-form solution and thus requires to be approximated numerically. To this extent we rely on an explicit and adaptive \gls{RK} integrator of order 4 \citep{dormand1980family}.
However, standard integrators used in \glspl{CNF} generally do not preserve manifold constraints \citep{hairer2006Geometric} .
To overcome this issue we rely on a \emph{projective} solver \citep{hairer2011Solving}.
This solver works by conveniently solving the \gls{ODE} in the ambient Cartesian coordinates and projecting each step onto the manifold.
Projections onto $\S^d$ are computationally cheap since they amount to $l^2$ norm divisions.
No projection is required for the Poincar\'e ball.
\paragraph{Likelihood}
Having a flow at hand, we are now interested in evaluating the likelihood of our
\emph{pushforward} model $P_\theta=\phi_\sharp P_0$. Here, the \emph{pushforward} operator $\sharp$ indicates that
one obtains samples $\z \sim \phi_\sharp P_0$ as $\z=\phi(\z_0)$ with
$z_0 \sim P_0$.
For this purpose, we derive in the following the change in density in terms of the geometry of
the manifold and show how to efficiently estimate the likelihood.

\subparagraph{Change in density}
In \acrlongpl{NF}, we can compute the likelihood of a sample via the change
in density from the base distribution to the pushforward.
Applying the chain rule we get% yields
\begin{align} \label{eq:discrete_change_of_var}
	\log p_{\theta}(\z) - \log p_0(\z_0)
	= \log  \left| \det \frac{\partial \phi^{-1}(z)} {\partial \z} \right|
	= - \log \left| \det \frac{\partial \phi(z_0)}{\partial \z} \right|.
\end{align}
In general, computing the Jacobian's determinant of the flow is challenging since it requires $d$ reverse-mode automatic differentiations to obtain the full Jacobian matrix, and $\mathcal{O}(d^3)$ operations to compute its determinant.
\Glspl{CNF} side step direct computation of the determinant by leveraging the
time-continuity of the flow and re-expressing \cref{eq:discrete_change_of_var}
as the integral of the \emph{instantaneous change} in log density
$\int_0^t \frac{\partial \log p_{\theta}(\z(t))}{\partial t} ~dt$.
However, standard CNFs
make an implicit Euclidean assumption to compute this quantity which
is violated for general Riemannian manifolds. To overcome this issue we express
the instantaneous change in log-density in terms of the \emph{Riemannian metric}.
In particular, let  $G(\z)$ denote the \emph{matrix representation of the Riemannian metric} for a given manifold $\M$, then $G(\z)$ endows tangent spaces $\T_{\z} \M$ with an inner product.
For instance in the Poincar\'e ball $\B^d$, it holds that
$G(\z)= (2~/~1 + K\left\|\z\right\|^2) ~I_d$, while in Euclidean space $\R^d$ we have $G(\z) = I_{d}$, where $I_{d}$ denotes the identity matrix.
Using the Liouville equation, we can then show that the instantaneous change in
variable is defined as follows.
\begin{proposition}[Instantaneous change of variables] \label{prop:instant_change}
Let $\z(t)$ be a continuous manifold-valued random variable given in local coordinates, which is described by the \gls{ODE} from \cref{eq:ode}
 with probability density $p_{\theta}(\z(t))$. % dependent on time.
The change in log-probability then also follows a differential equation
 given by
\begin{align}
\frac{\partial \log p_{\theta}(\z(t))}{\partial t} &= - \diver (\f_{\theta}(\z(t), t))
= - {|G(\z(t))|}^{-\frac{1}{2}}  ~\tr \left( \frac{\partial \sqrt{|G(\z(t))|} \f_{\theta}(\z(t), t)}{\partial \z} \right) \label{eq:continuous_change_of_var} \\
&= - \tr \left( \frac{\partial \f_{\theta}(\z(t), t)}{\partial \z} \right)
- {|G(\z(t))|}^{-\frac{1}{2}} ~\Bigl< \f_\theta(\z(t), t), \frac{\partial}{\partial \z} \sqrt{|G(\z(t))|}~ \Bigr>. \label{eq:continuous_change_of_var2}
\end{align}
\end{proposition}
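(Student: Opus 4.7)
The plan is to derive the statement by combining the Riemannian continuity equation with the intrinsic coordinate expression for the divergence. The key observation is that the ODE $\dot{\z}(t)=\f_\theta(\z(t),t)$ generates a transport flow, and any density $p_\theta(\cdot,t)$ that is pushed along this flow must satisfy the Liouville (continuity) equation
\begin{equation*}
\partial_t p_\theta + \diver(p_\theta \, \f_\theta) = 0,
\end{equation*}
where $\diver$ is the Riemannian divergence on $\M$. I would take this as the starting point, citing it as the standard conservation-of-mass statement for manifold-valued transport.

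Next, I would compute the material (Lagrangian) derivative along a trajectory. Writing out
\begin{equation*}
\frac{d}{dt} p_\theta(\z(t),t) = \partial_t p_\theta + \langle \nabla p_\theta, \dot{\z}(t) \rangle = \partial_t p_\theta + \langle \nabla p_\theta, \f_\theta \rangle,
\end{equation*}
and using the Leibniz rule for divergence $\diver(p_\theta \f_\theta) = p_\theta \diver(\f_\theta) + \langle \nabla p_\theta, \f_\theta \rangle$, the two $\langle \nabla p_\theta, \f_\theta \rangle$ terms cancel and one obtains $\frac{d}{dt} p_\theta(\z(t),t) = - p_\theta \diver(\f_\theta)$. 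Dividing by $p_\theta$ yields the first equality $\partial_t \log p_\theta(\z(t)) = -\diver(\f_\theta(\z(t),t))$.

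To produce the explicit local-coordinate form, I would invoke the classical identity for the Riemannian divergence in a chart,
\begin{equation*}
\diver(\f_\theta) = |G(\z)|^{-1/2} \, \partial_i \!\left( \sqrt{|G(\z)|} \, f_\theta^{\,i}(\z,t) \right),
\end{equation*}
which is a standard fact from Riemannian geometry (derivable from the Hodge star / volume-form definition $\mathcal{L}_{\f}\, d\vol_G = \diver(\f)\, d\vol_G$ together with $d\vol_G = \sqrt{|G|}\, d\z$). Rewriting the summation as a trace gives \cref{eq:continuous_change_of_var}. For the second decomposition, I would simply apply the product rule inside the derivative, $\partial_i (\sqrt{|G|}\, f_\theta^i) = \sqrt{|G|}\, \partial_i f_\theta^i + f_\theta^i\, \partial_i \sqrt{|G|}$, split the trace accordingly, and recognize the second term as the inner product $\langle \f_\theta, \partial_{\z} \sqrt{|G|}\rangle$, yielding \cref{eq:continuous_change_of_var2}.

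The only step that requires some care is the justification of the continuity equation itself for a smooth vector field on a general manifold; everything downstream is a routine application of the product rule and the coordinate formula for $\diver$. I expect the main obstacle to be presenting this derivation cleanly in local coordinates without invoking heavier machinery (Lie derivatives, differential forms), so I would favor a concise derivation via conservation of mass on an arbitrary Borel set $A\subset \M$, differentiating $\int_{\phi_t(A)} p_\theta(\z,t)\, d\vol_G(\z)$ in $t$ and using the divergence theorem to identify the integrand pointwise.
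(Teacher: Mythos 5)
Your proposal follows essentially the same route as the paper's proof: start from the Liouville/continuity equation, expand $\diver(p\f_\theta)$ via the product rule, take the material derivative along the trajectory so the $\langle \nabla p, \f_\theta\rangle$ terms cancel, divide by $p$ to get the log form, and then substitute the standard local-coordinate expression $\diver(\f_\theta) = |G|^{-1/2}\,\partial_i(\sqrt{|G|}\,f_\theta^{\,i})$ and split it by the product rule to obtain \cref{eq:continuous_change_of_var2}. The only addition you offer is a sketch of how to justify the continuity equation itself (differentiating $\int_{\phi_t(A)} p\,d\vol_G$ over a Borel set $A$ and applying the divergence theorem), whereas the paper simply cites it; both treatments are otherwise identical.
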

\begin{proof}
% \vspace{-0.8em}
For a detailed derivation of \cref{eq:continuous_change_of_var} see \cref{sec:proof}.
% \vspace{-0.8em}
\end{proof}
Note that in the Euclidean setting $\sqrt{|G(\z)|}=1$ thus the second term of \cref{eq:continuous_change_of_var2} vanishes and we recover the formula from \cite{grathwohl2018FFJORD,chen2019Neural}.
\subparagraph{Estimating the divergence}
Even though the determinant of \cref{eq:discrete_change_of_var} has been replaced in \cref{eq:continuous_change_of_var} by a trace operator with lower computational complexity, we still need to compute the full Jacobian matrix of $f_\theta$.
Similarly to \cite{grathwohl2018FFJORD,salman2018Deep}, we
make use of Hutchinson’s trace estimator to
compute the Jacobian efficiently. In particular,
\citet{hutchinson1990stochastic} showed that
% \begin{align*}
$
\tr(A) = \mathbb{E}_{p(\bm{\epsilon})}[\bm{\epsilon}^{\intercal} A \bm{\epsilon}]
$
% \end{align*}
with $p(\bm{\epsilon})$ being a $d$-dimensional random vector such that
$\mathbb{E}[\bm{\epsilon}] = 0$ and $\text{Cov}(\bm{\epsilon}) = I_d$.
Leveraging this trace estimator to approximate the divergence in \cref{eq:continuous_change_of_var} yields
\begin{align} \label{eq:stochastic_div}
  \diver (\f_{\theta}(\z(t), t)) = {|G(\z(t))|}^{-\frac{1}{2}} 
  \mathbb{E}_{p(\bm{\epsilon})} \left[ \bm{\epsilon}^\intercal \frac{\partial \sqrt{|G(\z(t))|} \f_{\theta}(\z(t), t)}{\partial \z} \bm{\epsilon} \right].
\end{align}
We note that the variance of this estimator can potentially be high since it scales with the inverse of the determinant term $\sqrt{|G(\z(t))|}$ (see \cref{sec:regularization_frobenius}).
By integrating \cref{eq:continuous_change_of_var} over time with the stochastic divergence estimator from \cref{eq:stochastic_div}, we get the following total change in log-density between the manifold-valued random variables $\z$ and $\z_0$
\begin{align} \label{eq:log_like}
\hspace{-0.2em}
\log\left( \frac{p_{\theta}(\z)}{p_{0}(\z_0)} \right)
=  - \hspace{-0.2em} \int_{0}^{1} \hspace{-0.2em} \diver (\f_{\theta}(\z(t), t)) ~dt
 = - \mathbb{E}_{p(\bm{\epsilon})} \left[  \int_{0}^{1} \hspace{-0.2em} {|G(\z(t))|}^{-\frac{1}{2}}   \bm{\epsilon}^\intercal \frac{\partial \sqrt{|G(\z(t))|} \f_{\theta}(\z(t), t)}{\partial \z} \bm{\epsilon}~dt \right].
\end{align}
It can be seen that \cref{eq:log_like} accounts again for the underlying
geometry through the metric $G(\z(t))$. \Cref{table:geometry} lists
closed-form solutions of its determinant for constant curvature manifolds.
Furthermore, the vector-Jacobian product can be
computed through backward auto-differentiation with linear complexity, avoiding
the quadratic cost of computing the full Jacobian matrix. Additionally, the integral is
approximated via the discretization of the flow returned by the solver.

\subparagraph{Choice of base distribution $P_0$} The closer the initial base
distribution is to the target distribution, the easier the learning task
should be. However, it is challenging in practice to incorporate such prior
knowledge. We consequently use a uniform distribution on $\S^d$ since it is
the most "uncertain" distribution.
For the Poincar\'e ball $\B^d$, we rely on a standard wrapped Gaussian distribution $\mathcal{N}^{\text{W}}$ \citep{nagano2019Wrapped,mathieu2019Continuous} because it is convenient to work with. 
\paragraph{Vector field} \label{sec:vector_field}
Finally, we discuss the form of the \emph{vector field}
${\f_{\theta}: \M \times \R \to \T\M}$ which generates the flow $\phi$ used to
pushforward samples. We parametrize $\f_{\theta}$ via a feed-forward neural network
which takes as input manifold-valued particles, and outputs their velocities. % (or \emph{tangent vectors}).
The architecture of the vector field has a direct impact on the expressiveness of the distribution and is thus crucially important.
In order to take into account these geometrical properties we make use of specific input and output layers that we describe below.
The rest of the architecture is based on a \acrlong{MLP}.
\subparagraph{Input layer}
To inform the neural network about the geometry of the manifold $\M$, we use as first layer a \emph{geodesic distance layer}
\citep{ganea2018Hyperbolic,mathieu2019Continuous} which generalizes \emph{linear
  layers} to manifolds, and can be seen as computing distances to \emph{decision
  boundaries} on $\M$.
These boundaries are parametrized by \emph{geodesic hyperplanes} $H_{\w}$, and the associated \emph{neurons} $h_{\w}(\z) \propto d_{\M}(\z,H_{\w})$,
with $d_{\M}$ being the geodesic distance.
Horizontally stacking several of these \emph{neurons} makes a \emph{geodesic distance layer}.
We refer to \cref{sec:geodesic_layer} for more details.
\subparagraph{Output layer}
To constrain the neural net to $\T\M$, we output vectors in $\R^{d+1}$ when ${\M=\S^d}$, before projecting them to the tangent space
i.e.\ $\f_{\theta}(\z) = \text{proj}_{\T_{\z}\M} ~\texttt{neural\_net}(\z)$.
This is not necessary in $\B^d$ since the ambient space is of equal dimension.
Yet, velocities scale as
${\|\f_{\theta}(\z)\|_{\z} = {|G(\z)|}^{1/2} ~\|\f_{\theta}(\z)\|_{2}}$, hence
we scale the $\texttt{neural\_net}$ by ${|G(\z)|}^{-1/2}$
s.t.\ $\|\f_{\theta}(\z)\|_{\z} = \|\texttt{neural\_net}(\z)\|_{2}$.

\subparagraph{Regularity}
For the flow to be bijective, the vector field $\f_{\theta}$ is required to be $C^1$ and bounded (cf \cref{prop:vff}).
The boundness and smoothness conditions can be satisfied by relying on bounded smooth non-linearities in $\f_{\theta}$ such as tanh, along with bounded weight and bias at the last layer.
\paragraph{Training}
In density estimation and inference tasks, one aims to learn a model $P_\theta$ with parameters $\theta$
by minimising a \emph{divergence} $\mathcal{L}(\theta) = D(P_{\mathcal{D}} ~||~ P_{\theta})$ w.r.t. a target distribution $P_{\mathcal{D}}$.
In our case, the parameters $\theta$ refer to the parameters of the vector field $\f_{\theta}$.
We minimize the loss $\mathcal{L}(\theta)$ using first-order stochastic optimization, which requires \acrlong{MC} estimates of loss gradients $\nabla_\theta ~\mathcal{L}(\theta)$.
We back-propagate gradients through the explicit solver with $O(1/h)$ memory cost, $h$ being the step size.
When the loss $\mathcal{L}(\theta)$ is expressed as an expectation over the model $P_{\theta}$, as in the \emph{reverse} \gls{KL} divergence, we rely on the \emph{reparametrization trick} \citep{kingma2014AutoEncoding,rezende2014Stochastic}.
In our experiments we will consider both the negative log-likelihood and reverse \gls{KL} objectives
\begin{align}
  \mathcal{L}^{\text{Like}}(\theta) = - \mathbb{E}_{\z \sim P_\mathcal{D}} \left[ \log p_\theta(\z) \right]
  \ \text{and} \
  \mathcal{L}^{\text{KL}}(\theta) = \KL{P_\theta}{P_\mathcal{D}} = \mathbb{E}_{\z \sim P_\theta} \left[ \log p_\theta(\z) - \log p_\mathcal{D}(\z) \right].
\end{align}
Additionally, regularization terms can be added in the hope of improving
training and generalization. See \cref{sec:regularization} for a
discussion and connections to the dynamical formulation of optimal transport.

%%% Local Variables:
%%% mode: latex
%%% TeX-master: "../geoflow"
%%% End:

% !TEX root =  ../geoflow.tex

\section{Related work} \label{sec:related_work} \label{sec:manifold_methods}
Here we discuss previous work that introduced \acrlongpl{NF} on manifolds.
For clarity we split these into \emph{projected} vs \emph{Riemannian} methods which we describe below.
\paragraph{Projected methods} \label{sec:projected_methods}
These methods consist in parametrizing a normalizing flow on $\R^d$ and then pushing-forward the resulting distribution along an invertible map $\psi:\R^d \rightarrow \M$.
Yet, the existence of such an \emph{invertible} map is equivalent to $\M$ being homeomorphic to $\R^d$ (e.g.\ being "flat"), hence limiting  the scope of that approach.
Moreover there is no principled way to choose such a map, and different choices lead to different numerical or computational challenges which we discuss below.
\subparagraph{Exponential map}
The first generic \emph{projected} map that comes to mind in this setting is the exponential map $\exp_{\bm{\mu}}: T_{\bm{\mu}}\M \cong \R^d \to \M$, which parameterizes geodesics starting from $\bm{\mu}$ with velocity $\v \in T_{\bm{\mu}}\M$.
This leads to so called \emph{wrapped} distributions $P^{\text{W}}_{\theta} = \exp_{\bm{\mu}\sharp}  P$, with $P$ a probability measure on $\R^d$.
This approach has been taken by \cite{falorsi2019Reparameterizing} to parametrize probability distributions on Lie groups. 
Yet, in compact manifolds -- such as spheres or the $\text{SO}(3)$ group -- 
computing the density of \emph{wrapped} distributions requires an infinite summation, which in practice needs to be truncated.
This is not the case however on hyperbolic spaces (like the Poincar\'e ball)
since the exponential map is bijective on these manifolds. % on the full tangent space.
This approach has been proposed in \cite{bose2020Latent} where they extend Real-NVP \citep{dinh2017Density} to the hyperboloid model of hyperbolic geometry.
In addition to this \emph{wrapped} Real-NVP, they also introduced a hybrid coupling model which is empirically shown to be more expressive.
We note however that the exponential map is believed to be "badly behaved" away from the origin \citep{dooley1993Harmonic,al-mohy2010New}.%not only from a geometrical perspective \citep{dooley1993Harmonic} but also in terms of numerical computations 
% \citep{al-mohy2010New}.
%
\subparagraph{Stereographic map}
Alternatively to the exponential map, \citet{gemici2016Normalizing} proposed to parametrize probability distributions on $\S^d$ via the \emph{stereographic projection} defined as
${\rho(\z) = {\z_{2:d}}~/~ ({1 + ~z_1})}$
with \emph{projection point} $- \{\bm{\mu}_0\} = (-1, 0, \dots, 0)$.
\citeauthor{gemici2016Normalizing} then push a probability measure $P$ defined on $\R^d$ along the inverse of the \emph{stereographic} map $\rho$, yielding $P^{\text{S}}_{\theta} = \rho^{-1}_\sharp P$.
However, the stereographic map $\rho$ is not injective, and projects $-\bm{\mu}_0$ to $\infty$.
This implies that spherical points close to the projection point $- \{\bm{\mu}_0\}$ are mapped far away from the origin of the plane.
Modelling probability distributions with mass close to $\{- \bm{\mu}_0\}$ may consequently be numerically challenging since the norm of the Euclidean flow would explode.
Similarly, \cite{rezende2020Normalizing} introduced flows on hyperspheres and tori by using the inverse tangent function.
Although this method is empirically shown to perform well, it similarly suffers from numerical instabilities near singularity points.
\paragraph{Riemannian methods}
In contrast to \emph{projected} methods which rely on mapping the manifold to a Euclidean space, \emph{Riemannian} methods do not.
As a consequence they side-step any artefact or numerical instability arising from the manifold's projection.
Early work \citep{sei2011Jacobian} proposed transformations along geodesics on the hypersphere by evaluating the exponential map at the gradient of a scalar manifold function.
% Since this gradient function defines a vector field, that method can theoretically be seen as a discretized version of our model.
Recently, \cite{rezende2020Normalizing} introduced ad-hoc \emph{discrete} \emph{Riemannian} flows for hyperspheres and tori based on M\"{o}bius transformations and spherical splines.
We contribute to this line of work by introducing \emph{continuous} flows on
general Riemannian manifolds.
In contrast to \emph{discrete} flows \citep[e.g.][]{bose2020Latent,rezende2020Normalizing}, \emph{time-continuous} flows as ours alleviate strong structural constraints on the flow by implicitly parametrizing it as the solution to an \gls{ODE} \citep{grathwohl2018FFJORD}.
Additionally, recent and concurrent work \citep{lou2020Neural,falorsi2020Neural} proposed to extend neural ODEs to smooth manifolds.
%
%%% Local Variables:
%%% mode: latex
%%% TeX-master: "../geoflow"
%%% End:

% !TEX root =  ../geoflow.tex

\begin{figure}[t]
% \vspace{-1em}
\centering
\begin{minipage}{0.52\textwidth}
  \centering
  % \hspace{-2em}
  \begin{subfigure}{.25\textwidth}
    \includegraphics[width=\linewidth, trim={6em 0 23em 0},clip]{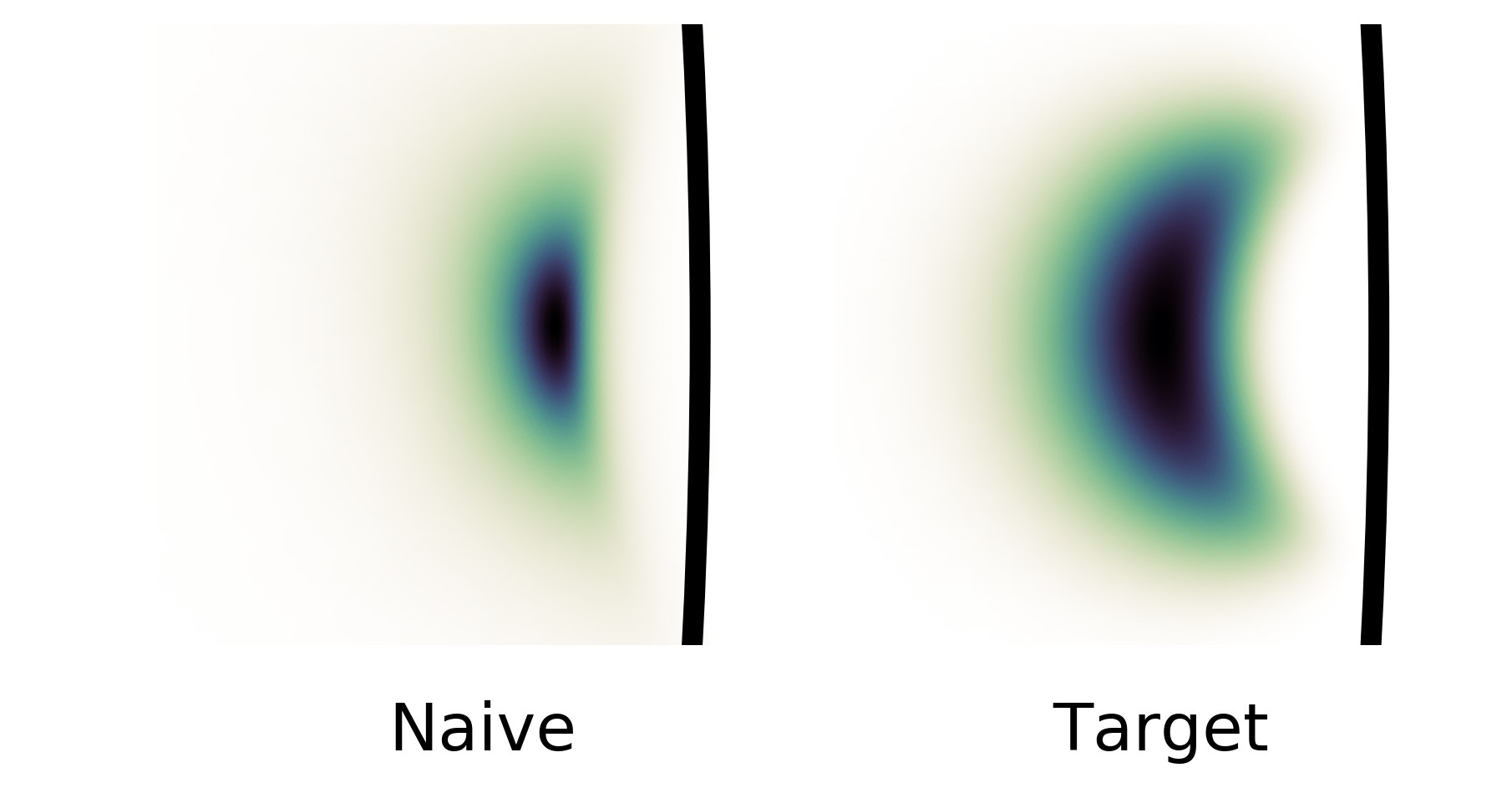}
  \end{subfigure}\hfil
  \begin{subfigure}{.25\textwidth}
    \includegraphics[width=\linewidth, trim={6em 0 23em 0},clip]{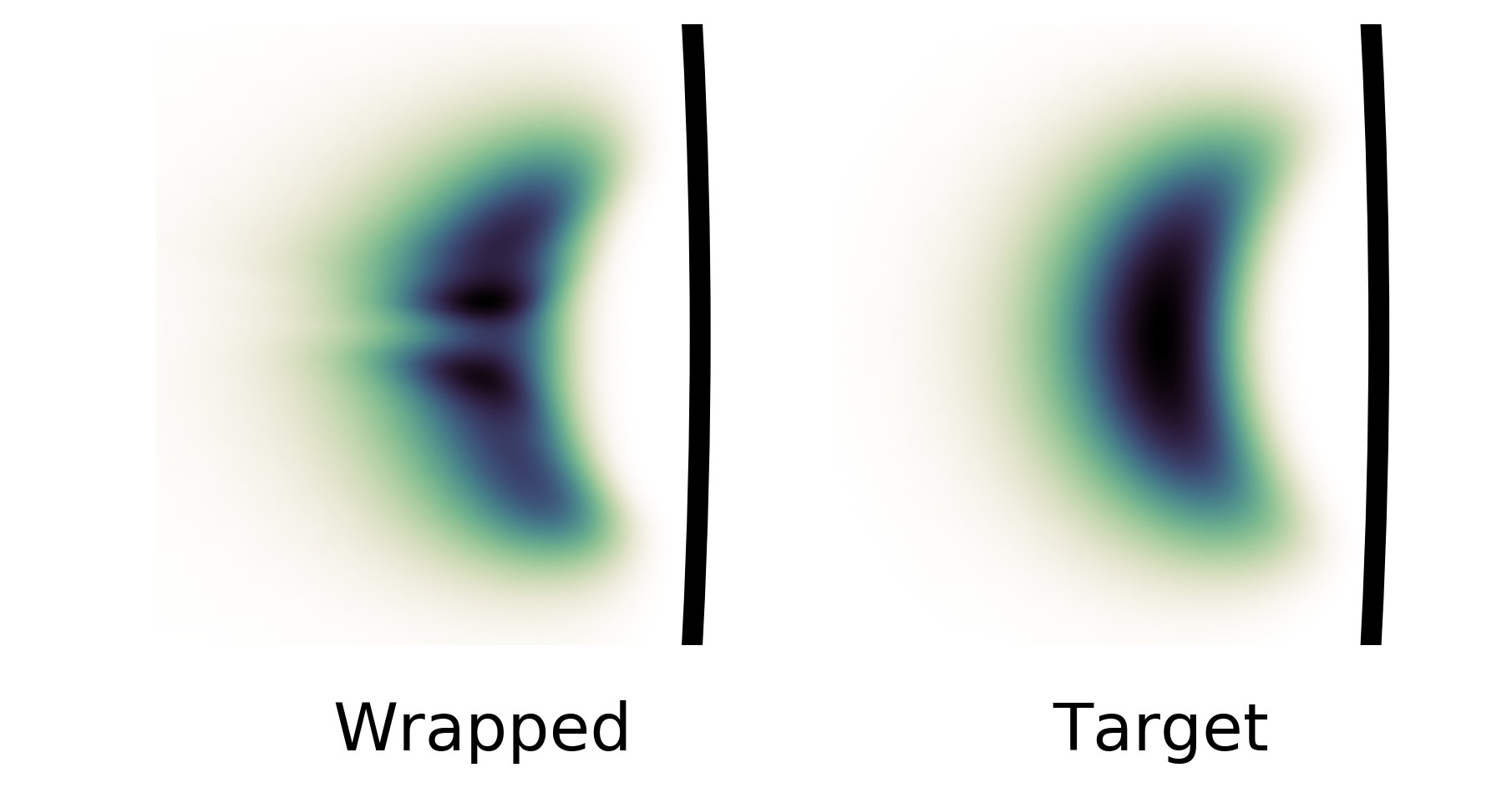}
  \end{subfigure}\hfil
  \begin{subfigure}{.25\textwidth}
    \includegraphics[width=\linewidth, trim={6em 0 23em 0},clip]{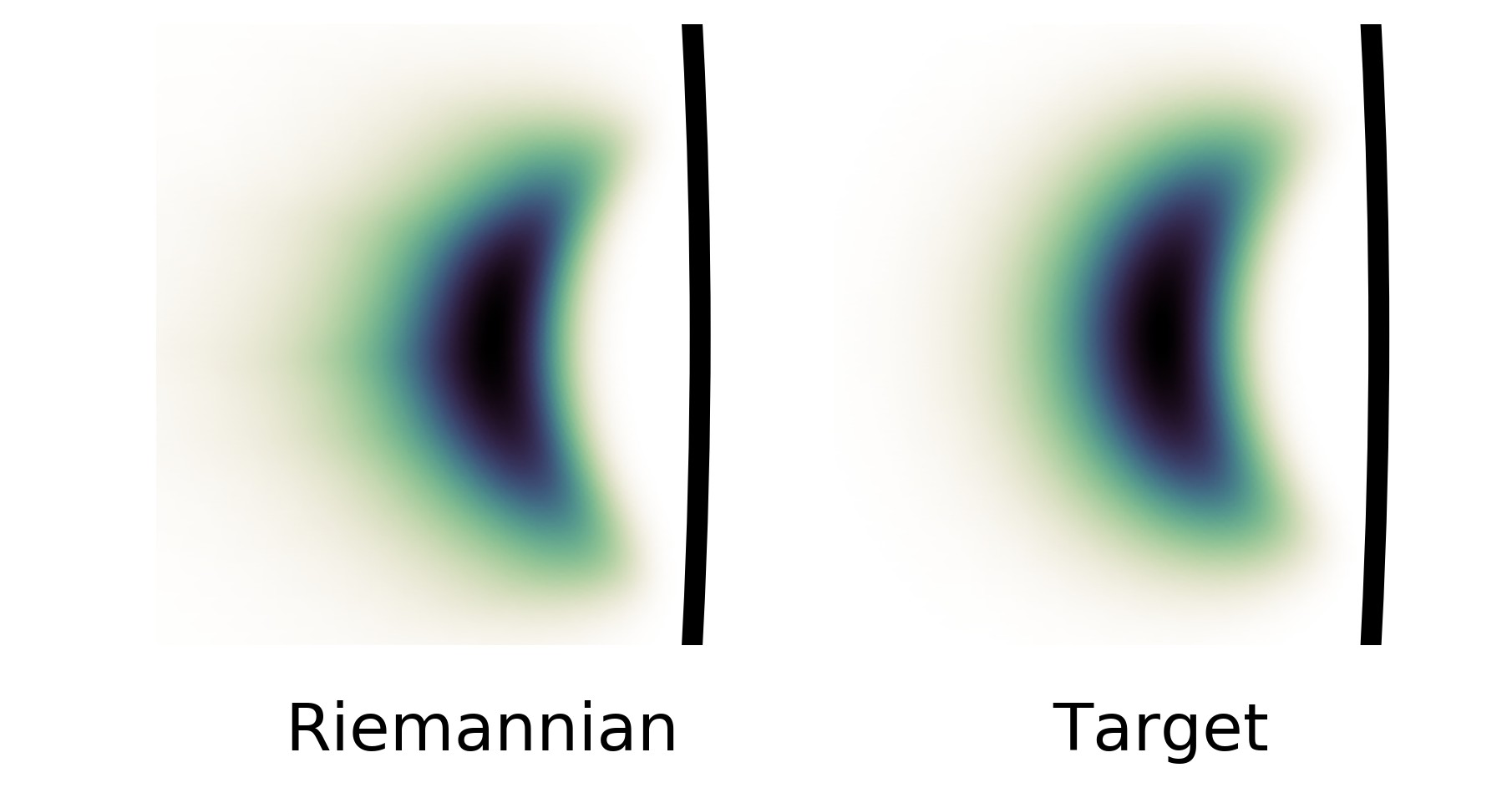}
  \end{subfigure}\hfil
  \begin{subfigure}{.25\textwidth}
    \includegraphics[width=\linewidth, trim={26em 0 3em 0},clip]{{disk/densities/density_like_2_riem}.jpg}
  \end{subfigure}
    \caption{
    Probability densities on $\B^2$. Models have been trained by \acrlong{maxlike} to fit $\mathcal{N}^{\text{W}}(\exp_{\bm{0}}(2~\partial x), \Sigma)$. The black semi-circle indicates the disk's border.
    The best run out of twelve trainings is shown for each model.
    % Best runs are shown.
    }
    \label{fig:poicare_disc_density}
  \end{minipage}\hfill
  \begin{minipage}{0.45\textwidth}
    \centering
    \includegraphics[width=0.95\textwidth]{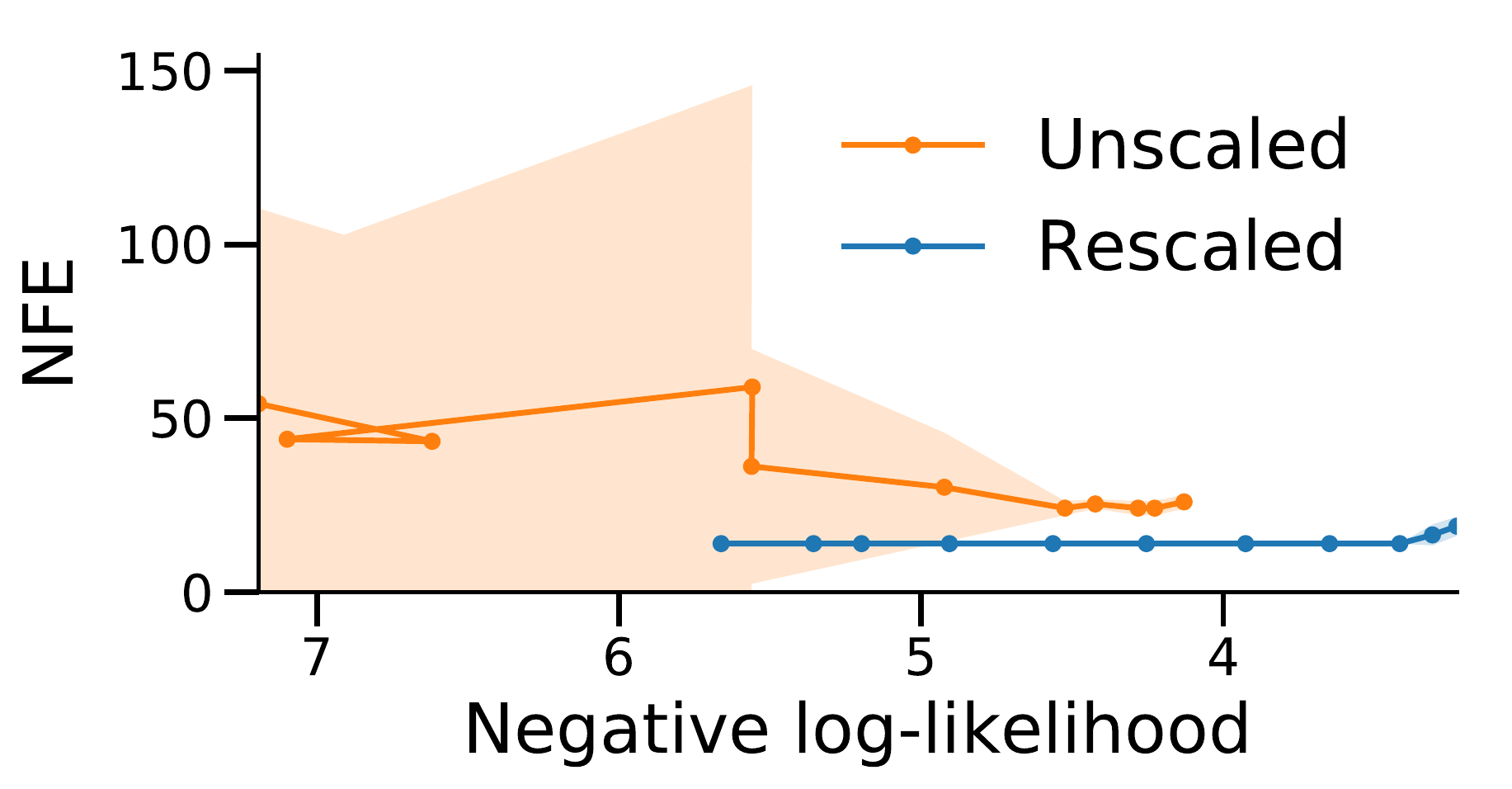}
    \caption{
      Ablation study of the vector field architecture for the \emph{Riemannian} model.
      Models have been trained to fit a $\mathcal{N}^{\text{W}}(\exp_{\bm{0}}(\partial x), \Sigma)$.
      }
      \label{fig:disk_ablation_scaling}
  \end{minipage}
% \vspace{-1.em}
\end{figure}
\begin{figure}[t]
% \vspace{-1em}
\centering
\begin{subfigure}{1.\textwidth}
  \includegraphics[width=\linewidth]{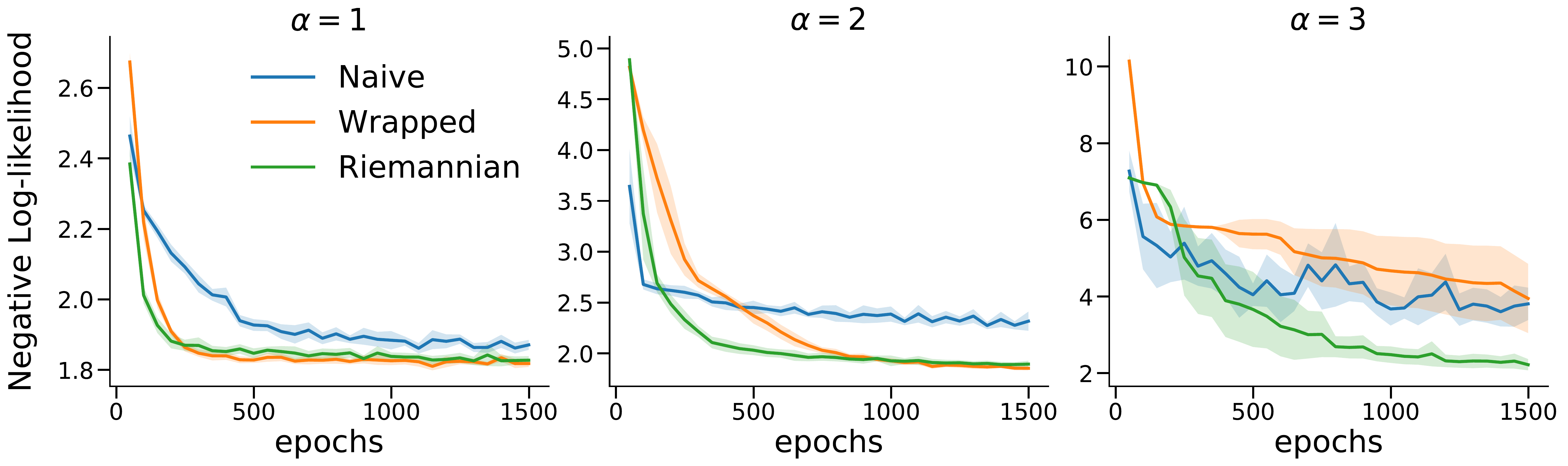}
\end{subfigure}
  \caption{
   Negative Log-likelihood of \glspl{CNF} trained to fit a $\mathcal{N}^{\text{W}}(\exp_{\bm{0}}(\alpha ~\partial x), \Sigma)$ target on $\B^2$.
  }
  \label{fig:poicare_convergence}
% % \vspace{-1.em}
\end{figure}

\section{Experimental results} \label{sec:experiments}
We evaluate the empirical performance of the above-mentioned models on
hyperbolic and spherical geometry.
We will first discuss experiments on two synthetic datasets where we highlight specific pathologies of the naive and projected methods via unimodal distributions at the point (or the limit) of the pathology.
This removes additional modelling artefacts that would be introduced through more complex distributions and allows to demonstrate advantages of our approach on the respective manifolds.
%We first demonstrate specific pathologies of the naive and projected on synthetic datasets.
%\todo{Is this sentence weird?}
%We believe this is most clearly demonstrated by a unimodal distribution at the point (or the limit) of the pathology as it  removes additional modelling artefacts that are introduced through more complex distributions.
We further show that these advantages also translate to
substantial gains on highly multi-modal real world datasets.

For all \emph{projected} models (e.g.\ stereographic and wrapped cf
\cref{sec:projected_methods}), the vector field's architecture is chosen to be
a \acrlong{MLP} as in \cite{grathwohl2018FFJORD}, whilst the architecture described in \cref{sec:vector_field} is used for our \emph{Riemannian} (continuous normalizing flow) model.
For fair comparisons, we also parametrize \emph{projected} models with a \gls{CNF}.
Also, all models are chosen to have approximately the same number of parameters.
All models were implemented in PyTorch~\citep{paszke2017automatic} and trained by stochastic optimization with Adam~\citep{kingma2015Adam}.
All $95\%$ confidence intervals are computed over 12 runs. Please refer to
\cref{sec:exp_details} for full experimental details.
% We will shortly open-source our code for reproducibility.
% We open-source our code for reproducibility purposes. % \footnote{\url{https://github.com/facebookresearch/riemannian_cnf}}.
%
\paragraph{Hyperbolic geometry and limits of conventional and wrapped methods}
First, we aim to show that conventional \acrlongpl{NF} are ill-suited for modelling target manifold distributions.
These are blind to the geometry, so we expect them to behave poorly when the target is located where the manifold %is highly distorted.
behaves most differently from a Euclidean space.
We refer to such models as \emph{naive} and discuss their properties in more detail in \cref{sec:ambient_distribution}.
Second, we wish to inspect the behaviour of \emph{wrapped} models (see \cref{sec:projected_methods}) when the target is away from the exponential map origin.

To this extent we parametrize a wrapped Gaussian target distribution
$\mathcal{N}^{\text{W}}(\exp_{\bm{0}}(\alpha~\partial x), \Sigma)
= \exp_{\bm{\mu} \sharp} \mathcal{N}(\alpha~\partial x, \Sigma)$
defined on the Poincar\'e disk $\B^2$ \citep{nagano2019Wrapped,mathieu2019Continuous}.
The scalar parameter $\alpha$ allows us to locate the target closer or further away from the origin of the disk.
We put three \glspl{CNF} models on the benchmark; our \emph{Riemannian} (from \cref{sec:model}), a conventional \emph{naive} and a \emph{wrapped} model.
The base distribution $P_0$ is a standard Gaussian for the \emph{naive} and \emph{wrapped} models, and a standard wrapped Gaussian for the \emph{Riemannian} model.
Models are trained by \acrlong{maxlike} until convergence.
Throughout training, the Hutchinson's estimator is used to approximate the divergence as in \cref{eq:stochastic_div}.
It can be seen from \cref{fig:poicare_convergence} that the \emph{Riemannian}
model indeed outperforms the \emph{naive} and \emph{wrapped} models as we
increase the values of $\alpha$ -- i.e., the closer we move to the boundary of
the disk.
\cref{fig:poicare_disc_density} shows that qualitatively the \emph{naive} and \emph{wrapped} models seem to indeed fail to properly fit the target when it is located far from the origin.
Additionally, we assess the architectural choice of the vector field used in our \emph{Riemannian} model.
In particular, we conduct an ablation study on the rescaling of the output layer, by training for $10$ iterations a \emph{rescaled} and an \emph{unscaled} version of our model.
\cref{fig:disk_ablation_scaling} shows that the \gls{NFE} tends to be large and sometimes even dramatically diverges when the vector field's output is \emph{unscaled}.
In addition to increasing the computational cost, this in turns appears to worsen the convergence's speed of the model.
This further illustrates the benefits of our vector field parameterization.

\paragraph{Spherical geometry and limits of the stereographic projection model}
\begin{figure}[t]
  % \vspace{-0.5em}
  \begin{minipage}[b]{.01\textwidth}
    \hspace{.1em}
  \end{minipage}
  \hfill
  \begin{minipage}[b]{0.46\textwidth}
   \vspace{.5em}
    \centering
    \includegraphics[width=\linewidth]{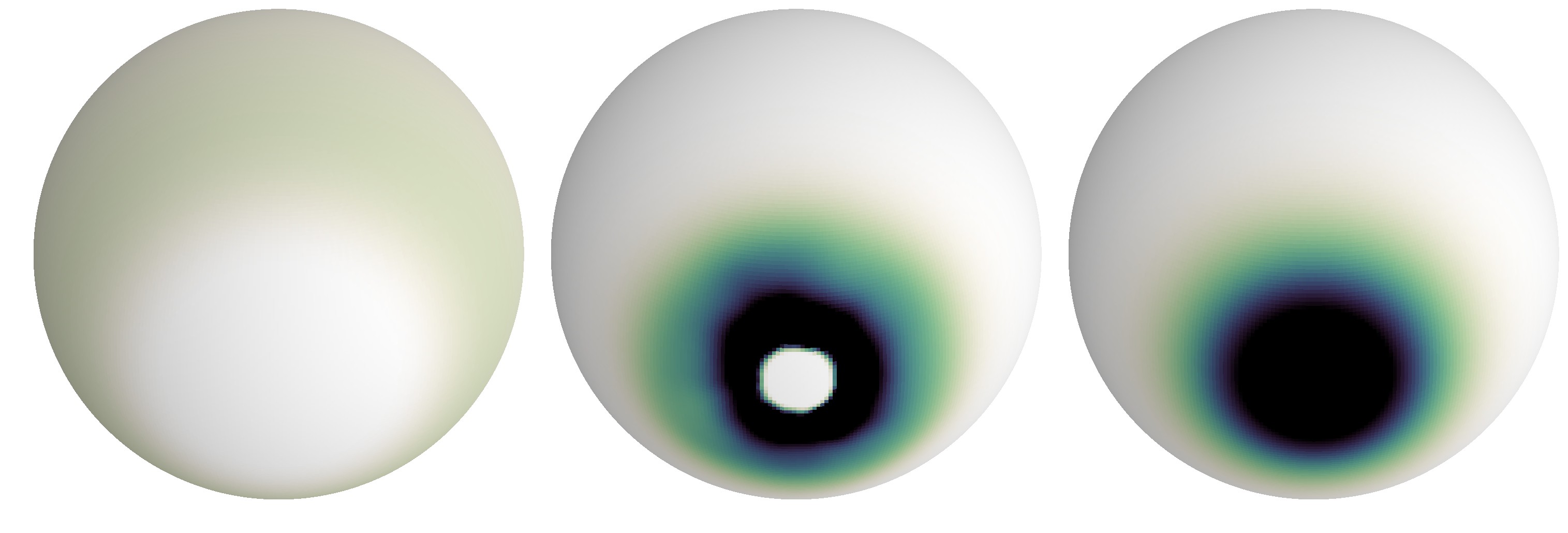}
    \put(-193, 6){\rotatebox{90}{Stereographic}} \\
    \includegraphics[width=\linewidth]{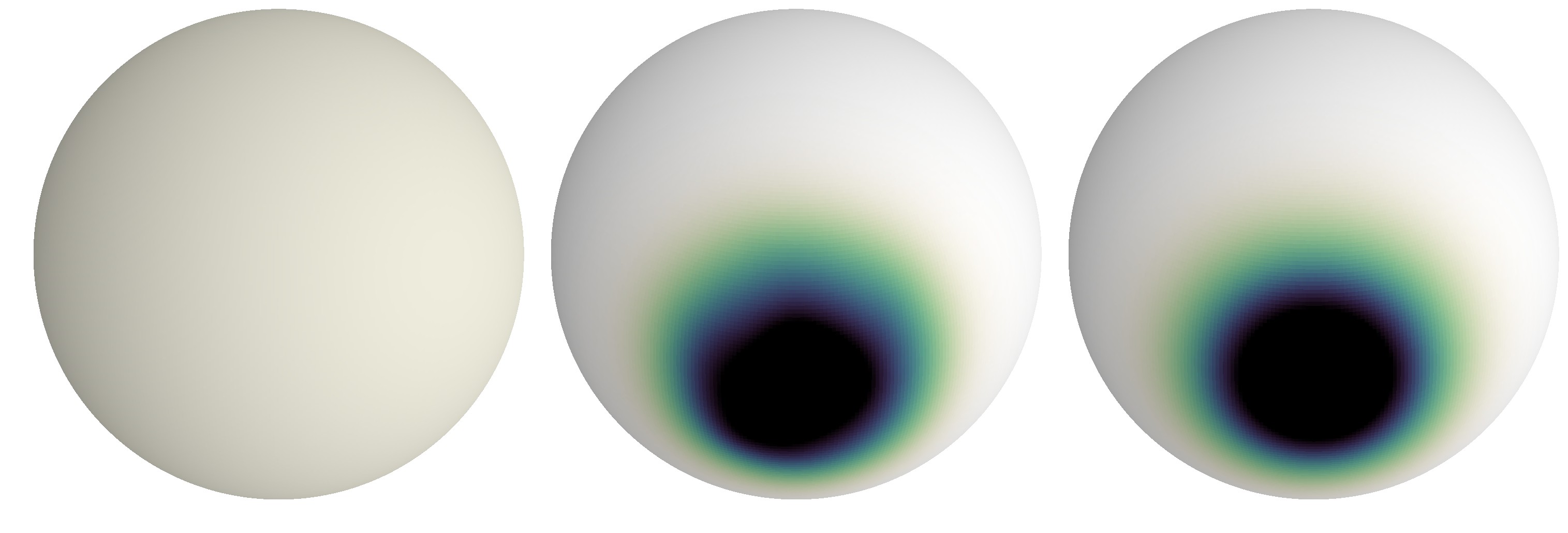}
    \put(-193, 6){\rotatebox{90}{Riemannian}}
    \put(-165,-10){Base $P_0$}
    \put(-105,-10){Model $P_{\theta}$}
    \put(-50,-10){Target $P_{\mathcal{D}}$}
    \captionof{figure}{
      Probability distributions on $\S^2$.
      Models trained to fit a $\vMF(\bm{\mu} = -\bm{\mu}_0, \kappa=10)$.
    }
    \label{fig:vmf}
  \end{minipage}
  \hfill
  \begin{minipage}[b]{0.49\textwidth}
    \centering
    \begin{tabular}{cccccc}
\toprule
                          & model &     \bf Stereographic &           \bf Riemannian \\
Loss & ${\kappa}$ &                       &                          \\
\midrule
\multirow{3}{*}{$\mathcal{L}^{\text{Like}}$} & 100 &  ${63.60}_{\pm 3.56}$ &  $\bm{-1.78}_{\pm 0.01}$ \\
                          & 50  &  ${32.68}_{\pm 3.15}$ &  $\bm{-1.09}_{\pm 0.01}$ \\
                          & 10  &   ${6.45}_{\pm 2.42}$ &   $\bm{0.52}_{\pm 0.01}$ \\
\cline{1-4}
\multirow{3}{*}{$\mathcal{L}^{\text{KL}}$} & 100 &   ${1.56}_{\pm 0.34}$ &   $\bm{0.04}_{\pm 0.02}$ \\
                          & 50  &   ${0.68}_{\pm 0.16}$ &   $\bm{0.03}_{\pm 0.02}$ \\
                          & 10  &   ${0.12}_{\pm 0.01}$ &   $\bm{0.01}_{\pm 0.00}$ \\
\bottomrule
\end{tabular}

      \captionof{table}{
      Performance of continuous flows on $\S^2$ with $\vMF(\bm{\mu}=-\bm{\mu}_0, \kappa)$ targets (the smaller the better).
      When models perfectly fit the target $P_{\mathcal{D}}$, then $\mathcal{L}^{\text{Like}} = \mathbb{H}[P_{\mathcal{D}}]$ which decreases with $\kappa$, explaining $\mathcal{L}^{\text{Like}}$'s results for the \emph{Riemannian} model.
      }
      \label{table:vmf}
    \end{minipage}
  % \vspace{-1.0em}
  \end{figure}
  
Next, we evaluate the ability of our model and the stereographic projection
model from
\cref{sec:projected_methods} to approximate distributions on the sphere
which are located around the projection point $- \bm{\mu}_0$.
We empirically assess this phenomenon by choosing the target distribution to be a Von-Mises Fisher \citep{downs1972Orientational} distribution $\vMF(\bm{\mu}, \kappa)$ located at $\mu= - \bm{\mu}_0$, and with concentration $\kappa$ (which decreases with the variance).
Along with the \emph{stereographic} projection method, we also consider our \emph{Riemannian} model from \cref{sec:model}.
We neither included the \emph{naive} model since it is misspecified here
(leading to an undefined reverse KL divergence), nor the \emph{wrapped} model as computing its density requires an infinite summation (see \cref{sec:projected_methods}).
The base distribution $P_0$ is chosen
to be a standard Gaussian on $\R^2$ for the \emph{stereographic} model and a uniform distribution on $\S^2$ for the \emph{Riemannian} model.
Models are trained by computing the exact divergence.
The performance of these two models are quantitatively assessed on both the negative log-likelihood and reverse \gls{KL} criteria.

\cref{fig:vmf} shows densities of the target distribution along with the base and learned distributions.
We observe that the stereographic model fails to push mass close enough to the singularity point $-\bm{\mu}_0$, as opposed to the Riemannian model which perfectly fits the target.
\cref{table:vmf} shows the negative log-likelihood and reverse \gls{KL} losses of both models when varying the concentration parameter $\kappa$ of the $\vMF$ target.
The larger the concentration $\kappa$ is, the closer to the singularity point $-\bm{\mu}_0$ the target's mass gets.
We observe that the \emph{Riemannian} model outperforms the \emph{stereographic} one to fit the target for both objectives, although this performance gap shrinks as the concentration gets smaller.
Also, we believe that the gap in performance is particularly large for the log-likelihood objective because it heavily penalizes models that fail to cover the support of the target.
When the $\vMF$ target is located away from the singularity point, we noted that both models were performing similarly well.
\paragraph{Density estimation of spherical data}
Finally, we aim to measure the expressiveness and modelling capabilities of our
method on real world datasets. To this extent, we gathered four earth location
datasets, representing respectively volcano eruptions \citep{data_volcano},
earthquakes \citep{data_earthquake}, floods \citep{data_flood} and wild fires
\citep{data_fire}. We approximate the earth's surface (and thus also
these data points) as a perfect sphere.
Along our \emph{Riemannian} \gls{CNF}, we also assess the fitting capacity of a mixture of von Mises-Fisher (vMF) distributions and a \emph{stereographic} projected \gls{CNF}.
The locations of the vMF components are learned via stochastic Riemannian optimization \citep{bonnabel2013Stochastic,becigneul2019Riemannian}.
The learning rate and number of components are selected by hyperparameter grid search.
In our experiments, we split datasets randomly into training and testing datasets, 
and fit the models by \acrlong{maxlike} estimation on the training dataset.
\Gls{CNF} models are trained by computing the exact divergence.

We observe from \cref{table:max_like_sphere} that for all datasets, the
\emph{Riemannian} model outperforms its \emph{stereographic} counterpart and the mixture of vMF distributions by a large margin.
It can also be seen from the learning curves that the \emph{Riemannian} model converges faster.
\cref{fig:earth} shows the learned spherical distributions along with the
training and testing datasets. We note that qualitatively the
\emph{stereographic} distribution is generally more diffuse than its
\emph{Riemannian} counterpart. It also appears to allocate some of its mass
outside the target support, and to cover less of the data points. Additional
figures are shown in \cref{sec:add_figures}.
\begin{table}[t]
% \vspace{-0.5em}
  \centering
  \caption{
  Negative test log-likelihood of continuous normalizing flows on $\S^2$ datasets.
  }
  \label{table:max_like_sphere}
\begin{tabular}{lccccc}
\toprule
& Volcano & Earthquake & Flood & Fire\\ \midrule 
{\bf Mixture vMF} \hfill {\color{blue}$\blacksquare$}  & ${-0.31}_{\pm 0.07}$ & ${0.59}_{\pm 0.01}$ & ${1.09}_{\pm 0.01}$ & ${-0.23}_{\pm 0.02}$\\
\cmidrule(r){1-1}\cmidrule(lr){2-5} 
{\bf Stereographic} \hfill {\color{orange}$\blacksquare$}  & ${-0.64}_{\pm 0.20}$ & ${0.43}_{\pm 0.04}$ & ${0.99}_{\pm 0.04}$ & ${-0.40}_{\pm 0.06}$\\
\cmidrule(r){1-1}\cmidrule(lr){2-5} 
{\bf Riemannian} \hfill {\color{green}$\blacksquare$}  & ${-0.97}_{\pm 0.15}$ & $\bm{0.19}_{\pm 0.04}$ & $\bm{0.90}_{\pm 0.03}$ & $\bm{-0.66}_{\pm 0.05}$\\
\cmidrule(r){1-1}\cmidrule(lr){2-5} 
\raisebox{{3em}}{{Learning curves}} & \hspace{-0.5em} \includegraphics[width=.175\linewidth] {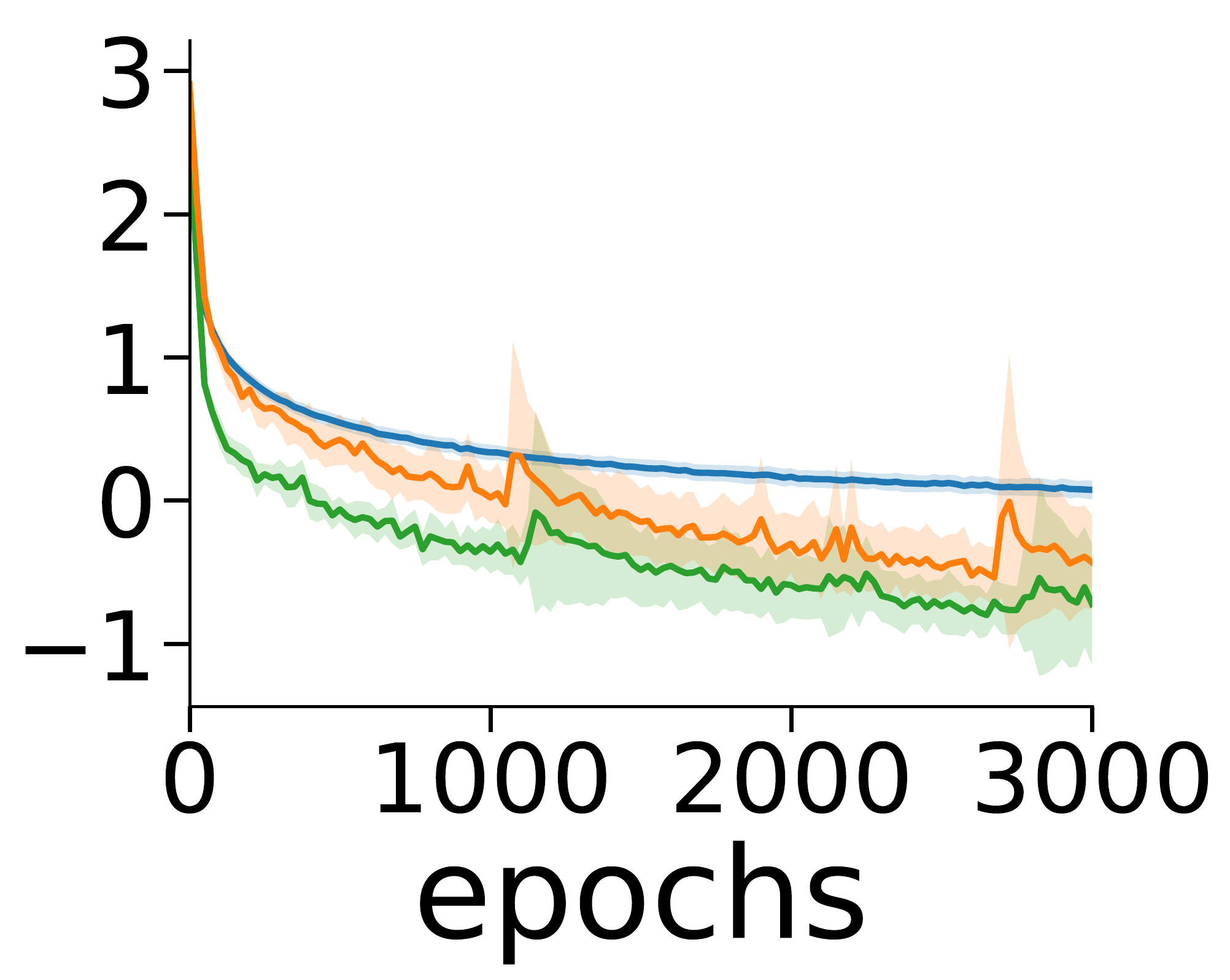} & \hspace{-0.5em} \includegraphics[width=.175\linewidth] {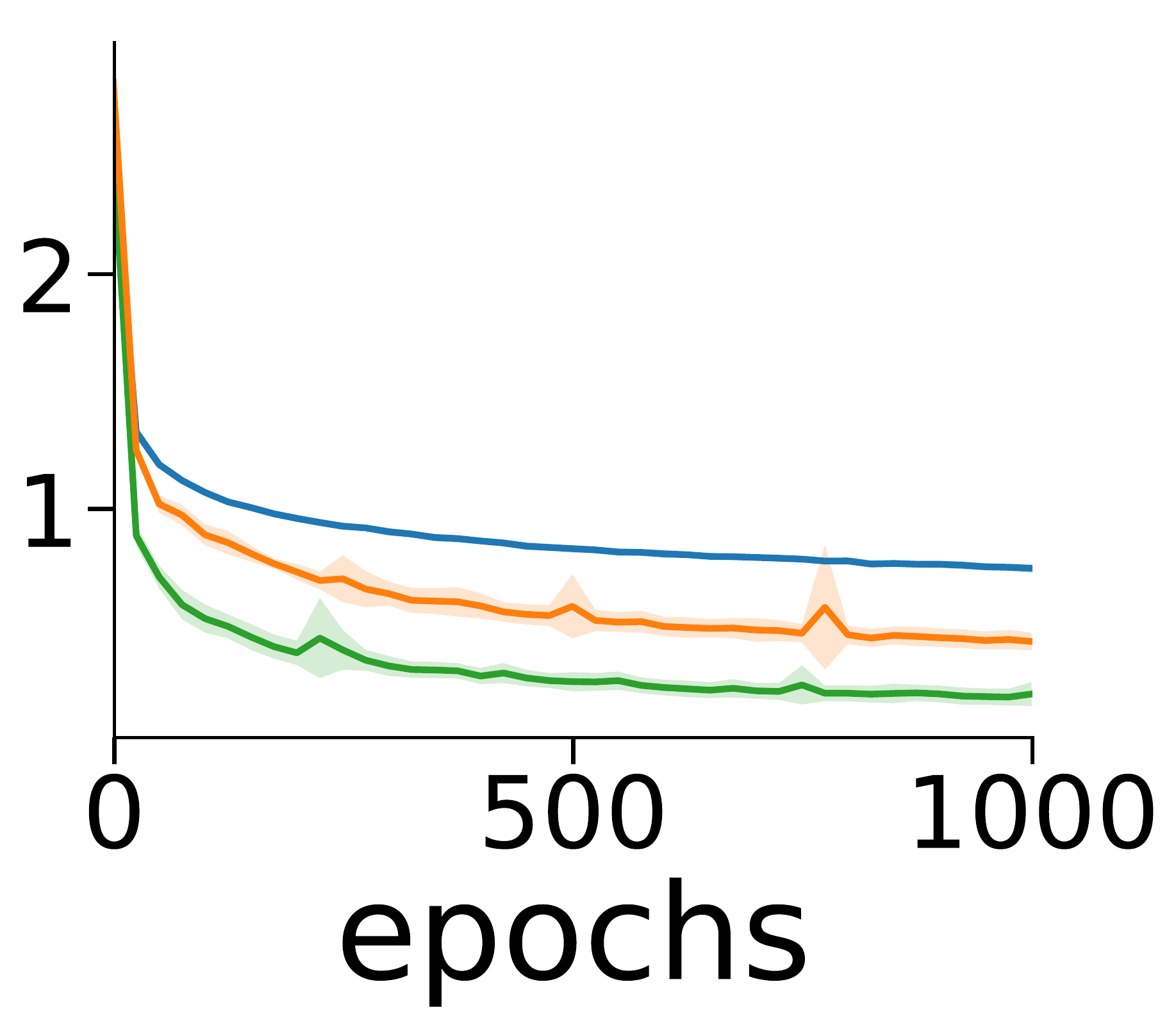} & \hspace{-0.5em} \includegraphics[width=.175\linewidth] {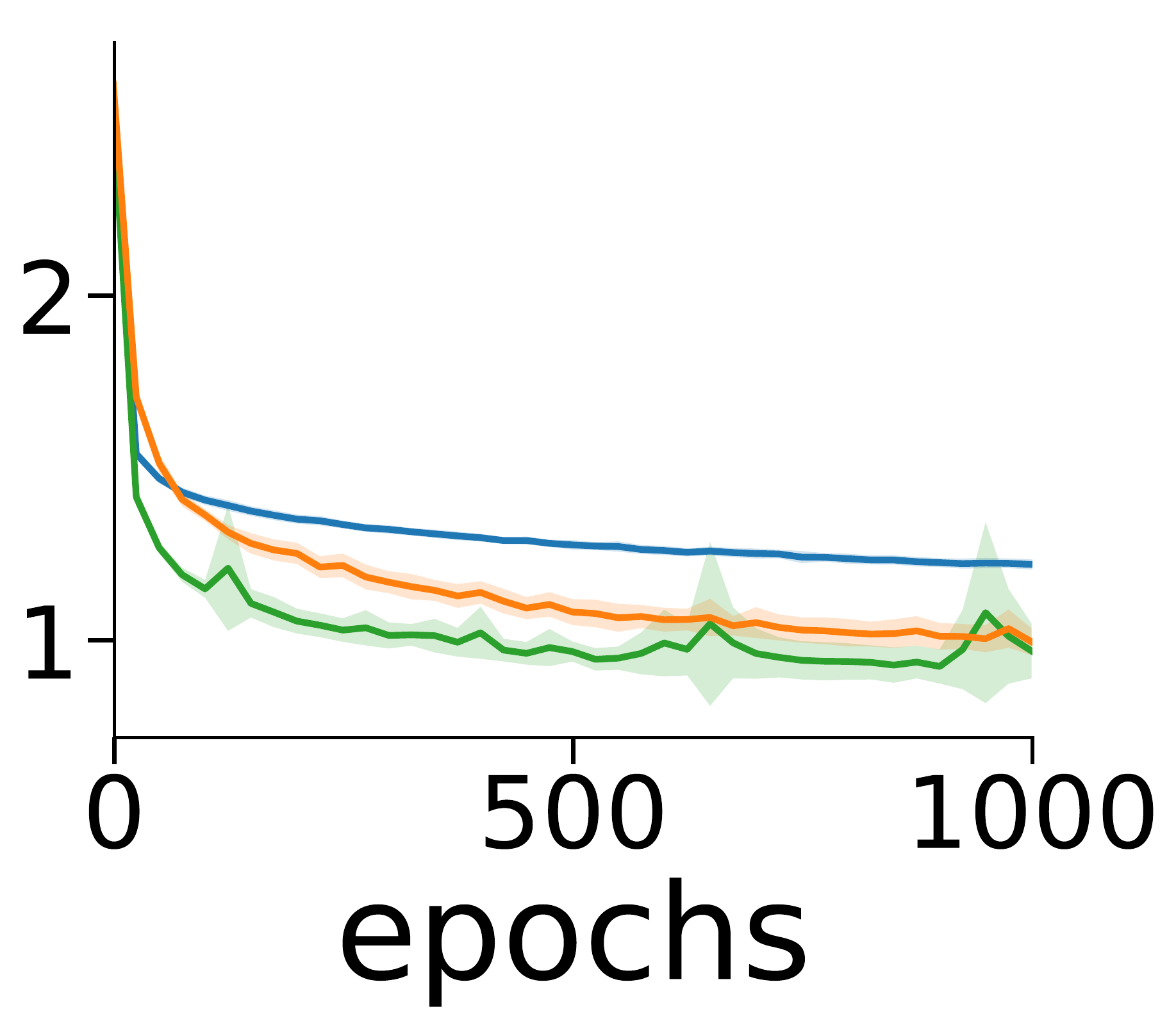} & \hspace{-0.5em} \includegraphics[width=.175\linewidth] {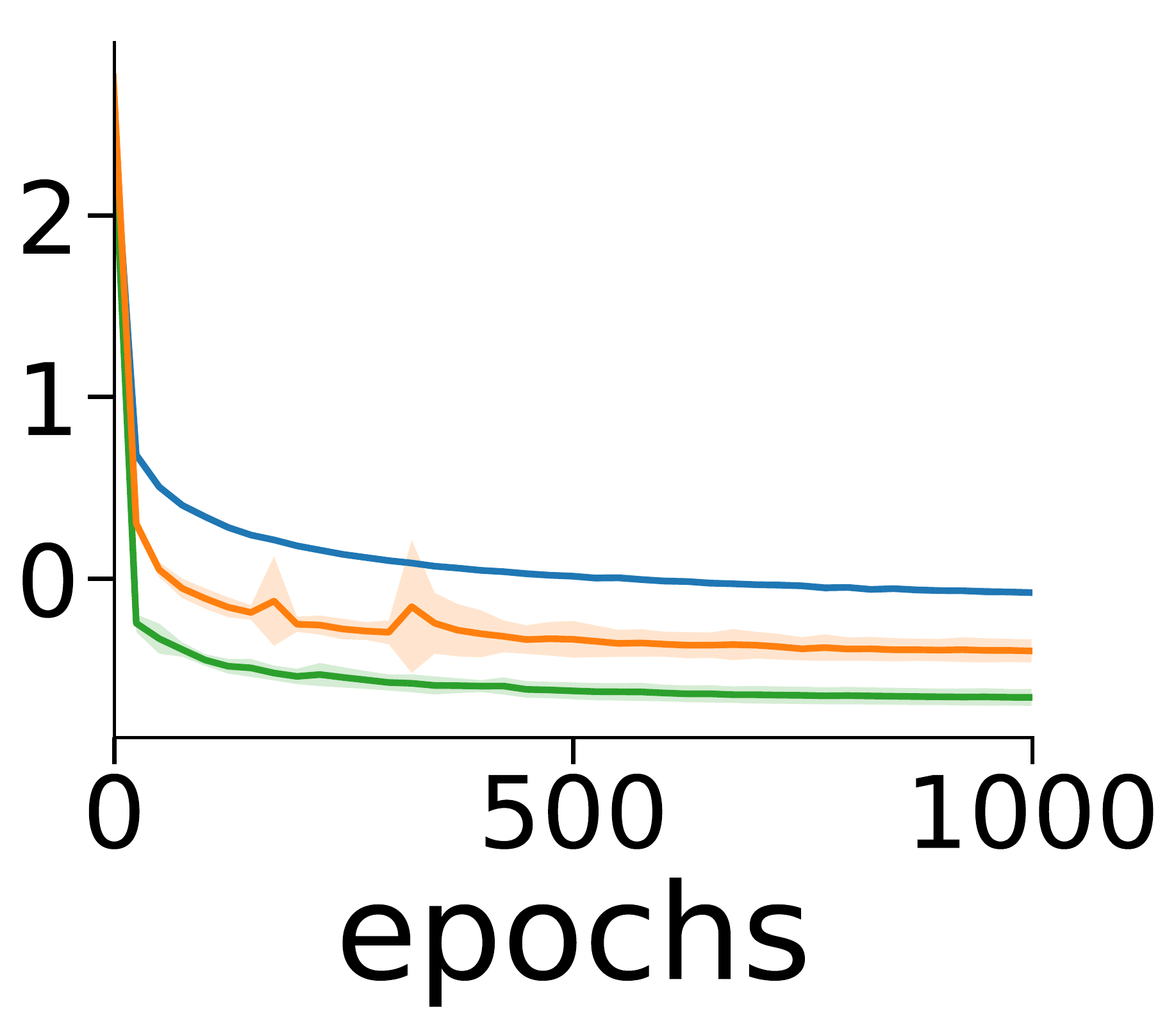}\\
\cmidrule(r){1-1}\cmidrule(lr){2-5} 
Data size & 829 & 6124 & 4877 & 12810\\
\bottomrule
\end{tabular}
\end{table}
\begin{figure}[t]
% \vspace{-0.8em}
  \centering
\begin{subfigure}{.33\textwidth}
  \includegraphics[width=\linewidth]{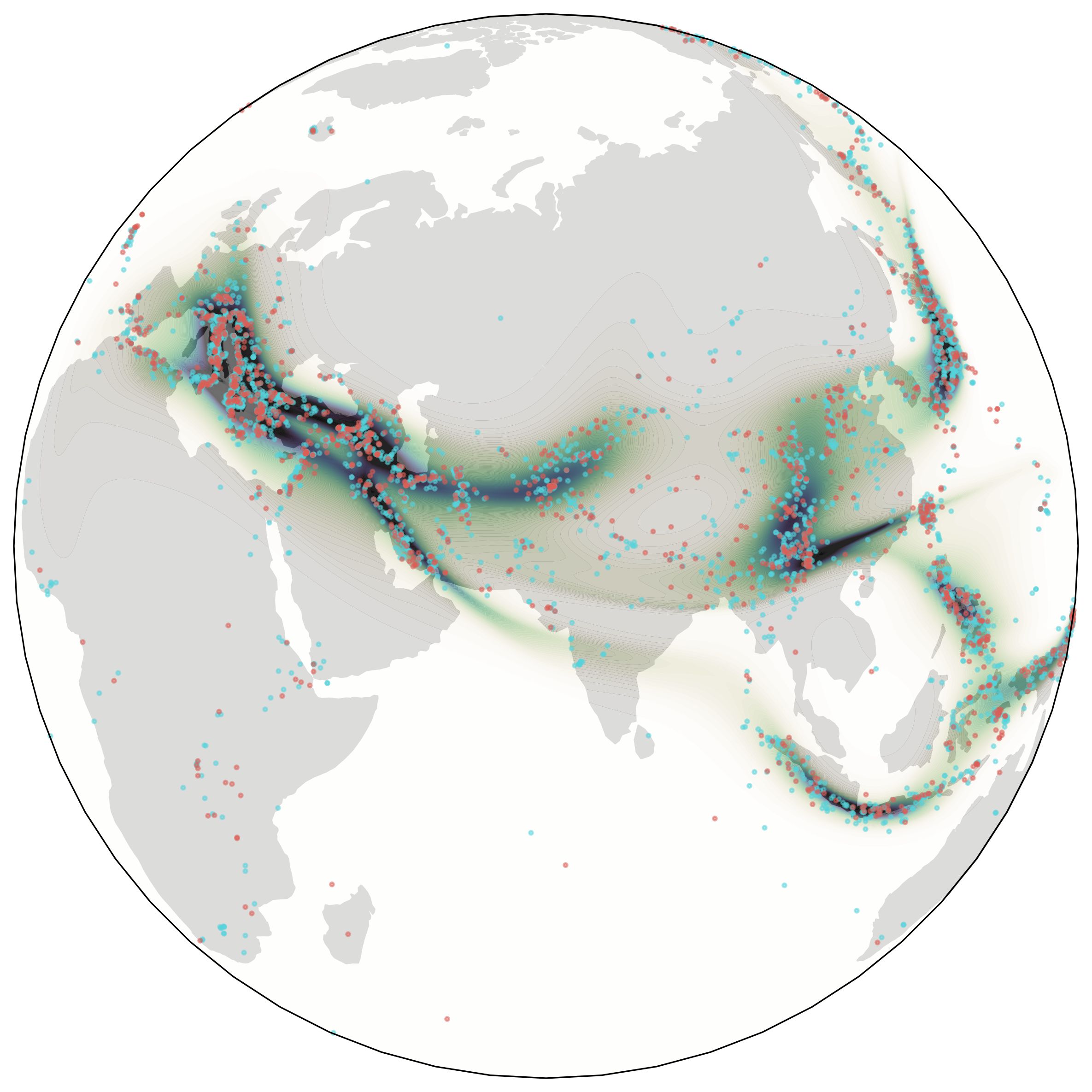}
  \put(-150,40){\rotatebox{90}{Stereographic}}
\end{subfigure}\hfil
\begin{subfigure}{.33\textwidth}
  \includegraphics[width=\linewidth]{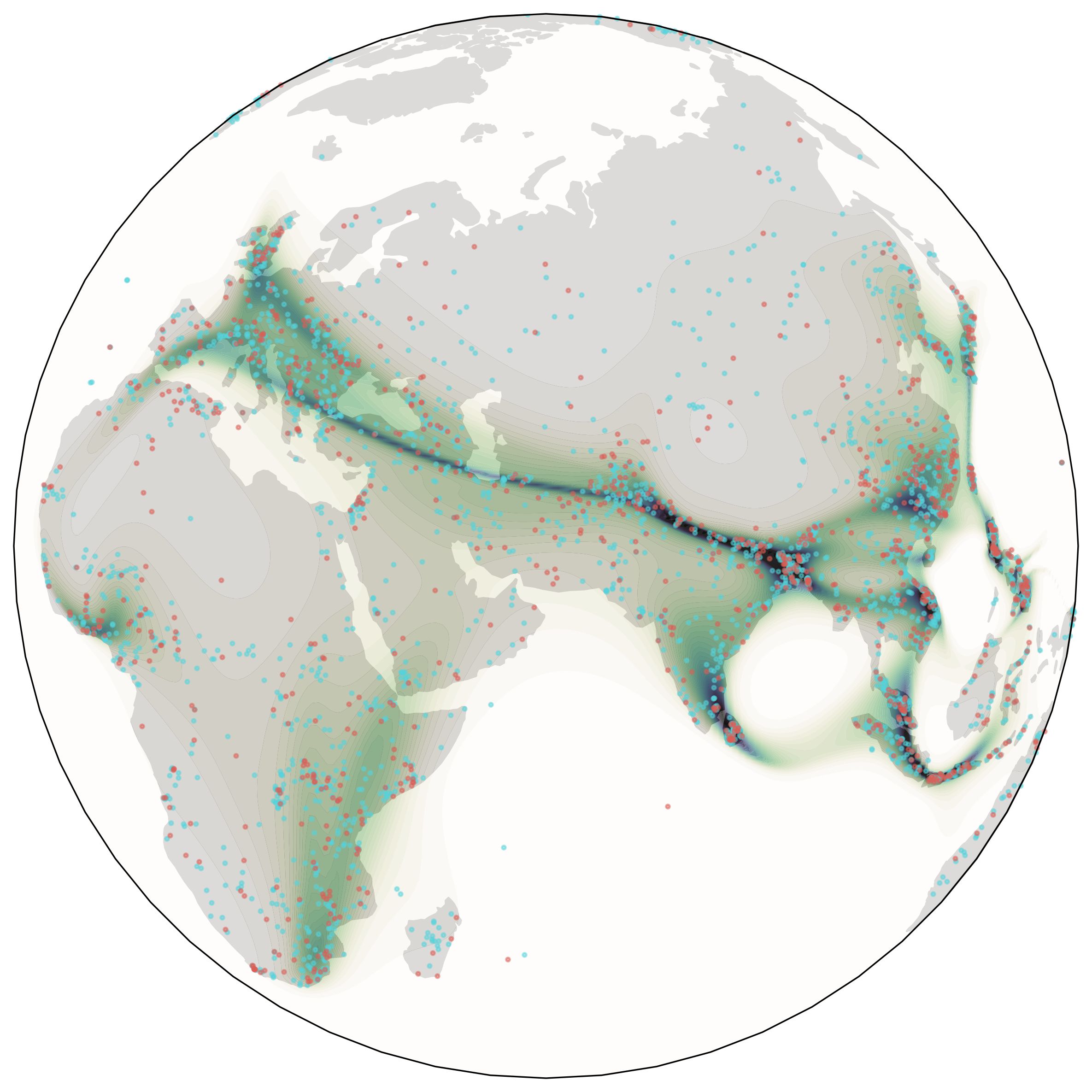}
\end{subfigure}\hfil
\begin{subfigure}{.33\textwidth}
  \includegraphics[width=\linewidth]{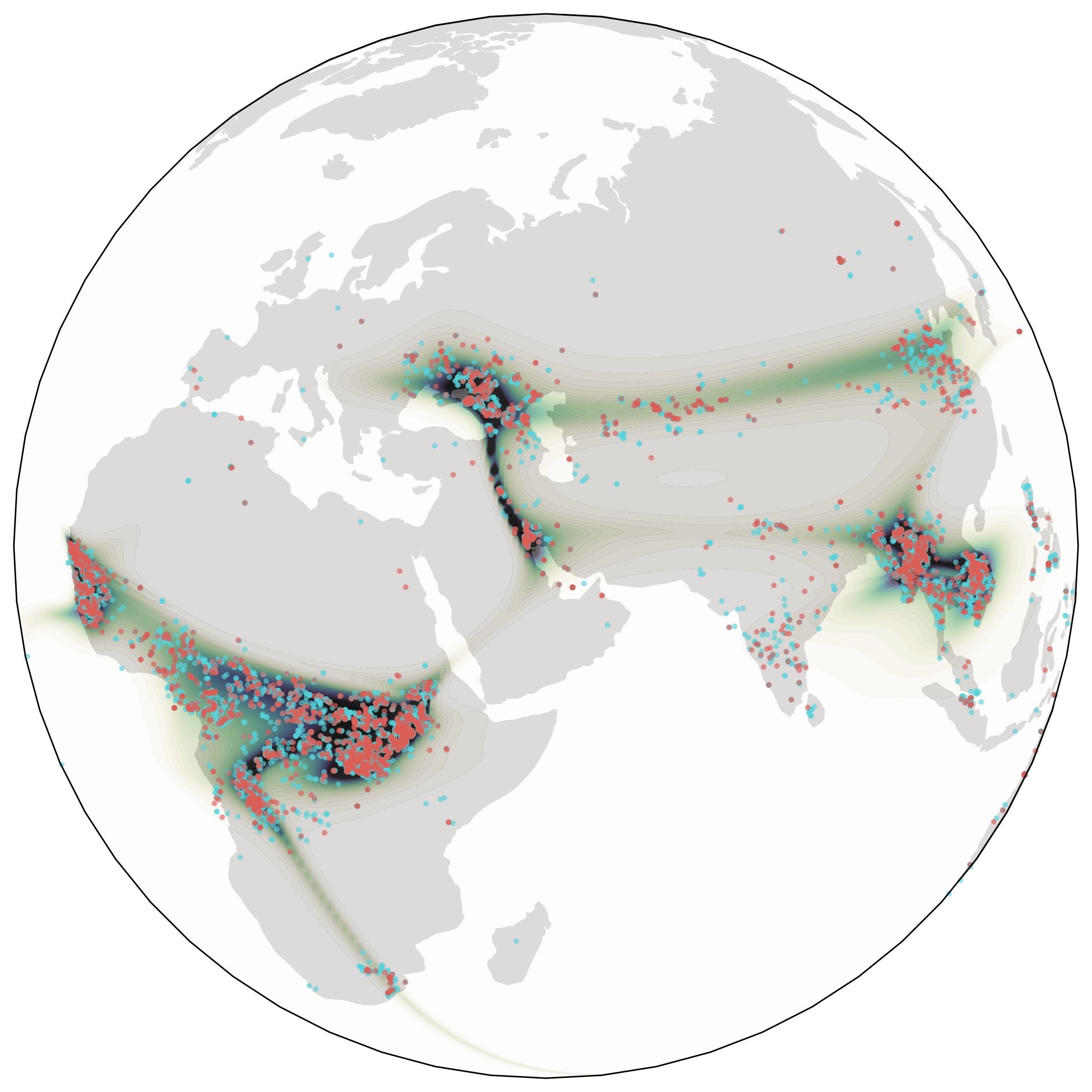}
\end{subfigure}\hfil
\begin{subfigure}{.33\textwidth}
  \includegraphics[width=\linewidth]{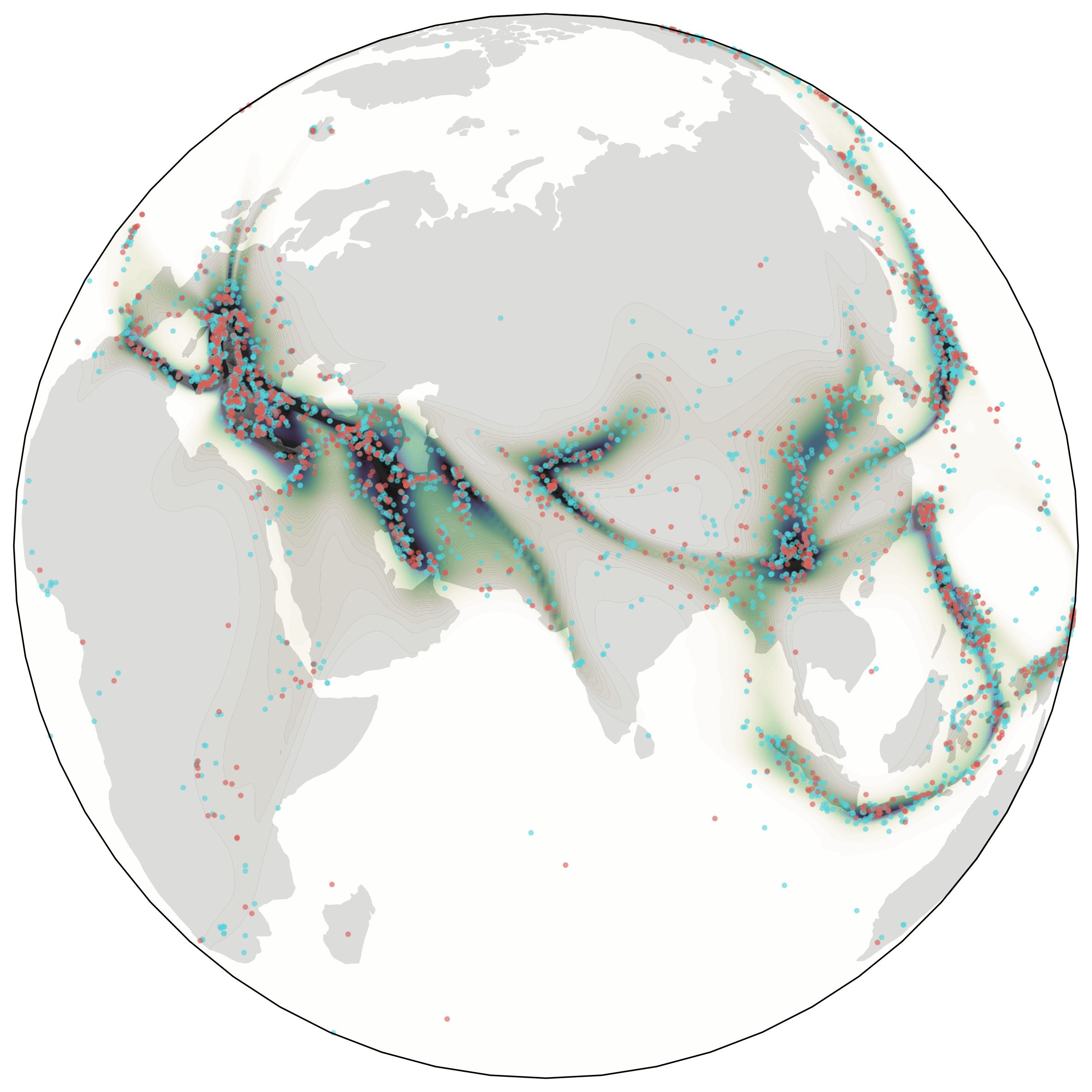}
  \put(-150,40){\rotatebox{90}{Riemannian}}
  \put(-90,-10){Earthquake}
\end{subfigure}\hfil
\begin{subfigure}{.33\textwidth}
  \includegraphics[width=\linewidth]{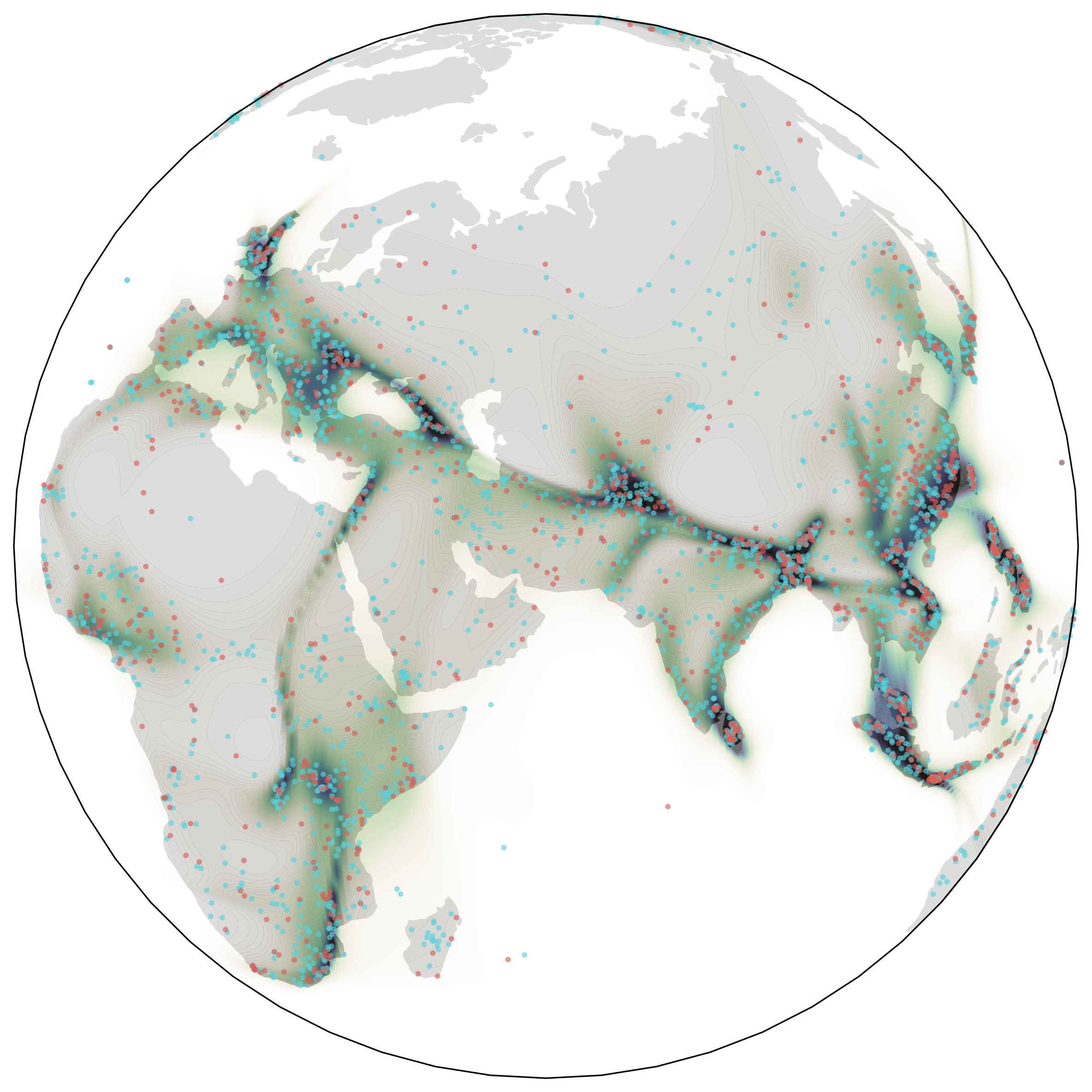}
  \put(-80,-10){Flood}
\end{subfigure}\hfil
\begin{subfigure}{.33\textwidth}
  \includegraphics[width=\linewidth]{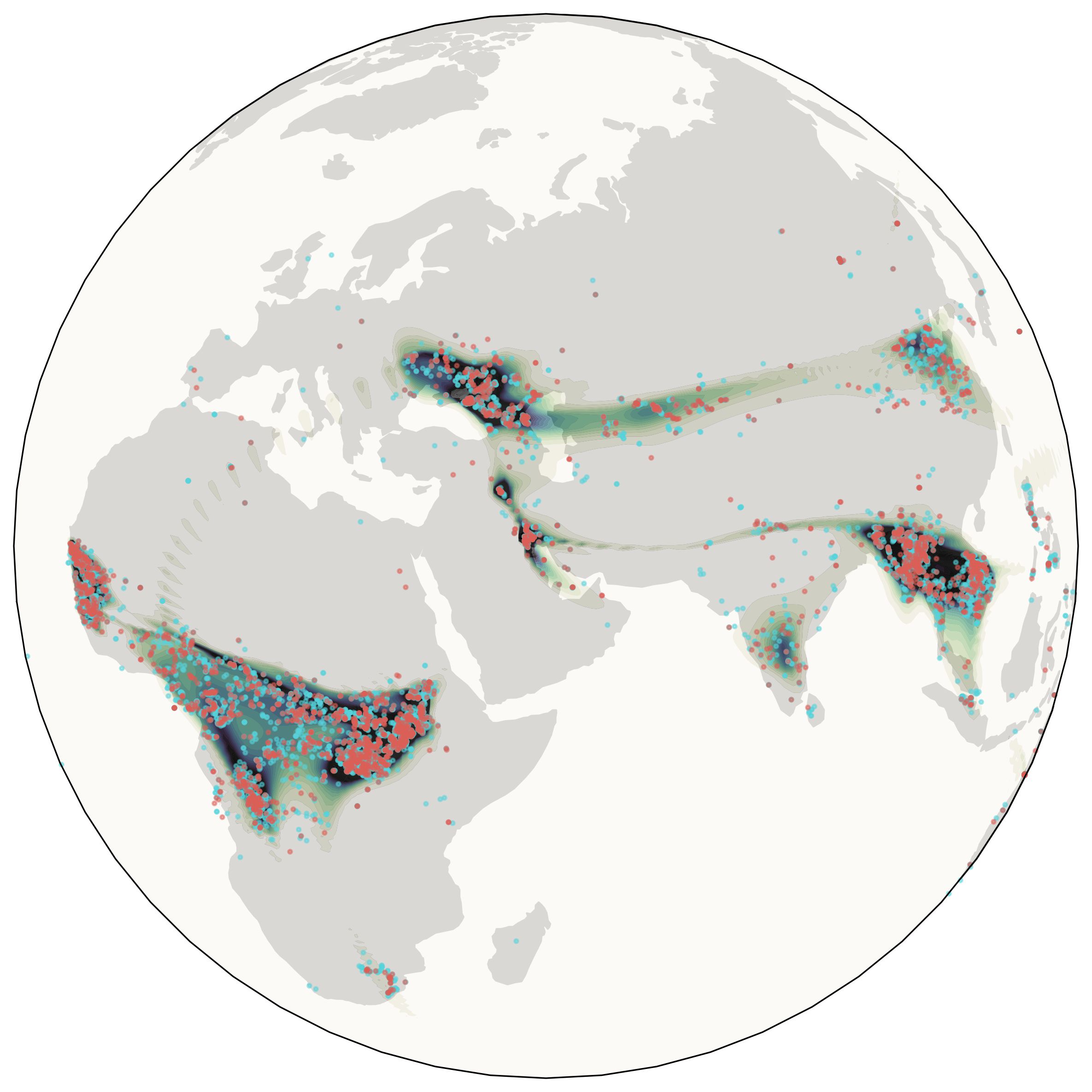}
  \put(-70,-10){Fire}
\end{subfigure}
\caption{
    Density estimation for earth sciences data.
    Blue and red dots represent training and testing datapoints, respectively.
    Heatmaps depict the log-likelihood of the trained models.
  }
  \label{fig:earth}
% \vspace{-1.0em}
\end{figure}
%
%%% Local Variables:
%%% mode: latex
%%% TeX-master: "../geoflow"
%%% End:

% !TEX root =  ../geoflow.tex

\paragraph{Limitations}
In the spherical setting, the stochastic estimator to approximate the divergence
from \cref{eq:stochastic_div} exhibits high variance. Its variance scales with
the inverse of $\sqrt{|G(\z)|}=\sin(\theta)$, which becomes too large around the
north pole and thus requires the use of the exact estimator. On large-scale
datasets, where the stochastic estimator has important runtime advantages, this
issue could be alleviated by choosing a different vector field basis than the
one induced by some local coordinates \citep[e.g.][]{falorsi2020Neural}. For the
Poincar\'e ball, no such variance behavior of the stochastic estimator exists
and it can readily be applied to large-scale data.

Compared to a well-optimized linear layer, the use of the geodesic distance
layer (see \cref{sec:vector_field}) induces an extra
computational cost as shown in \cref{fig:disk_ablation_compute}. Empirically, the geodesic layer helps to improve performance
in the hyperbolic setting but had less of an effect in the spherical setting. As such, the geodesic layer can be regarded as an optional component that can improve the quality of the model at an additional computational cost.

\section{Discussion}

In this paper we proposed a principled way to parametrize expressive probability distributions on Riemannian manifolds.
Specifically, we introduced \emph{Riemmanian continuous normalizing flows} in which flows are defined via vector fields on manifolds and computed as the solution to the associated \gls{ODE}.
We empirically demonstrated that this method can yield substantial improvements when modelling data on constant curvature manifolds compared to conventional or projected flows.

%%% Local Variables:
%%% mode: latex
%%% TeX-master: "../geoflow"
%%% End:

\newpage

\section*{Broader impact}
The work presented in this paper focuses on the learning of well-specified
probabilistic models for manifold-valued data. Consequently, its applications are
especially promising to advance scientific understanding in fields such as earth
and climate science, computational biology, and computer vision. As a
foundational method, our work inherits the broader ethical aspects and future
societal consequences of machine learning in general.

\section*{Acknowledgments}
We are grateful to Adam Foster, Yann Dubois, Laura Ruis, Anthony Caterini, Adam Golinski, Chris Maddison, Salem Said, Alessandro Barp, Tom Rainforth and Yee Whye Teh for fruitful discussions and support.
EM research leading to these results received funding from the European Research Council under the European Union’s Seventh Framework Programme (FP7/2007- 2013) ERC grant agreement no. 617071 and he acknowledges Microsoft Research and EPSRC for funding EM’s studentship.

\bibliography{clean_bib,bib_data}

\newpage
\clearpage
\begin{appendices}
  \appendixhead
  \appendix

% !TEX root =  ../geoflow.tex
\section{Constant curvature manifolds}
In the following, we provide a brief overview of Riemannian geometry and constant curvature
manifolds, specifically the Poincar\'e ball and the hypersphere models. We will use
$\left\|\cdot\right\|$ and $\langle \cdot, \cdot \rangle$ to
denote the Euclidean norm and inner product. For norms
and inner products on tangent spaces $\T_{\z}\M$, we write
$\left\|\cdot\right\|_{\z}$ and $\langle \cdot, \cdot \rangle_{\z}$ where
$\z \in \M$.
\subsection{Review of Riemannian geometry} \label{sec:riem_review}
A real, smooth \emph{manifold} $\M$ is a set of points $\z$, which is "locally similar" to a linear space.
For every point $\z$ of the manifold $\M$ is attached a real vector space of the same dimensionality as $\M$ called the \emph{tangent space} $\T_{\z}\M$.
Intuitively, it contains all the possible directions in which one can tangentially pass through $\z$.
Taking the disjoint union of all tangent spaces yields the \emph{tangent bundle} $\T\M = \cap_{\z \in \M} \T_{\z}\M$.
For each point $\z$ of the manifold, the \emph{metric tensor} $\g(\z)$ defines an inner product on the associated tangent space
as $\g(\z) = \langle\cdot,\cdot \rangle_{\z}: \T_{\z}\M \times \T_{\z}\M \rightarrow \R$.
The \emph{matrix representation of the Riemannian metric} $G(\z)$, is defined such that \[\forall \bm{u},\bm{v} \in \T_{\z}\M\times\T_{\z}\M, \ \langle\bm{u},\bm{v}\rangle_{\z} = \g(\z)(\bm{u},\bm{v}) = \bm{u}^T G(\z) \bm{v}.\]
A \emph{Riemannian manifold} is then given as a tuple $(\M, \g)$ \citep{petersen2006Riemannian}.
The metric tensor gives a \emph{local} notion of angle, length of curves, surface area and volume, from which \emph{global} quantities can be derived by integrating local contributions.
A norm is induced by the inner product on $\T_{\z}\M$: $\left\| \cdot \right\|_{\z}=\sqrt{\langle\cdot,\cdot\rangle_{\z}}$. 
An infinitesimal volume element is induced on each tangent space $\T_{\z}\M$, and thus a measure $d\text{Vol}(\z) = \sqrt{|G(\z)|}~d\text{Leb}(\z)$ on the manifold, with $\text{Leb}(\z)$ being the Lebesgue measure.
The length of a curve $\gamma: t \mapsto \gamma(t) \in \M$ is given by 
$L(\gamma)=\int_{0}^{1}{\|\gamma'(t)\|_{\gamma(t)} dt}$.
The concept of straight lines can then be generalized to \emph{geodesics}, which are constant speed curves giving the shortest path between pairs of points $\z, \y$ of the manifold: $\gamma^* = \argmin L(\gamma)$ with $\gamma(0)=\z$, $\gamma(1)=\y$ and $\left\|\gamma'(t)\right\|_{\gamma(t)}=1$.
A \emph{global} distance is thus induced on $\M$ given by \[d_{\M}(\z, \y) = \inf L(\gamma).\]
Endowing $\M$ with that distance consequently defines a metric space $(\M, d_\M)$.
The concept of moving along a "straight" curve with constant velocity is given by the \emph{exponential map}.
In particular, there is a unique unit speed \emph{geodesic} $\gamma$ satisfying $\gamma(0) = \z$ with initial tangent vector $\gamma'(0) = \v$.
The corresponding exponential map is then defined by $\exp_{\z}(\bm{v}) = \gamma(1)$.
The \emph{logarithm map} is the inverse $\log_{\z} = \exp_{\z}^{-1}: \M \rightarrow \T_{\z}\M$.
The $\exp_{\z}$ map is well-defined on the full tangent space $\T_{\z}\M$ for all $z \in \M$ if and only if $\M$ is geodesically complete, i.e.\ if all geodesics can "run" indefinitely.
This is the case for the Poincar\'e ball and hypersphere. 
\subsection{The Poincar\'e ball model of hyperbolic geometry} \label{sec:poincare_ball}
In the following, we provide a brief overview of key concepts related to
hyperbolic geometry.
A $d$-dimensional \emph{hyperbolic} space is a complete, simply connected, $d$-dimensional Riemannian manifold with \emph{constant negative curvature} $K$.
The \emph{Poincar\'e ball} is one model of this geometry, and is formally defined as the Riemannian manifold $\B_K^d = (\mathcal{B}_K^d, \g_K)$.
Here $\mathcal{B}_K^d$ denotes the open ball of radius $1/\sqrt{|K|}$, and $\g_K$ the \emph{metric tensor} $\g_K(\z) = ({\lambda_{\z}^K)}^2 ~\g^e(\z)$, 
where $\lambda_{\z}^K = \frac{2}{1 + K\left\|\z\right\|^2} $ and $\g^e$ denotes the Euclidean metric tensor, i.e.\ the usual dot product.
The induced \emph{invariant measure} $\text{Vol}$ is absolutely continuous with respect to the Lebesgue measure $\text{Leb}$, and its density is given by $\frac{d\text{Vol}}{d\text{Leb}}(\z) = \sqrt{|G(\z)|} = (\lambda_{\z}^K)^d$ for all $\z \in \B_K^d$.
%
% \subsection{The stereographic projected hypersphere model}
% %
% The stereographic hyperspherical model can formally be defined as the Riemannian manifold $\D_K^d = (\R^d \cup \{\infty\}, \g_K)$ \footnote{The one-point compactification of $\R^n$ is homeomorphic to the n-sphere $\S^n$.}. %, with $\g_K$ defined above.
% It is a model of hyperspherical geometry, i.e.\ a complete, simply connected, $d$-dimensional Riemannian manifold with constant positive curvature $K$.
% It arises by projecting the hyperspherical manifold $\mathcal{S}_K^d = \{ \z \in \mathbb{R}^{d+1}) ~|~ \langle \z,\z \rangle = 1/K \}$, endowed with the pullback of the ambient Euclidean metric.
% The projection being the stereographic map given by 
% $\rho_K(\xi;\z) = \frac{\z}{1 + \sqrt{|K|} \xi}$ for $\x \in \R^{d-1}$, $\xi \in \R$.
%
As motivated by \cite{skopek2019Mixedcurvature}, the Poincar\'e ball $\B^d_K$ can conveniently be described through the formalism of \emph{gyrovector spaces} \citep{ungar2008gyrovector}.
These can be seen as an analogy to the way vector spaces are used in Euclidean geometry, but in the non-Euclidean geometry setting.
In particular, the \emph{Möbius addition} $\oplus_K$ of $\z, \y$ in $\B^d_K$ is defined as
\begin{align*}
\z \oplus_K \y = \frac{(1 - 2 K \left\langle\z,\y\right\rangle - K \|\y\|^2)\z + (1 + K\|\z\|^2)\y}{1 - 2 K \left\langle\z,\y\right\rangle + K^2 \|\z\|^2 \|\y\|^2}.
\end{align*}
Then the \emph{exponential map} can be expressed via this \emph{Möbius addition} as
\begin{align*}
\exp^K_{\z}(\v) = \z \oplus_K \left(\tanh\left( \sqrt{-K} \frac{\lambda^K_{\z}\|\v\|}{2} \right)\frac{\v}{\sqrt{-K}\|\v\|} \right)
\end{align*}
% in the Poincar\'e ball, and 
% \begin{align*}
% \exp^K_{\z}(\v) = \z \oplus_K \left(\tan\left( \sqrt{K} \frac{\lambda^K_{\z}\|\v\|}{2} \right)\frac{\v}{\sqrt{K}\|\v\|} \right)
% \end{align*}
% in the stereographic hypersphere, 
%
% for all $\z \in \B^d_K$ and $\v \in \T_{\z}\B^d_K$.
% Its inverse, the \emph{logarithm map} is defined as
% %
% \begin{align*}
% \log^K_{\z}(\y) = \frac{2}{\sqrt{-K}\lambda^K_{\z}} \tanh^{-1} \left(\sqrt{-K} \|\x \| \right) \frac{\x}{\| \x \|}.
% \end{align*}
%
% in the Poincar\'e ball, and 
% \begin{align*}
% \log^K_{\z}(\y) = \frac{2}{\sqrt{K}\lambda^K_{\z}} \tan^{-1} \left(\sqrt{K} \|\x \| \right) \frac{\x}{\| \x \|}.
% \end{align*}
% in the stereographic hypersphere, 
where $\x = -\z \oplus_K \y $ for all $\x,\y \in \B^d_K$.
%
% The induced distance function by the metric tensor is given by
% \begin{align*}
% d^K(\z,\y) = \frac{1}{\sqrt{|K|}} \cosh^{-1} \left( 1 - \frac{2K ||\z-\y||^2}{(1+K\left\|\z\right\|^2)(1+K\left\|\y\right\|^2)} \right).
% \end{align*}
% in the Poincar\'e ball, and 
% \begin{align*}
% d^K(\z,\y) = \frac{1}{\sqrt{K}} \cos^{-1} \left( 1 - \frac{2K ||\z-\y||^2}{(1+K\left\|\z\right\|^2)(1+K\left\|\y\right\|^2)} \right),
% \end{align*}
% in the stereographic hypersphere, for all $\x,\y \in \B^d_K$.
% %
% %
% The induced invariant measure $\text{Vol}$ is absolutely continuous with respect to the Lebesgue measure $\text{Leb}$, and its density is given by $\frac{d\text{Vol}}{d\text{Leb}}(\z) = \sqrt{|G(\z)|} = (\lambda_{\z}^K)^d$ for all $\z \in \B^d_K$.
%
%
\subsection{The hypersphere model of elliptic geometry}
In the following, we discuss key concepts related to positively curved spaces
known as \emph{elliptic} spaces, and in particular to the \emph{hypersphere} model.
The d-sphere, or \emph{hyperpshere}, is a compact submanifold of $\R^{d+1}$ with positive constant curvature $K$ whose support is  
defined by
$\mathcal{S}_K^d = \{ \z \in \mathbb{R^{d+1}} ~|~ \langle \z,\z \rangle = 1/K \}$.
It is endowed with the pull-back metric of the ambient Euclidean space.
\paragraph{Sphere}
In the two-dimensional setting $d=2$, we rely on polar coordinates to parametrize the sphere $\S^2$.
These coordinates consist of polar $\theta \in [0, \pi]$ and azimuth $\varphi \in [0, 2\pi)$ angles.
The ambient Cartesian coordinates are then given by $\r(\theta, \varphi) = (\sin(\theta) \cos(\varphi), \sin(\theta) \sin(\varphi), \cos(\theta))$.
We have $\sqrt{|G(\theta, \varphi)|} = \sin(\theta)$.
Applying the generic divergence formula (see Equation \ref{eq:div_unit}) yields the celebrated spherical divergence formula
\begin{align*}
\diver(\bm{g}) = 
\frac{1}{\sin(\theta)} \frac{\partial }{\partial \theta} \left( \sin(\theta) ~g^{\theta}(\theta, \varphi) \right)
+ \frac{1}{\sin(\theta)} \frac{\partial }{\partial \varphi} \left(g^{\varphi}(\theta, \varphi) \right).
\end{align*}
\paragraph{Hypersphere}
For higher dimensions, we can rely on the n-spherical coordinate system in which the coordinates consist of $d-1$ angular coordinates $\varphi_1, \dots, \varphi_{d-2} \in [0, \pi]$ and $\varphi_{d-2} \in [0, 2\pi)$ \citep{blumenson1960derivation}.
Then we have
$\sqrt{|G(\bm{\varphi})|} = \sin^{d-2}(\varphi_{1}) \sin^{d-3}(\varphi_{2}) \dots \sin(\varphi_{d-2})$.

Using the ambient cartesian coordinates, the \emph{exponential map} is given by
\begin{align*}
\exp^c_{\bm{\mu}}(\v) = \cos \left(\sqrt{K}\|\v\| \right) \bm{\mu} + \sin \left(\sqrt{K}\|\v\| \right) \frac{\v}{\sqrt{K}\|\v\|}
\end{align*}
for all $\z \in \S^d_K$ and $\v \in \T_{\z}\S^d_K$.
%
% Its inverse, the \emph{logarithm map} is defined as
% \begin{align*}
% \log_{\z}(\y) &= d(\z, \y) \frac{\y - \langle \z, \y \rangle \z}{\| \y - \langle \z, \y \rangle \z \|}.
% \end{align*}
% The geodesic distance is given by
% \begin{align*}
% d(\z, \y) = \cos^{-1}(\z^{\top} \y).
% \end{align*}

%
\section{Probability measures on Riemannian manifolds}
\label{sec:measures_riem}
In what follows, we discuss core concepts of probability measures on Riemannian
manifolds and show how naive methods lead to ill- and mis-specified models on
manifolds.%, and provide detailed proofs for the propositions in the main text.

Probability measures and random vectors can intrinsically be defined on Riemannian manifolds so as to model uncertainty on non-flat spaces \citep{pennec2006Intrinsic}.
The Riemannian metric $G(\z)$ induces an infinitesimal volume element on each tangent space $\T_{\z}\M$, and thus a measure on the manifold,
\begin{align*}
d\vol(\z) = \sqrt{|G(\z)|} ~d\leb,
\end{align*}
with $\leb$ the Lebesgue measure.
Manifold-valued random variables would naturally be characterized by the \emph{Radon-Nikodym derivative} of a measure $\nu$ w.r.t. the Riemannian measure $\vol$ (assuming absolute continuity)
\begin{align*}
p(\z) = \frac{d\nu}{d\vol}(\z).
\end{align*}
\subsection{Ambient Euclidean probability distributions} \label{sec:ambient_distribution}
Unfortunately, conventional probabilistic models implicitly assume a \emph{flat} geometry.
This in turn cause these models to either be misspecified or ill-suited to fit manifold distributions.
Below we discuss the reasons why.

Let $P_{\mathcal{D}}$ be a target probability measure that we aim to approximate, and which is defined on a $d$-dimensional manifold $\M \subseteq \R^D$.
Furthermore, we assume it admits a Radon-Nikodym derivative $p_{\mathcal{D}}$ with respect to the manifold invariant measure $\vol$, denoting $P_{\mathcal{D}} \ll \vol$ with $\ll$ denoting absolute continuity.
Conventional normalizing flows implicitly assume the parametrized probability measure $P_{\theta}$ to have support on the ambient space $\R^D$ and to be absolutely continuous with respect to the Lebesgue measure $\leb_{\R^D}$.
We denote its density by $p_{\theta}$.

Next, assume $D=d$, such as for $\M=\B^d \subseteq \R^d$.
With $\z$ the $d$-dimensional Cartesian coordinates, we have
$\frac{d\vol}{d\leb_{\R^d}}(\z) = \sqrt{|G(\z)|}$.
One could then see the manifold-valued target $P_{\mathcal{D}}$ as being a probability measure on $\R^d$ with a density w.r.t. the Lebesgue measure given by 
\[ \frac{dP_{\mathcal{D}}}{d\leb_{\R^d}}(\z)=p_{\mathcal{D}}(\z)\sqrt{|G(\z)|} \triangleq \tilde{p}_{\mathcal{D}}(\z).\]
In general $P_{\mathcal{D}} \ll P_{\theta}$ which implies that the \emph{forward} Kullback-Leibler divergence, or \emph{negative log-likelihood} up to constants, is defined and given by
\begin{align*}
 \mathcal{L}^{\text{Like}}(\theta)
= \KL{P_{\mathcal{D}}}{P_{\theta}} + \mathbb{H}(P_{\mathcal{D}})
= \mathbb{E}_{P_{\mathcal{D}}} \left[ \log \left(\frac{\tilde{p}_{\mathcal{D}}(\z)}{p_{\theta}(\z)} \right) \right] + \mathbb{H}(P_{\mathcal{D}})
 = - \mathbb{E}_{P_{\mathcal{D}}} \left[ \log \frac{p_{\theta}(\z)}{\sqrt{|G(\z)|}} \right].
\end{align*}
Minimising $\mathcal{L}^{\text{Like}}(\theta)$ amounts to pushing-forward $P_{\theta}$'s mass so that empirical observations $\z_i \sim P_{\mathcal{D}}$ have a positive likelihood under $P_{\theta}$.
Yet, in general the model $(P_{\theta})$ has (most of his) mass outside the manifold's support which may cause such a \emph{naive} approach to be ill-suited.
More crucially it implies that in general the model's mass is \emph{not} covering the full target's support.
% $\supp(P_{\theta}) \nsubset P_{\mathcal{D}} \subseteq \M$ hence that $P_{\theta} \nll P_{\mathcal{D}}$.
In that case, the \emph{reverse} Kullback-Leibler divergence $ \KL{P_{\theta}}{P_{\mathcal{D}}}=\mathcal{L}^{\text{KL}}(\theta)$ is not even defined.

Next, consider the case where $\M$ is a submanifold embedded in $\R^D$ with $D > d$, such as $\M=\S^d$ where $D=d+1$.
In this setting the \emph{naive} model $P_{\theta}$ is even \emph{misspecified} since it is defined on a different probability space than the target.
In the limit $\supp(P_{\theta}) \rightarrow \M$, $ P_{\theta}$ is not defined because we have that $\int_{\R^D} P_{\theta} \rightarrow \infty$.
The target does consequently not belong to the model's class.
%
%
% \section{Time-evolution on manifolds} \label{sec:proof}
\section{Instantaneous change of variable} \label{sec:proof}
% \todo{These intro sentences give the wrong impression about the contributions in
  % this sections. It seems like we only {\bf recall} known things. It should be
  % stated in the way that makes clear what the {\bf new contributions} are.}
In the following we derive the \emph{instantaneous change of density} that a manifold-valued random variable induces when its dynamics are governed by an \gls{ODE}.
We show that in the Riemannian setting this \emph{instantaneous change of density} can be expressed in terms of the manifold's metric tensor.

\paragraph{Proof of \cref{prop:instant_change}}
% https://math.stackexchange.com/questions/300925/jacobian-of-a-smooth-flow-on-a-smooth-manifold
% Liouville's theorem
% https://math.stackexchange.com/questions/2090304/flow-of-an-autonomous-ordinary-differential-equation
%
\begin{proof}
For a time dependant particles $\z(t)$, whose dynamics are given by the following \gls{ODE}
\begin{align*}
\frac{d \z(t)}{d t}  = \f(\z(t), t)
\end{align*}
the change in density is given by the Liouville equation (or Fokker–Planck equation without the diffusion term); $\forall \z \in \M, \forall t \in [0, T]$
\begin{align*}
\frac{\partial}{\partial t} p(\z, t) &= - \diver(p(\z, t) \f(\z, t)) \\
 &= - \Bigl< \frac{\partial}{\partial \z} p(\z, t), \f(\z, t) \Bigr>_{\z} - p(\z, t) ~\diver( \f(\z, t))  \nonumber
\end{align*}
where the last step was obtained by applying the divergence product rule.
By introducing the time dependence in $\z(t)$ and differentiating with respect to time we get
\begin{align*}
\frac{\partial}{\partial t} p(\z(t), t) &= 
\Bigl< \frac{\partial}{\partial \z} p(\z(t), t), \frac{\partial}{\partial t} \z(t) \Bigr>_{\z(t)} 
 + \frac{\partial}{\partial t} p(\z(t), t) \nonumber \\
&= \Bigl< \frac{\partial}{\partial \z} p(\z(t), t), \f(\z(t), t) \Bigr>_{\z(t)} 
 - \Bigl< \frac{\partial}{\partial \z} p(\z(t), t), \f(\z(t), t) \Bigr>_{\z(t)} 
 - p(\z(t), t) ~\diver( \f(\z(t), t)) \nonumber \\
&= - p(\z(t), t) ~\diver( \f(\z(t), t)) \nonumber
\end{align*}
Hence the evolution of the log density is given by
\begin{align} \label{eq:app_continuous_change_of_var}
\frac{\partial}{\partial t} \log p(\z(t), t) 
% &= \frac{1}{p(\z(t), t)} \frac{\partial}{\partial t} p(\z(t), t) \nonumber \\
&= - \diver(\f(\z(t), t)).
\end{align}
\end{proof}
\paragraph{Divergence computation} \label{sec:div_computation}
%https://math.stackexchange.com/questions/137573/divergence-of-a-vector-field-on-a-manifold
For a Riemannian manifold $(\M, g)$, with local coordinates $\z$, the divergence of a vector field $\f$ is given by
\begin{align} \label{eq:div_beg}
 \diver(\f(\z, t)) &= \frac{1}{\sqrt{|G(\z)|}} \sum_{i=1}^d \frac{\partial}{\partial z^i} 
 \left(\sqrt{|G(\z)|} ~f^i(\z, t) \right) \\
 &= \frac{1}{\sqrt{|G(\z)|}} \sum_{i=1}^d  \left( 
 \sqrt{|G(\z)|} ~\frac{\partial}{\partial z^i} f^i(\z, t) +
 f^i(\z, t) ~\frac{\partial}{\partial z^i} \sqrt{|G(\z)|} \right) \nonumber \\
 &= \sum_{i=1}^d \frac{\partial}{\partial z^i} f^i(\z, t) + 
 \frac{1}{\sqrt{|G(\z)|}} \sum_{i=1}^d f^i(\z, t)~\frac{\partial}{\partial z^i} \sqrt{|G(\z)|}  \nonumber \\
 &= \tr \left( \frac{\partial}{\partial \z}\f(\z, t) \right)  + 
 \frac{1}{\sqrt{|G(\z)|}} \Bigl< \f(\z, t), \frac{\partial}{\partial \z} \sqrt{|G(\z)|}~ \Bigr>. \label{eq:div}
 \end{align}
We note that in Equation \ref{eq:div_beg}, $f_i$ are the components of the vector field $\f$ with respect to the local unnormalized covariant basis $\left(\e_i\right)_{i=1}^d = \left(\left( \frac{\partial}{\partial z^i} \right)_{\z} \right)_{i=1}^d$.
However it is convenient to work with local basis having unit length vectors.
If we write ${\hat{\e}}_{i}$ for this normalized basis, and 
$\hat{f}^i$ for the components of $\f$ with respect to this normalized basis, we have that
\begin{align*}
\f = \sum_i f^i ~\e_i
 = \sum_i f^i ~ \|\e_i\| ~\frac{\e_i}{\|\e_i\|}
 = \sum_i f^i \sqrt{G_{ii}} ~\frac{\e_i}{\|\e_i\|}
 = \sum_i \hat{f}^i ~\hat{\e}_i
\end{align*}
using one of the properties of the metric tensor.
By dotting both sides of the last equality with the contravariant element $\hat{\e}_i$ we get that $\hat{f}^i = f^i \sqrt{G_{ii}}$.
Substituting in Equation \ref{eq:div_beg} yields
\begin{align} \label{eq:div_unit}
 \diver \left(\hat{\f}(\z, t)\right) &= \frac{1}{\sqrt{|G(\z)|}} \sum_{i=1}^d \frac{\partial}{\partial z^i}
  \left( \sqrt{\frac{|G(\z)|}{G_{ii}(\z)}} ~\hat{f}^i(\z, t) \right).
 \end{align}
Combining \cref{eq:app_continuous_change_of_var,eq:div_unit} and we finally get
 \begin{align} \label{eq:app_continuous_change_of_var_2}
\frac{\partial \log p(\z(t), t)}{\partial t}  = - \frac{1}{\sqrt{|G(\z)|}} \sum_{i=1}^d \frac{\partial}{\partial z^i} \left( \sqrt{\frac{|G(\z)|}{G_{ii}(\z)}} ~\hat{f}^i(\z, t) \right).
 \end{align}
We rely on this \cref{eq:app_continuous_change_of_var_2} for practical numerical experiments.
\section{regularization} \label{sec:regularization}
\subsection{$l^2$-norm}  \label{sec:regularization_l2}
Henceforth we motivate the use of an $l^2$ norm regularization in the context of \acrlongpl{CNF}.
We do so by highlighting a connection with the dynamical formulation of \acrlong{OT}, and by proving that this formulation still holds in the manifold setting.
\paragraph{Monge-Kantorovich mass transfer problem}
Let $(\M, d_{\M})$ be a metric space, and $c: \M \times \M \rightarrow [0, +\infty)$ a measurable map.
Given probability measures $p_0$ and $p_T$ on $\M$, Monge's formulation of the optimal transportation problem is to find a transport map $\phi^*: \M \rightarrow \M$ that realizes the infimum
\begin{align*}
\inf_{\phi} \int_{\M} c(\phi(\z), \z) ~p_0(d\z) \ \text{s.t.} \ p_T \sharp = p_0.
\end{align*}
It can be shown that this yields a metric on probability measures, and for $c=d_{\M}^2$, it is called the $L^2$ Kantorovich (or Wasserstein) distance
\begin{align} \label{eq:wasserstein_distance}
d_{W^2}(p_0, p_T)^2 = \inf_{\phi} \int_{\M} d_{\M}(\phi(\z), \z)^2 ~p_0(d\z).
\end{align}
By reintroducing the time variable in the ${L}^2$ Monge-Kantorovich mass transfer problem, the optimal transport map $\phi^*$ can be reformulated as the generated flow from an optimal vector field $\f$.
\begin{proposition}[Dynamical formulation from \citep{benamou2000computational}] \label{prop:dynamical_ot}
Indeed we have
\begin{align}  \label{eq:path-length}
d_{W^2}(p_0, p_T)^2 = 
\inf \frac{1}{T} \int_{0}^T \| \f \|^2_{p_t} dt 
 = \inf \frac{1}{T} \int_{0}^{T} \int_{\M} \langle \f(\z, t), \f(\z, t) \rangle_{\z} ~p_t(d\z) ~dt 
\end{align}
where the infimum is taken among all weakly continuous distributional solutions of the continuity equation $\frac{\partial}{\partial t} p_t = - \diver(p_t \f)$ such that $p(0) = p_0$ and $p(T) = p_T$.
Writing $\phi^*_t = \phi^*(\cdot, t)$ the flow generated by the optimal \gls{ODE}, then the optimal transport map is given by $\phi^* = \phi^*_T$.
\end{proposition}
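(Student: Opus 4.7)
The plan is to establish the two inequalities separately, essentially adapting the classical Benamou--Brenier argument from Euclidean space to the Riemannian setting by replacing straight lines with geodesics.

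For the upper bound $d_{W^2}(p_0, p_T)^2 \geq \inf \tfrac{1}{T}\int_0^T \|\f\|^2_{p_t}\,dt$, I would take a (near-)optimal transport map $\phi^*$ for \cref{eq:wasserstein_distance} and construct a velocity field that realises the infimum. For each $\z \in \M$, let $\gamma_\z: [0,T] \to \M$ be the constant-speed geodesic with $\gamma_\z(0)=\z$ and $\gamma_\z(T)=\phi^*(\z)$, whose existence is guaranteed by completeness. Define the interpolating maps $\phi^*_t(\z) = \gamma_\z(t)$, the intermediate densities $p_t = (\phi^*_t)_\sharp p_0$, and the vector field $\f(\phi^*_t(\z), t) = \gamma_\z'(t)$. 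By construction $\partial_t p_t + \diver(p_t \f) = 0$ (the continuity equation), and since $\gamma_\z$ is a unit-speed geodesic reparameterised to $[0,T]$, $\|\gamma_\z'(t)\|_{\gamma_\z(t)} = d_\M(\z, \phi^*(\z))/T$. Integrating gives $\tfrac{1}{T}\int_0^T \|\f\|^2_{p_t}\,dt = \tfrac{1}{T^2}\int_\M d_\M(\z, \phi^*(\z))^2 p_0(d\z) = d_{W^2}(p_0, p_T)^2$.

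For the lower bound, take any admissible pair $(p_t, \f)$ satisfying the continuity equation with the prescribed endpoints. By \cref{prop:vff} the field $\f$ generates a flow $\phi_t$ on $\M$, and since $p_t = (\phi_t)_\sharp p_0$ solves the continuity equation with the same initial condition, uniqueness gives $p_T = (\phi_T)_\sharp p_0$, so $\phi_T$ is an admissible transport map. For each $\z$, the curve $t \mapsto \phi_t(\z)$ connects $\z$ to $\phi_T(\z)$, hence
\begin{align*}
d_\M(\z, \phi_T(\z))^2 \leq \left(\int_0^T \|\f(\phi_t(\z), t)\|_{\phi_t(\z)}\,dt\right)^2 \leq T \int_0^T \|\f(\phi_t(\z), t)\|^2_{\phi_t(\z)}\,dt,
\end{align*}
by the definition of Riemannian distance and Cauchy--Schwarz. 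Integrating against $p_0(d\z)$ and changing variables $\z \mapsto \phi_t(\z)$ in the right-hand side gives $d_{W^2}(p_0, p_T)^2 \leq \int_\M d_\M(\z, \phi_T(\z))^2 p_0(d\z) \leq T \int_0^T \|\f\|^2_{p_t}\,dt$, which rearranges to the desired bound.

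The main obstacle is the upper bound, where one needs existence and sufficient regularity of a Monge-type optimal map $\phi^*$ on the manifold so that the geodesic interpolation is well-defined and pushes forward to a curve in the space of measures satisfying the continuity equation in a weak sense. On a complete Riemannian manifold this follows from McCann's generalisation of Brenier's theorem (using $c$-concave potentials and the Riemannian exponential map), but its invocation is the substantive non-Euclidean input. A secondary technical point is the lack of a closed-form geodesic midpoint construction for general $\M$; however, for the constant curvature manifolds of interest (sphere and Poincar\'e ball) geodesics are explicit, so the interpolation is concrete. Beyond these ingredients the remainder is a direct transcription of the Benamou--Brenier argument with the Euclidean inner product replaced by $\langle\cdot,\cdot\rangle_\z$.
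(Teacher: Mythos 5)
Your proposal is correct and follows essentially the same route as the paper: one direction via change of variables along the flow plus Cauchy--Schwarz/Jensen to bound the kinetic energy below by $\int d_\M(\z,\phi_T(\z))^2\,p_0(d\z)$, the other by geodesic interpolation $\phi(\z,t)=\exp_\z\!\left(\tfrac{t}{T}\log_\z(\phi^*(\z))\right)$ of an optimal Monge map, which makes the inequality tight. The only difference is presentational (you do the geodesic-interpolation direction first and spell out the appeal to uniqueness of the continuity equation and to \cref{prop:vff}), not substantive.
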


The RHS of Equation (\ref{eq:path-length}) can then be approximated with no extra-cost with a \acrlong{MC} estimator given samples from $p_t = \phi_t \sharp p_0$.

\paragraph{Manifold Setting}
Let's now focus on the setting where $\M$ is a Riemannian manifold.

\begin{proposition}[Optimal map \citep{ambrosio2003Optimal}] 
Assume that $\M$ is a $C^3$, complete Riemannian manifold with no boundary and $d_{\M}$ is the Riemannian distance. If $p_0$, $p_T$ have finite second order moments and $p_0$ is absolutely continuous with respect to $vol_{\M}$, then there exists a unique optimal transport map $\phi$ for the Monge-Kantorovich problem with cost $c = d_{\M}^2$.
Moreover there exists a potential $h : \M \mapsto \R$ such that
\begin{align*}
\phi^*(\z) = \exp_{\z}(- \nabla h(\z)) \quad vol_{\M}-a.e..
\end{align*}
\end{proposition}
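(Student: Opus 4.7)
The plan is to follow the McCann program for optimal transport on Riemannian manifolds, adapted from Brenier's theorem in the Euclidean setting. First I would relax the Monge problem to its Kantorovich formulation by searching over transport \emph{plans} $\pi$ on $\M \times \M$ with marginals $p_0$ and $p_T$. Since $\M$ is Polish and the cost $c = d_{\M}^2$ is lower semi-continuous and bounded below, the admissible set is tight and weakly closed, and the finite second-moment hypothesis makes the cost functional finite and weakly lower semi-continuous. Existence of an optimal plan $\pi^*$ then follows by the direct method of calculus of variations.

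Next I would invoke the duality/monotonicity theory: any optimal plan is concentrated on a $c$-cyclically monotone set, and such a set is contained in the $c$-superdifferential of a $c$-concave Kantorovich potential $\varphi : \M \to \R$. For the quadratic Riemannian cost, $\varphi$ is locally semi-concave on the interior of its domain, using uniform Hessian bounds on $y \mapsto \tfrac{1}{2} d_{\M}(\cdot, y)^2$ away from the cut locus. By a Rademacher-type theorem for semi-concave functions, $\varphi$ is then differentiable $\vol_{\M}$-almost everywhere, and since $p_0 \ll \vol_{\M}$, it is differentiable on a $p_0$-full measure set.

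Finally, at any such point $\z$ where $\varphi$ is differentiable and the optimal image $y$ is not in the cut locus of $\z$, the first-order optimality condition for $y \mapsto \tfrac{1}{2} d_{\M}(\z, y)^2 - \varphi(y)$, combined with the identity $\nabla_{\z} \tfrac{1}{2} d_{\M}(\z, y)^2 = -\log_{\z}(y)$, yields $\log_{\z}(y) = \nabla \varphi(\z)$, equivalently $y = \exp_{\z}(\nabla \varphi(\z))$. Setting $h = -\varphi$ recovers the stated form $\phi^*(\z) = \exp_{\z}(-\nabla h(\z))$. Since the $c$-superdifferential collapses to a single point at any point of differentiability of $\varphi$, the plan $\pi^*$ is concentrated on the graph of $\phi^*$, giving both the Monge map and its uniqueness.

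The main obstacle is the analytic handling of the cut locus, where $d_{\M}^2$ fails to be $C^2$ and semi-concavity can break down. One must establish $\vol_{\M}$-negligibility of the cut locus (a classical Riemannian fact for complete manifolds) and combine it with $p_0 \ll \vol_{\M}$ to ignore it $p_0$-a.e.; this is precisely what the technical machinery of \cite{ambrosio2003Optimal} provides. Without such control, neither the differentiability of $\varphi$ at $p_0$-typical points nor the passage from the $c$-superdifferential to a single-valued exponential-map formula can be justified.
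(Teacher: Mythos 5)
The paper does not actually prove this proposition---it is cited from \cite{ambrosio2003Optimal}, which presents McCann's theorem on Riemannian optimal transport, and it is used in the appendix only as input to the dynamical-formulation argument (Proposition 3) that follows. So there is no in-paper proof to compare against. Judged on its own, your sketch is a faithful outline of the standard McCann program: Kantorovich relaxation and existence of an optimal plan by the direct method, $c$-cyclical monotonicity, existence of a $c$-concave Kantorovich potential, local semi-concavity of that potential away from the cut locus, a.e.\ differentiability by Rademacher for semi-concave functions, negligibility of the cut locus combined with $p_0 \ll \vol_\M$, and finally the exponential-map formula from the first-order condition. Your identification of the cut locus as the central technical obstacle is exactly right.

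One local slip worth flagging: you write the first-order optimality condition for $y \mapsto \tfrac{1}{2}d_{\M}(\z,y)^2 - \varphi(y)$, but then invoke the gradient-in-$\z$ identity $\nabla_{\z}\tfrac{1}{2}d_{\M}(\z,y)^2 = -\log_{\z}(y)$. What you actually need is stationarity in $\z'$ of $\z' \mapsto \tfrac{1}{2}d_{\M}(\z',y)^2 - \varphi(\z')$ at $\z' = \z$ (the source-side potential, not the target-side one), which yields $\nabla\varphi(\z) = -\log_{\z}(y)$ and hence $\phi^*(\z) = \exp_{\z}(-\nabla\varphi(\z))$, so $h = \varphi$ directly rather than $h = -\varphi$. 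This is a variable-of-differentiation and sign-bookkeeping issue that does not affect the soundness of the outline.
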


Proposition \ref{prop:dynamical_ot} has been stated and proved for the case $\M=\R^d$.
Below we extend the proof given by \cite{benamou2000computational} for the manifold setting.
\begin{proof}[Proof of Proposition \ref{prop:dynamical_ot}]
We follow the same reasoning as the one developed for the Euclidean setting.
Let's first upper bound the Wasserstein distance, and then state the optimal flow which yields equality.
We have
\begin{align*}
\frac{1}{T} \int_{0}^{T} \int_{\M} \| \f(\z, t) \|_{\z}^2 ~p_t(d\z) ~dt
&= \frac{1}{T} \int_{0}^{T} \int_{\M} \left\| \f(\phi(\z, t)), t) \right\|_{\z}^2 ~p_0(d\z) ~dt \\
&= \frac{1}{T} \int_{0}^{T} \int_{\M} \left\| \frac{\partial}{\partial t} \phi(\z, t) \right\|_{\z}^2 ~p_0(d\z) ~dt \\
& \ge \int_{\M} d_{\M}(\phi(\z, T), \phi(\z, 0))^2 ~p_0(d\z) ~dt \\
& = \int_{\M} d_{\M}(\phi(\z, T), \z)^2 ~p_0(d\z) ~dt \\
& \ge \int_{\M} d_{\M}(\phi(\z), \z)^2 ~p_0(d\z) ~dt \\
& = d_{W^2}(p_0, p_T)^2.
\end{align*}
Thus, the optimal choice of flow $\phi$ is given by
\begin{align} \label{eq:optimal_flow}
\phi(\z, t) = \exp_{\z}\left(\frac{t}{T} \log_{\z}(\phi^*(\z))\right),
\end{align}
since $\phi(\z, 0) = \z$, $\phi(\z, T) = \phi^*(\z)$ and 
\begin{align*}
\left\| \frac{\partial}{\partial t} \phi(\z, t) \right\|_{\z} 
= \left\| \frac{\partial}{\partial t} \phi(\z, t=0) \right\|_{\z} 
= \left\| \log_{\z}(\phi^*(\z) \right\|_{\z} 
= d_{\M}(\phi^*(\z), \z).
\end{align*}
\end{proof}
%
%https://arxiv.org/abs/math/0304389 https://arxiv.org/pdf/0711.4519.pdf http://php.math.unifi.it/users/cimeCourses/2009/01/Ambrosio/CIME09_II.pdf}
%https://link.springer.com/content/pdf/10.1007/s002110050002.pdf
%http://lp.uni-goettingen.de/blog/magdalena/wp-content/uploads/sites/6/2014/01/optimal-transport.pdf
%
Note that the optimal flow from Equation \ref{eq:optimal_flow} yields integral paths $\gamma(t) = \phi(\z, t)$ that are \emph{geodesics} and have constant \emph{velocity}.
\paragraph{Motivation}
Regularizing the vector field with the RHS of Equation \ref{eq:path-length} would hence tend to make the generated flow $\phi_T$ closer to the optimal map $\psi^*$.
By doing so, one hopes to increase smoothness of $\f$ and consequently lower the solver \gls{NFE} given a fixed tolerance.

This has been observed in the Euclidean setting by \cite{finlay2020How}.
They empirically showed that regularizing the loss of a \gls{CNF} with the vector field's $l^2$ norm improves training speed.
Motivated by the successful use of gradient regularization \citep{novak2018Sensitivity,drucker1992Improving}, they showed that additionally regularizing the Frobenius norm of the vector field's Jacobian helps.
In the following subsection we remind that this regularization term can also be motivated from an estimator's variance perspective.

\subsection{Frobenius norm}  \label{sec:regularization_frobenius}
\paragraph{Hutchinson's estimator}
Hutchinson's estimator \citep{hutchinson1990stochastic} is a simple way to obtain a stochastic estimate of the trace of a matrix.
Given a d-dimensional random vector $\bm{\epsilon} \sim p$ such that $\mathbb{E}[\bm{\epsilon}] = 0$ and $\text{Cov}(\bm{\epsilon}) = I_d$, we have
\begin{align*}
\tr(A) = \mathbb{E}_{\bm{\epsilon} \sim p}[\bm{\epsilon}^T A \bm{\epsilon}].
\end{align*}
Rademacher and Gaussian distributions have been used in practice.
For a Rademacher, the variance is given by \citep{avron2011Randomized}
\begin{align*}
\mathbb{V}_{\bm{\epsilon} \sim p}[\bm{\epsilon}^T A \bm{\epsilon}] =
2 ~\|A\|_F - 2 \sum_i A^2_{ii},
\end{align*}
whereas for a Gaussian it is given by
\begin{align*}
\mathbb{V}_{\bm{\epsilon} \sim p}[\bm{\epsilon}^T A \bm{\epsilon}] =
2 ~\|A\|_F.
\end{align*}

\paragraph{Divergence computation}
As reminded in Appendix \ref{sec:div_computation} by Equation \ref{eq:div}, computing the vector field divergence $\diver(\f(\z, t))$ involves the computation of the trace of vector field's Jacobian
$\tr \left( \frac{\partial}{\partial \z}\f(\z, t) \right)$.
As highlighted in \cite{grathwohl2018FFJORD,salman2018Deep}, one can rely on the Hutchinson's estimator to estimate this trace with $A = \frac{\partial}{\partial \z}\f(\z, t)$.

The variance of this estimator thus depends on the Frobenius norm of the vector's field Jacobian $\|\frac{\partial}{\partial \z}\f(\z, t)\|_F$, as noted in \cite{grathwohl2018FFJORD}.
Regularizing this Jacobian should then improve training by reducing the variance of the divergence estimator.

\section{Vector flows and neural architecture} \label{sec:app_vector_field}
Hereafter we discuss about flows generated by vector fields, and neural architectural choices that we make for their parametrization.
Properties of vector fields have direct consequences on the properties of the generated flow and in turn on the associated pushforward probability distributions. 
In particular we derive sufficient conditions on the flow so that it is \emph{global}, i.e.\ is a bijection mapping the manifold to itself.

\subsection{Existence and uniqueness of a global flow} \label{sec:global_flow}
We start by discussing about vector flows and sufficient conditions on their uniqueness and existence.
\paragraph{Local flow}
First we remind the Fundamental theorem of flows \citep{lee2003Introduction} which gives the existence and uniqueness of a smooth \emph{local flow}.
\begin{proposition}[Fundamental theorem of flows] \label{prop:local_flow}
Let $\M$ be a \emph{smooth} complete manifold with local coordinates $\z$. 
Let $f_{\theta}: \M \times \R \mapsto \T\M~$ a $C^1$ time- dependent \emph{vector field} and $\z_0 \in \M$.
Then there exists an open interval $I$ with $0 \in I$, an open subset $U \subseteq \M$ containing $\z_0$, and a unique smooth map $\phi: I \times U \mapsto \M$ called \emph{local flow} which satisfies the following properties.
We write $\phi_t(\z) = \phi(\z, t)$.
\begin{enumerate}
	\item $\frac{\partial}{\partial t}\phi(\z, t) = f_{\theta}(\phi(\z, t), t)~$  for all $\z, t \in U \times I$, and $\phi_0 = \text{id}_{\M}$.
	% \item  and $\phi_t \circ \phi_s = \phi_{t+s}$ whenever both sides are well-defined.
	\item For each $t \in I$, the map $\phi_t: U \mapsto \M$ is a local $C^1$-diffeomorphism.
\end{enumerate}
\end{proposition}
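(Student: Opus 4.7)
The plan is to reduce the manifold problem to a classical ODE problem in Euclidean space by working in local coordinates, then invoke the standard Picard-Lindelöf theorem together with the smooth dependence of ODE solutions on initial conditions. Since the conclusion is purely local, a single chart suffices.

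First, I would pick a smooth chart $(V, \psi)$ around $\z_0$ with $\psi: V \to \tilde V \subseteq \R^d$. In these coordinates the vector field $\f_\theta$ pulls back to a $C^1$ time-dependent map $\tilde \f: \tilde V \times \R \to \R^d$, and the defining ODE becomes the ordinary Euclidean ODE $\dot{\tilde \z}(t) = \tilde \f(\tilde \z(t), t)$ with initial condition $\tilde \z(0) = \psi(\z_0)$. Because $\tilde \f$ is $C^1$ it is locally Lipschitz in the space variable and continuous in time, so the Picard-Lindelöf theorem produces an open interval $I \ni 0$ and an open neighborhood $\tilde U$ of $\psi(\z_0)$ on which a unique solution $\tilde \phi: I \times \tilde U \to \tilde V$ exists, with $\tilde \phi(\cdot, 0) = \mathrm{id}_{\tilde U}$. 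Pulling back via $\psi^{-1}$ gives the candidate local flow $\phi: I \times U \to \M$ with $U = \psi^{-1}(\tilde U)$, and property (1) is immediate from the construction.

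For smoothness, I would invoke the standard theorem on $C^1$ (indeed $C^k$) dependence of ODE solutions on initial conditions and parameters: since $\tilde \f$ is $C^1$, the map $(\tilde \z_0, t) \mapsto \tilde \phi(\tilde \z_0, t)$ is jointly $C^1$ on $\tilde U \times I$. Composing with the smooth chart maps $\psi, \psi^{-1}$ preserves this regularity, so $\phi$ is $C^1$ as a map of manifolds. Uniqueness of $\phi$ as a map follows from the pointwise uniqueness of the ODE solution.

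For property (2), the key observation is that, for each fixed $t \in I$, the map $\phi_t: U \to \M$ has a natural candidate inverse given by flowing backwards: the time-reversed ODE $\dot{\tilde w}(s) = -\tilde \f(\tilde w(s), t - s)$ also satisfies the hypotheses of Picard-Lindelöf, and by uniqueness its flow composed with $\phi_t$ is the identity on a (possibly smaller) neighborhood of $\z_0$. Shrinking $U$ and $I$ if necessary, both $\phi_t$ and its inverse are $C^1$, giving the desired local $C^1$-diffeomorphism. The main subtlety I expect is merely bookkeeping: ensuring the interval $I$ and neighborhood $U$ can be chosen uniformly so that both the forward and backward flows stay inside the chart domain $V$, which follows from continuity of $\phi$ at $(t,\z) = (0, \z_0)$ together with openness of $V$.
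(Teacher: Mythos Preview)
Your argument is correct and is essentially the standard textbook proof of the local flow theorem. However, the paper does not actually prove this proposition: it is stated explicitly as a reminder of a classical result, with a citation to Lee's \emph{Introduction to Smooth Manifolds}, and is then used as a black box in the proof of the subsequent global-flow proposition. So there is no ``paper's own proof'' to compare against here; your chart-based reduction to Picard--Lindel\"of together with $C^1$ dependence on initial data and the backward-flow inverse is exactly the argument one finds in the cited reference.
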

Note that with such assumptions, the existence and uniqueness of flows $\phi_t$ are only \emph{local}.

\paragraph{Global flow}
% \todo[inline]{Need more rigour. \cite{frankel_2011}. Hairy ball theorem}
% https://math.stackexchange.com/questions/1911607/is-it-the-picard-lindelöf-theorem
%https://math.stackexchange.com/questions/3420333/proving-picards-theorem-on-compact-manifolds
% https://math.stackexchange.com/questions/1188370/differential-equation-on-a-manifold
%https://math.stackexchange.com/questions/1964402/trajectories-of-a-vector-field-defined-on-complete-manifold
%https://math.stackexchange.com/questions/408027/bounded-vector-field-has-globally-defined-flow
%https://maths-people.anu.edu.au/~andrews/DG/DG_chap6.pdf
%https://server.math.umanitoba.ca/~jarino/courses/math4800/math4800_slides_general_ODEs.4p.pdf
%https://en.wikipedia.org/wiki/Vector_field#Flow_curves
%http://www.few.vu.nl/~pasquott/reminder.pdf
%https://www.maths.tcd.ie/~bnick/flows.pdf
%http://www.dipmat2.unisa.it/people/vitagliano/www/main.pdf
%https://www.cis.upenn.edu/~cis610/cis610-15-sl6.pdf

We would like the flow $\phi$ to be defined for all times and on the whole manifold, i.e.\ a \emph{global flow} $\phi: \M \times \R \mapsto \M$.
% Equivalently, maps $\phi_t$ that form a one-parameter group of diffeomorphisms of $\M$.
Fortunately, if $\M$ is \emph{compact} (such as n-spheres and torii), then the flow is global \citep{lee2003Introduction}.
We show below that another sufficient condition for the flow to be global is that  the vector field be bounded.

\begin{proposition}[Global Flow] \label{prop:global_flow}
Let $\M$ be a \emph{smooth} complete manifold.
Let $f_{\theta}: \M \times \R \mapsto \T\M~$ be a $C^1$ bounded time-dependent \emph{vector field}.
Then the domain of the flow $\phi$ is $\R \times \M$, i.e.\ the flow is \emph{global}.
\end{proposition}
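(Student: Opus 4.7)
The plan is to use the Fundamental Theorem of Flows (\cref{prop:local_flow}) to obtain a unique maximal integral curve through each point, and then argue by contradiction that its maximal interval of existence must be all of $\R$. The obstruction to extension in general is the classical ``escape lemma'': an integral curve fails to extend past a finite time $T^\star$ only if it eventually leaves every compact set. I would show that boundedness of $\f_\theta$ together with completeness of $\M$ rules this out.

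First I would fix $\z_0\in\M$ and, by \cref{prop:local_flow}, consider the unique maximal integral curve $\gamma:(T_-,T_+)\to\M$ with $\gamma(0)=\z_0$ and $\gamma'(t)=\f_\theta(\gamma(t),t)$. Suppose toward contradiction that $T_+<\infty$ (the case $T_->-\infty$ is symmetric). Let $M>0$ be a uniform bound such that $\|\f_\theta(\z,t)\|_\z\le M$ for all $(\z,t)\in\M\times\R$. Then for any $t_1<t_2$ in $(T_-,T_+)$, the Riemannian length of $\gamma|_{[t_1,t_2]}$ is
\begin{equation*}
L(\gamma|_{[t_1,t_2]})=\int_{t_1}^{t_2}\|\gamma'(s)\|_{\gamma(s)}\,ds=\int_{t_1}^{t_2}\|\f_\theta(\gamma(s),s)\|_{\gamma(s)}\,ds\le M(t_2-t_1),
\end{equation*}
so $d_\M(\gamma(t_1),\gamma(t_2))\le M(t_2-t_1)$. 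In particular, for any sequence $t_n\uparrow T_+$, the points $\gamma(t_n)$ form a Cauchy sequence in the metric space $(\M,d_\M)$.

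Next I would invoke the metric completeness of $\M$ (which, by Hopf--Rinow, is equivalent to geodesic completeness as assumed) to conclude that this Cauchy sequence converges to some limit $\z_\star\in\M$, and that in fact $\lim_{t\to T_+}\gamma(t)=\z_\star$ (the limit is independent of the chosen sequence by the triangle inequality and the Lipschitz-in-time bound above). Now I would apply \cref{prop:local_flow} at the point $(\z_\star,T_+)$: since $\f_\theta$ is $C^1$, there exist $\varepsilon>0$ and an open neighborhood $U\ni\z_\star$ on which a local flow $\tilde\phi:(T_+-\varepsilon,T_++\varepsilon)\times U\to\M$ is defined with $\tilde\phi(\z_\star,T_+)=\z_\star$. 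Picking $t^\star\in(T_+-\varepsilon,T_+)$ close enough to $T_+$ so that $\gamma(t^\star)\in U$, I would then concatenate $\gamma|_{(T_-,t^\star]}$ with the curve $s\mapsto\tilde\phi(\gamma(t^\star),s)$ (shifted appropriately) to obtain an integral curve defined on an interval strictly larger than $(T_-,T_+)$. This contradicts the maximality of $(T_-,T_+)$, proving $T_+=+\infty$, and symmetrically $T_-=-\infty$.

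The main obstacle is not really technical but rather bookkeeping: making sure the Lipschitz-in-time bound is correctly derived in the Riemannian (rather than coordinate) setting, and justifying the use of metric completeness to upgrade the Cauchy property of $\gamma$ into an actual limit point of $\M$ at which the local flow theorem can be reapplied. Once those two steps are in place, the standard escape/extension argument closes the proof and, together with \cref{prop:local_flow}, yields the $C^1$-diffeomorphism property claimed in \cref{prop:vff} via the usual group law $\phi(\cdot,s+t)=\phi(\phi(\cdot,s),t)$ with inverse $\phi(\cdot,t)^{-1}=\phi(\cdot,-t)$.
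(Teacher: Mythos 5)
Your proof is correct and follows essentially the same route as the paper's: bound the Riemannian length of the integral curve by $M|t_2-t_1|$ using the uniform bound on $\f_\theta$, deduce the Cauchy property along any sequence $t_n\uparrow T_+$, invoke Hopf--Rinow to get metric completeness and hence a limit point, and derive a contradiction with maximality. The only difference is that you make the final extension step explicit (applying the local flow theorem at $(\z_\star,T_+)$ and concatenating), whereas the paper stops at ``$\gamma$ is continuous at $b$, which is a contradiction'' and leaves that last application of \cref{prop:local_flow} implicit.
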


\begin{corollary}
For each $t \in \R$, the map $\phi_t: \M \mapsto \M$ is a $C^1$-diffeomorphism.
\end{corollary}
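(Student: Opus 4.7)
The plan is to apply the Fundamental Theorem of Flows (Proposition 4) to get a local flow, and then upgrade local to global by ruling out finite-time blow-up via an escape lemma combined with metric completeness. Concretely, for each $\z_0 \in \M$, let $(a(\z_0), b(\z_0)) \subseteq \R$ denote the maximal interval of existence of the integral curve $t \mapsto \phi(\z_0, t)$. I need to show that $a(\z_0) = -\infty$ and $b(\z_0) = +\infty$; by symmetry (reversing time) it suffices to handle $b(\z_0)$.

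Suppose, for contradiction, that $T := b(\z_0) < +\infty$. The standard \emph{escape lemma} for maximal integral curves on smooth manifolds states that if $T < +\infty$, then the curve $\phi(\z_0, \cdot)$ eventually leaves every compact subset of $\M$ as $t \to T^-$. The key step is then to show this is impossible under our hypotheses. Since $\f_\theta$ is bounded, there exists $M > 0$ with $\|\f_\theta(\z, t)\|_{\z} \leq M$ for all $(\z, t)$. For any $s, t \in [0, T)$ with $s < t$, the Riemannian length of the segment of the integral curve between times $s$ and $t$ is
\begin{align*}
L(\phi(\z_0, \cdot)|_{[s,t]}) = \int_s^t \Bigl\| \tfrac{\partial}{\partial \tau}\phi(\z_0, \tau) \Bigr\|_{\phi(\z_0,\tau)} \, d\tau = \int_s^t \|\f_\theta(\phi(\z_0,\tau), \tau)\|_{\phi(\z_0,\tau)} \, d\tau \leq M(t-s),
\end{align*}
and hence $d_\M(\phi(\z_0, s), \phi(\z_0, t)) \leq M|t-s|$. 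Thus $t \mapsto \phi(\z_0, t)$ is Lipschitz on $[0, T)$ and, in particular, $\{\phi(\z_0, t_n)\}$ is Cauchy in $(\M, d_\M)$ for any sequence $t_n \to T^-$.

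By the completeness hypothesis on $\M$ (which via Hopf--Rinow implies the metric space $(\M, d_\M)$ is complete), this Cauchy sequence converges to some $\z_* \in \M$. The image of the curve is then contained in a closed ball of finite radius around $\z_0$, which is compact (again by Hopf--Rinow), contradicting the escape lemma. Hence $b(\z_0) = +\infty$, and analogously $a(\z_0) = -\infty$, so the flow is defined on all of $\R \times \M$. The main obstacle is justifying the passage from ``bounded vector field'' to ``the curve stays in a compact set'', which is precisely where Hopf--Rinow (via the completeness assumption) is essential; without completeness the Lipschitz bound alone would not prevent the curve from escaping to a missing boundary point.

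For the corollary, the group property of the flow, $\phi_s \circ \phi_t = \phi_{s+t}$ and $\phi_0 = \mathrm{id}_\M$ (inherited from the local statement and extended by uniqueness of integral curves, now valid globally), gives $\phi_t \circ \phi_{-t} = \phi_{-t} \circ \phi_t = \mathrm{id}_\M$, so $\phi_t$ is a bijection with inverse $\phi_{-t}$. Smoothness of $\phi_t$ and $\phi_{-t}$ (in fact $C^1$) follows from the $C^1$ regularity asserted in Proposition 4, applied at every point. Therefore $\phi_t$ is a $C^1$-diffeomorphism of $\M$ for every $t \in \R$.
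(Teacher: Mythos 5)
Your proof of the global-flow part (no finite-time blowup) takes essentially the same route as the paper's proof of Proposition~\ref{prop:global_flow}: you both observe that a bounded vector field makes the integral curve Lipschitz with respect to $d_\M$, pass to a Cauchy sequence, and invoke Hopf--Rinow to convert geodesic completeness into metric completeness so the sequence has a limit. You phrase the contradiction via the escape lemma (the curve must leave every compact set as $t \to b^-$), whereas the paper directly extends the curve to time $b$ and contradicts maximality; these are two standard ways to package the same argument.

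However, your argument for the Corollary proper has a genuine gap. You invoke the one-parameter group law $\phi_s \circ \phi_t = \phi_{s+t}$ and conclude that $\phi_t^{-1} = \phi_{-t}$. That law holds only for \emph{autonomous} (time-independent) vector fields, but here $\f_\theta(\z, t)$ depends explicitly on $t$ (cf.\ \cref{eq:ode}), so the flow is a two-parameter family $\phi_{s,t}$ with the Chapman--Kolmogorov composition law $\phi_{t,u}\circ\phi_{s,t} = \phi_{s,u}$; in this notation $\phi_t \equiv \phi_{0,t}$ and its inverse is $\phi_{t,0}$, which is generally not $\phi_{0,-t} = \phi_{-t}$. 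The conclusion of the Corollary is still true, and the fix is short: once the flow is global, $\phi_t = \phi_{0,t}$ is $C^1$ and a local diffeomorphism at every point by \cref{prop:local_flow}; it is globally bijective because, for any $\z \in \M$, solving the \gls{ODE} backwards from the initial condition $\z$ at time $t$ to time $0$ (which exists globally by \cref{prop:global_flow}) produces the unique preimage; and the resulting inverse $\phi_{t,0}$ is itself $C^1$, again by \cref{prop:local_flow} applied to the time-reversed problem. Hence $\phi_t$ is a $C^1$-diffeomorphism without appealing to the (inapplicable) one-parameter group law.
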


\begin{proof}[Proof of Proposition \ref{prop:global_flow}]
Let $c > 0$ s.t.\ $\|\f\| < c$	, and $\z_0 \in \M$ be an initial point.
Proposition \ref{prop:local_flow} gives the existence of an open interval $I=(a, b)$, a neighbourhood $U$ of $\z_0$ and a local flow $\phi: (a, b) \times U \mapsto \M$.
We write $\gamma = \phi(\z_0, \cdot)$.
The maximal interval of $\gamma$ is $(a, b)$, which means that $\gamma$ cannot be extended outside $(a, b)$.
Suppose that $b < \infty$.

The integral path $\gamma$ is Lipschitz continuous on $(a, b)$ since we have
\begin{align} \label{eq:path_integral_lispchitz}
d_{\M}(\gamma(t), \gamma(s)) \le \int_s^t \|\gamma'(t)\| ~dt = 
\int_s^t \|\f(\gamma(t), t)\| ~dt
\le c ~|t - s|
\end{align}
for all $s<t \in (a, b)$.

Let $(t_n)$ be a sequence in $(a, b)$ that converges to $b$.
Then since $(t_n)$ is a convergent sequence, it must also be a Cauchy sequence.
Then $\gamma(t_n)$ is also a Cauchy sequence by Equation \ref{eq:path_integral_lispchitz}.
Since $\M$ is geodesically complete, it follows by Hopf-Rinow theorem that $(\M, d_{\M})$ is complete, hence that $\gamma(t_n)$ converges to a point $\p \in \M$.

Now suppose that $(s_n)$ is another sequence in $(a, b)$ that converges to $b$.
% Then by Lipschitz continuity of $\gamma$, $d(\gamma(s_n), \gamma(t_n)) \le c |t_n - s_n|$.
Then by Equation \ref{eq:path_integral_lispchitz} $\lim_{n \to \infty} d(\gamma(s_n), \gamma(t_n)) = 0$, thus $\gamma(s_n)$ also converges to $\lim_{n \to \infty} \gamma(t_n) = \p$.
So for every sequence $(t_n)$ in $(a, b)$ that converges to $b$, we have that $(\gamma(t_n))$ converges to $\p$.
Therefore by the sequential criterion for limits, we have that $\gamma$ has the limit $\p$ at the point $b$.
Therefore, define $\gamma(b)=\p$ and so $\gamma$ is continuous at $b$ which is a contradiction.
\end{proof}
\subsection{Geodesic distance layer} \label{sec:geodesic_layer}
The expressiveness of \glspl{CNF} directly depends on the expressiveness of the vector field and consequently on its architecture.
Below we detail and motivate the use of a \emph{geodesic distance} layer, as an input layer for the vector field neural architecture.
\paragraph{Linear layer}
A linear layer with one \emph{neuron} can be written in the form $h_{\a, \p}(\z) = \left\langle\a, \z-\p \right\rangle$, with orientation and offset parameters $\a,\p \in \R^d$.
Stacking $l$ such neurons $h$ yields a linear layer with width $l$.
This neuron can be rewritten in the form
\begin{align*}
&h_{\a, \p}(\z) = \text{sign}\left(\left\langle\a, \z-\p \right\rangle\right) \left\|\a\right\| d_E\left(\z,H^K_{\a,\p}\right)
\end{align*}
where $H_{\a,\p} = \{\z \in \R^p~|~ \left\langle\a,\z-\p \right\rangle = 0 \} = \p + \{ \a \}^{\perp}$ is the decision hyperplane. 
The third term is the distance between $\z$ and the decision hyperplane $H^K_{\a,\p}$ and
the first term refers to the side of $H^K_{\a,\p}$ where $\z$ lies.
\paragraph{Poincar\'e ball}
\cite{ganea2018Hyperbolic} analogously introduced a neuron $f^K_{\a, \p}: \B_K^d \rightarrow \R^p$ on the Poincar\'e ball,
\begin{align} \label{eq:geodesic_layer}
h^K_{\a, \p}(\z) = \text{sign} \left( \left\langle\a, \log^K_{\p}(\z) \right\rangle_{\p} \right) \left\|\a\right\|_{\p} d^K(\z,H^K_{\a,\p})
\end{align} 
with $H^K_{\a,\p} = \left\{\z \in \B_K^d~|~ \left\langle\a,\log^K_{\p}(\z) \right\rangle = 0 \right\} =\exp^K_{\p}(\{ \a \}^{\perp})$. 
A closed-formed expression for the distance $d^K(\z,H^K_{\a,\p})$ was also derived,
$d^K(\z,H^K_{\a,\p}) = \frac{1}{\sqrt{|K|}} \sinh^{-1} \left( \frac{2 \sqrt{|K|} |\left\langle -\p \oplus_K \z, a \right\rangle |}{(1 + K \| -\p \oplus_K \z\|^2)\|\a\|} \right)$ 
in the Poincar\'e ball.
To avoid an over-parametrization of the hyperplane, we set $\p = \exp_0(\a_0)$, and
$\a = \Gamma_{\bm{0} \rightarrow \p}(\a_0)$ with $\Gamma$ parallel transport (under Levi-Civita connection).
We observed that the term $\left\|\a\right\|_{\p}$ from Equation \ref{eq:geodesic_layer} was sometimes causing numerical instabilities, and that when it was not it also did not improve performance.
We consequently removed this scaling term.
The hyperplane decision boundary $H^K_{\a,\p}$ is called \emph{gyroplane} and is a semi-hypersphere orthogonal to the Poincar\'e ball's boundary.

\paragraph{Hypersphere}
In hyperspherical geometry, geodesics are great circles which can be parametrized by a vector $\w \in \R^{d+1}$ as
$H_{\w} = \{\z \in \S^d ~|~  \langle \w, \z \rangle =0 \}$.
The geodesic distance between $\z \in \S^d$ and the hyperplane $H_{\w}$ is then given by
\begin{align*}
d(\z, H_{\w}) = \left| \sin^{-1} \left( \frac{\langle \w, \z \rangle}{\sqrt{\langle \w, \w \rangle}} \right) \right|.
\end{align*}
In a similar fashion, a neuron is now defined by
\begin{align*}
h_{\w}(\z) = \left\|\w \right\|_2 \sin^{-1} \left( \frac{\langle \w, \z \rangle}{\sqrt{\langle \w, \w \rangle}} \right).
\end{align*} 
\paragraph{Geodesic distance layer}
One can then horizontally-stack $l$ neurons to make a \emph{geodesic distance} layer $g: \M \mapsto \R^l$ \citep{mathieu2019Continuous}.
Any standard feed-forward neural network can then be vertically-stacked on top of this layer.
\section{Extensions}  \label{sec:extensions}
\subsection{Product of manifolds} \label{sec:prod_manifolds}
Having described \glspl{CNF} for complete smooth manifolds in Section \ref{sec:model}, we extend these for product manifolds $\M = \M_1 \times \dots \times \M_k$.
For instance a $d$-dimensional torus is defined as $\mathbb{T}^d = \underbrace{\S^1 \times \cdots \times \S^1}_{d}$.
Any density $p_{\theta}(\z_1, \dots, \z_K)$ can decomposed via the chain rule of probability as
\begin{align*}
p_{\theta}(\z_1, \dots, \z_K) = \prod\nolimits_{k} p_{\theta_k}(\z_k~|~\z_1, \dots, \z_{k-1})
\end{align*}
where each conditional $p_{\theta_k}(\z_k~|~\z_1, \dots, \z_{k-1})$ is a density on $\M_k$.
As suggested in \cite{rezende2020Normalizing}, each conditional density can be implemented via a flow $\phi_k: \M_k \mapsto \M_k$ generated by a vector field $\f_k$, whose parameters $\theta_k$ are a function of $(\z_1, \dots, \z_{k-1})$. 
Such a flow $\phi = \phi_1 \circ \cdots \circ \phi_k$ is called \emph{autoregressive} \citep{papamakarios2018Masked} and conveniently has a lower triangular Jacobian, which determinant can be computed efficiently as the product of the diagonal term.

%%% Local Variables:
%%% mode: latex
%%% TeX-master: "../geoflow"
%%% End:

% !TEX root =  ../geoflow.tex

\section{Experimental details} \label{sec:exp_details}

% All models were implemented in PyTorch~\citep{paszke2017automatic}. % and All $95\%$ confidence intervals are computed over 12 runs.
Below we fully describe the experimental settings used to generate results introduced in \cref{sec:experiments}.
% We open-source our code for reproducibility purposes. % \footnote{\url{https://github.com/facebookresearch/riemannian_cnf}}.
% We will also shortly open-source our code for reproducibility purposes.
\paragraph{Architecture}
The architecture of the \emph{vector field} $\f_\theta$ is given by a \gls{MLP} with $3$ hidden layers and $64$ hidden units -- as in \citep{grathwohl2018FFJORD} -- for \emph{projected} (e.g.\ stereographic and wrapped cf \cref{sec:projected_methods}) and \emph{naive} (cf \cref{sec:ambient_distribution}) models.
We rely on tanh activation.
For our \gls{RCNF}, the input layer of the \gls{MLP} is replaced by a \emph{geodesic distance} layer \citep{ganea2018Hyperbolic,mathieu2019Continuous} (see \cref{sec:geodesic_layer}).
%, and the output layer is adapted is 

\paragraph{Objectives}
We consider two objectives, a \gls{MC} estimator of the negative log-likelihood
\begin{align*}
  \hat{\mathcal{L}}^{\text{Like}}(\theta) = - \sum_{i=1}^B \log p_\theta(\z_i) \ \text{with} \ \z_i \sim P_\mathcal{D}
\end{align*}
and a \gls{MC} estimator of the reverse \gls{KL} divergence
\begin{align*}
  \hat{\mathcal{L}}^{\text{KL}}(\theta) =  \sum_{i=1}^B \log p_\theta(h_\theta(\epsilon_i)) - \log p_\mathcal{D}(h_\theta(\epsilon_i))% \ \text{with} \ \epsilon_i \sim P
\end{align*}
 with $\z_i \sim P_\theta$ being reparametrized as $\z_i=h_\theta(\epsilon_i)$ and $\epsilon_i \sim P$.

\paragraph{Optimization}
All models are trained by the stochastic optimizer Adam~\citep{kingma2015Adam} with parameters $\beta_1 = 0.9$, $\beta_2 = 0.999$, batch-size of $400$ data-points and a learning rate set to $1e^{-3}$.
% Also, all experiments were performed using the exact divergence estimator.

\paragraph{Training}
We rely on the Dormand-Prince solver \citep{dormand1980family}, an adaptive Runge-Kutta 4(5) solver, with absolute and relative tolerance of $1e-5$ to compute approximate numerical solutions of the \gls{ODE}.
Each solver step is projected onto the manifold.
Models are trained on a cluster of GeForce RTX 2080 Ti GPU cards.

\subsection{Hyperbolic geometry and limits of conventional and wrapped methods}

In this experiment the target is set to be a wrapped normal on $\B^2$ \citep{nagano2019Wrapped,mathieu2019Continuous} with density 
$\mathcal{N}^{\text{W}}(\exp_{\bm{0}}(\alpha~\partial x), \Sigma)
= \exp_{\bm{\mu} \sharp} \mathcal{N}(\alpha~\partial x, \Sigma)$ with $\Sigma = \diag(0.3, 1.0)$.
The scalar parameter $\alpha$ allows us to locate the target closer or further away from the origin of the disk.
Through this experiment we consider three \glspl{CNF}:
\begin{itemize}
  \item \emph{Naive}: $P^{\text{N}}_{\theta} = \phi^{\R^2}_\sharp \mathcal{N}(0, 1)$
  \item \emph{Wrapped}: $P^{\text{W}}_{\theta} = (\exp_{\bm{0}} \circ ~\phi^{\R^2})_\sharp \mathcal{N}(0, 1)$
  \item \emph{Riemannian}: $P^{\text{R}}_{\theta} = \phi^{\B^2}_\sharp \mathcal{N}^{\text{W}}(0, 1)$
\end{itemize}
with $\phi^{\R^2}$ a conventional \gls{CNF} on $\R^2$, $\phi^{\B^2}$ our \gls{RCNF} introduced in \cref{sec:model}, $\mathcal{N}(0, 1)$ the standard Gaussian and $\mathcal{N}^{\text{W}}(0, 1)$ the standard wrapped normal.
For the \gls{RCNF} we scale the vector field as 
\begin{align*}
  \f_{\theta}(\z) = {|G(\z)|}^{-1/2} ~\texttt{neural\_net}(\z) = \left(\frac{1 - \|\z\|^2}{2}\right)^2 \texttt{neural\_net}(\z).
\end{align*}
These three models are trained for $1500$ iterations, by minimizing the negative log-likelihood (see \cref{fig:poicare_convergence}). The reported results are averaged over 12 runs.  % and the reverse \gls{KL} divergence (see \cref{fig:poincare_convergence_elbo}).
When training, the divergence is approximated by the (Hutchinson) stochastic estimator from \cref{eq:stochastic_div}.

\subsection{Spherical geometry}
Through the following spherical experiments we consider the two following models
\begin{itemize}
  \item \emph{Stereographic}: $P^{\text{S}}_{\theta} = (\rho^{-1} \circ ~\phi^{\R^2})_\sharp \mathcal{N}(0, 1)$
  \item \emph{Riemannian}: $P^{\text{R}}_{\theta} = \phi^{\S^2}_\sharp \mathcal{U}(\S^2)$
\end{itemize}
with $\rho^{-1}$ the inverse of the stereographic projection, $\phi^{\R^2}$ a conventional \gls{CNF} on $\R^2$, $\phi^{\S^2}$ our \gls{RCNF}, $\mathcal{N}(0, 1)$ the standard Gaussian and $\mathcal{U}(\S^2)$ the uniform distribution on $\S^2$.
For the \gls{RCNF} we project the output layer of the vector field as 
\begin{align*}
  \f_{\theta}(\z) = \text{proj}_{\T_{\z}\S^2} ~\texttt{neural\_net}(\z) = \frac{\texttt{neural\_net}(\z)}{\|\texttt{neural\_net}(\z)\|^2}
\end{align*}
so as to enforce output vectors to be \emph{tangent}.
All spherical experiments were performed using the exact divergence estimator.
\paragraph{Limits of the stereographic projection model}
In this experiment the target is chosen to be a $\vMF(\bm{\mu}, \kappa)$ located at $\mu= - \bm{\mu}_0$ with $- \{\bm{\mu}_0\} = (-1, 0, \dots, 0)$.
Both models are trained for $3000$ iterations by minimizing the negative log-likelihood and the reverse \gls{KL} divergence.
The reported results are averaged over 4 runs.

\paragraph{Density estimation of spherical data}
Finally we consider four earth location datasets, representing respectively volcano eruptions \citep{data_volcano}, earthquakes \citep{data_earthquake}, floods \citep{data_flood} and wild fires \citep{data_fire}.
The reported results are averaged over 12 runs.

Concerning the \glspl{CNF}, these models are trained by minimizing the negative log-likelihood for $1000$ epochs, except for the volcano eruption dataset where $3000$ epochs are required for convergence.
We observed that training models with a solver's tolerance of $1e-5$ was computationally intensive so we lowered the training tolerance to $1e-3$ for the volcano eruptions, earthquakes and wild fires datasets, and $1e-4$ for the floods dataset, while keeping it to $1e-5$ during evaluation.
We additionally did not use the geodesic layer (presented in \cref{sec:geodesic_layer}) to lower the computational time.
We also observed that annealing the learning rate such that $\alpha(t) = {0.98}^{(t / 300)} \alpha_0$ with $\alpha_0=1e^{-3}$ helped stabilizing training convergence.

Concerning the mixture of von Mises-Fisher distributions, the parameters are learned by minimizing the negative log-likelihood with Riemannian Adam \citep{becigneul2019Riemannian}.
The number of epochs is set to $10000$ for all datasets but for the volcano eruption one where $30000$ epochs are required for the vMF model to converge.
The learning rate and number of mixture components are selected by performing a hyperparameter grid search, over the following range: learning rate $\in \{1e-1, 5e-2, 1e-2\}$ and number of components $\in \{50, 100, 150, 200\}$.
\section{Additional figures} \label{sec:add_figures}
%
% \begin{figure}[ht]
% % \vspace{-1em}
% \centering
% \begin{subfigure}{1.\textwidth}
%   \includegraphics[width=\linewidth]{}
% \end{subfigure}
%   \caption{
%   Reverse \gls{KL} of \glspl{CNF} trained to fit a $\mathcal{N}^{\text{W}}(\exp_{\bm{0}}(\alpha ~\partial x), \Sigma)$ target on $\B^2$. 
%   % $95\%$ confidence intervals are computed over 4 runs.
%   }
%   \label{fig:poincare_convergence_elbo}
% % \vspace{-1em}
% \end{figure}
%
\begin{figure}[t]
  \centering
  \begin{subfigure}{1.\textwidth}
    \includegraphics[width=\linewidth]{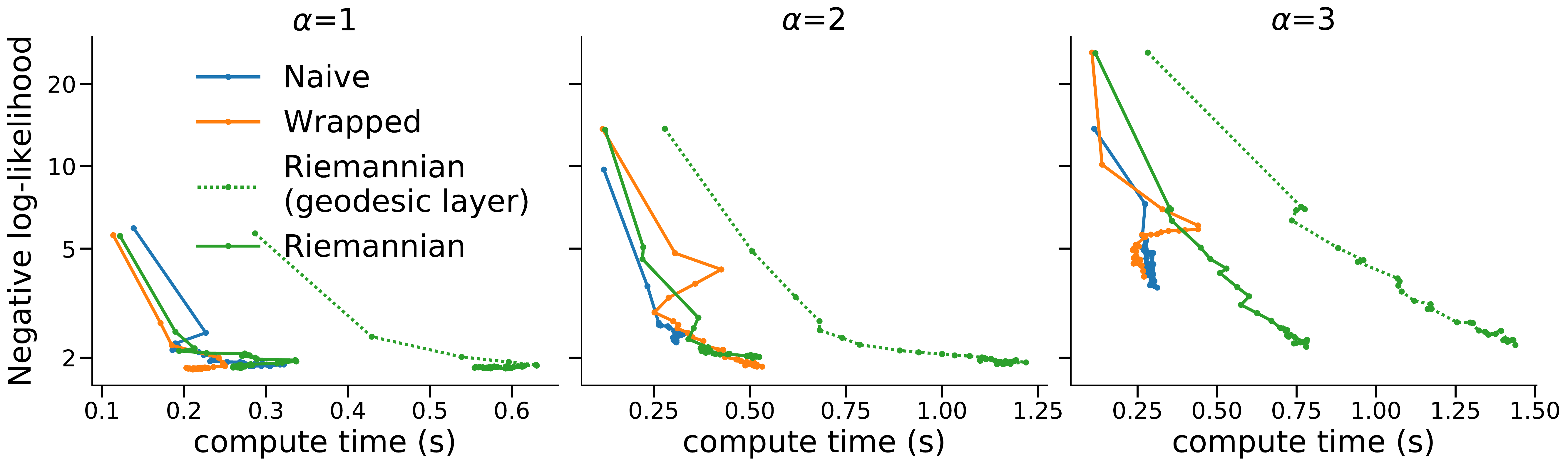}
  \end{subfigure}
    \caption{
    Ablation study of the geodesic layer computational impact for the \emph{Riemannian} model.
     Negative Log-likelihood of \emph{Riemannian} \glspl{CNF} trained to fit a $\mathcal{N}^{\text{W}}(\exp_{\bm{0}}(\alpha ~\partial x), \Sigma)$ target on $\B^2$.
    }
    \label{fig:disk_ablation_compute}
  \end{figure}

\begin{figure}[t]
% \vspace{-1em}
\centering
\begin{subfigure}{.49\textwidth}
  \includegraphics[width=\linewidth]{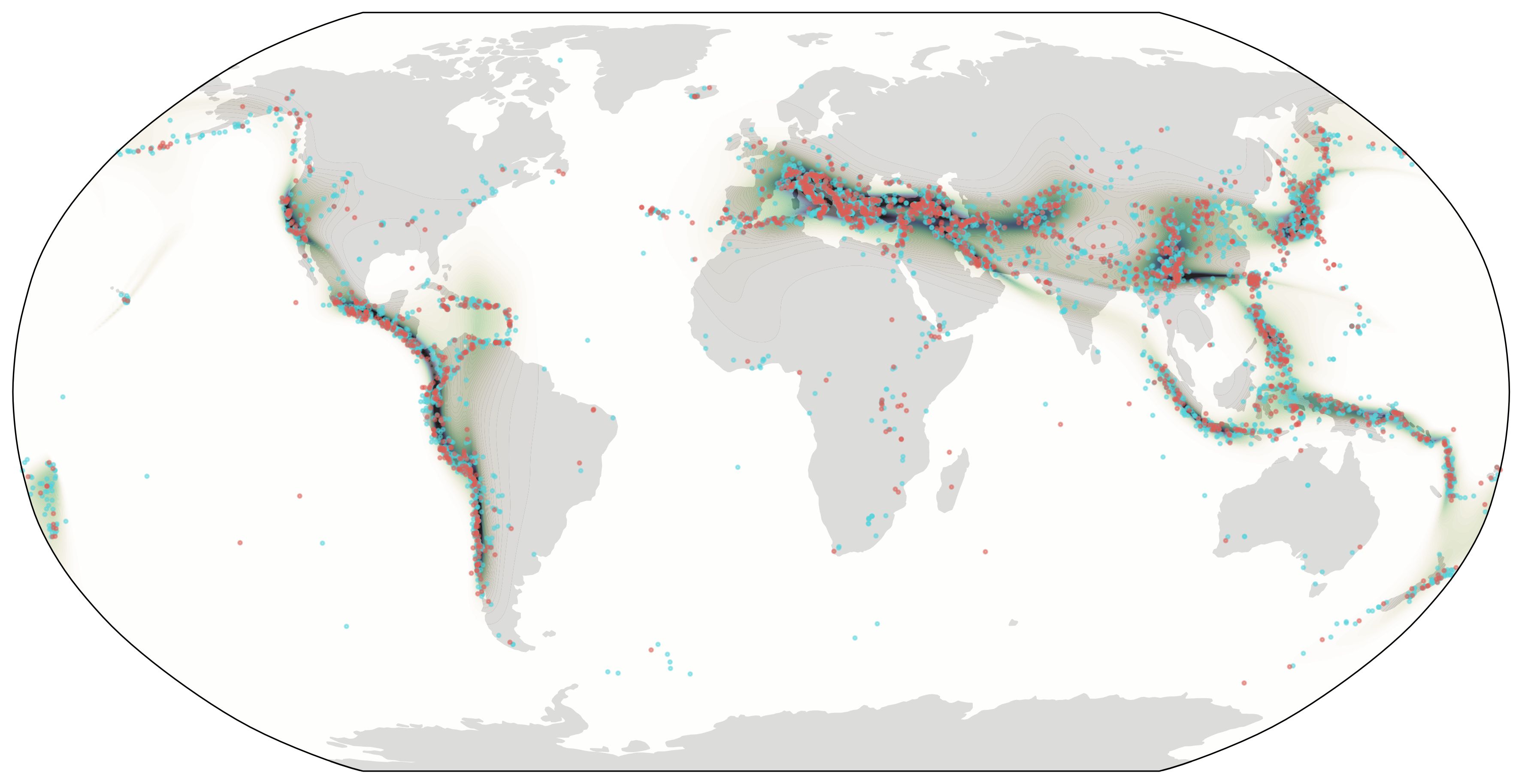}
  \put(-205,35){\rotatebox{90}{Earthquake}}
\end{subfigure}\hfil
\begin{subfigure}{.49\textwidth}
  \includegraphics[width=\linewidth]{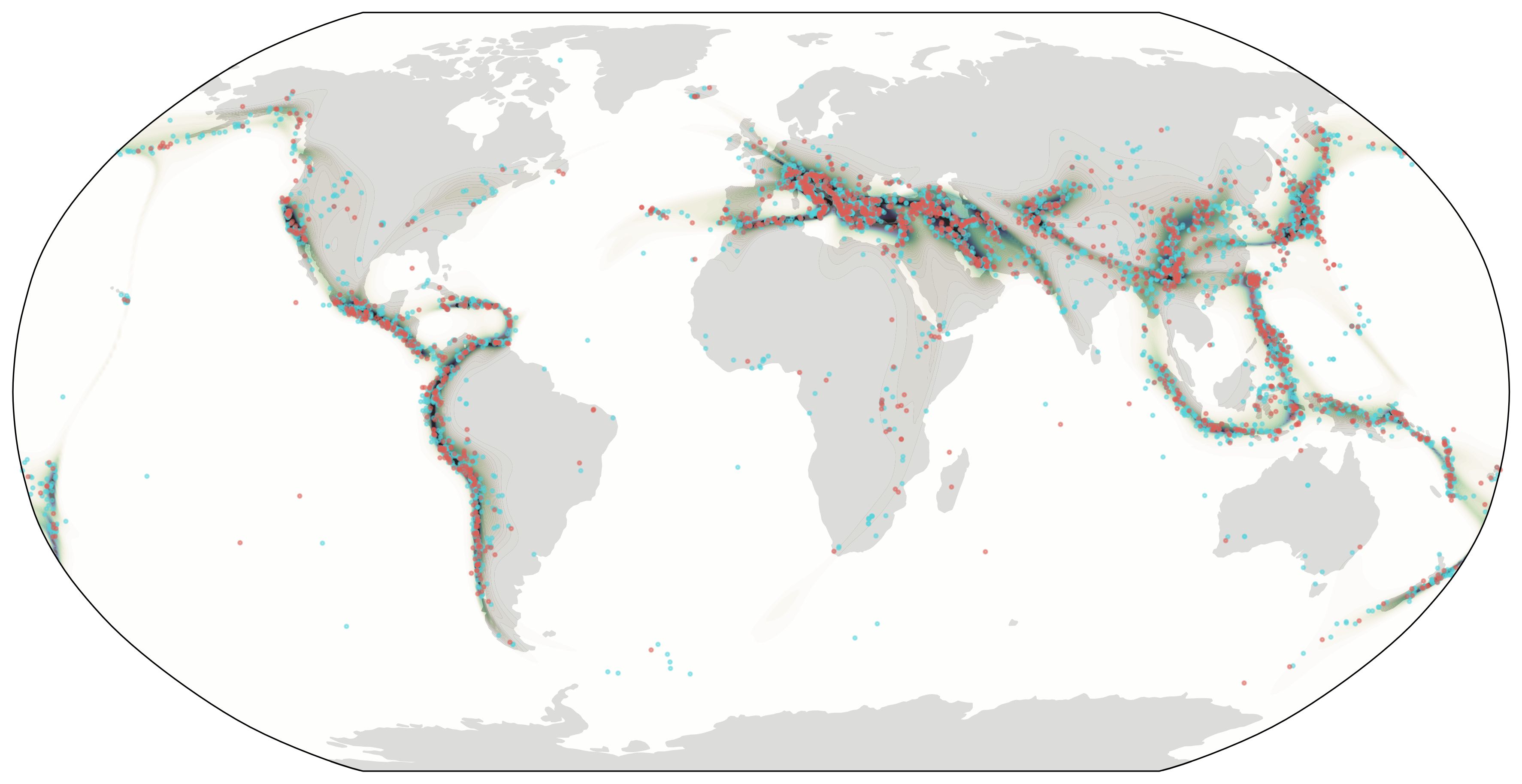}
\end{subfigure}\hfil
\begin{subfigure}{.49\textwidth}
  \includegraphics[width=\linewidth]{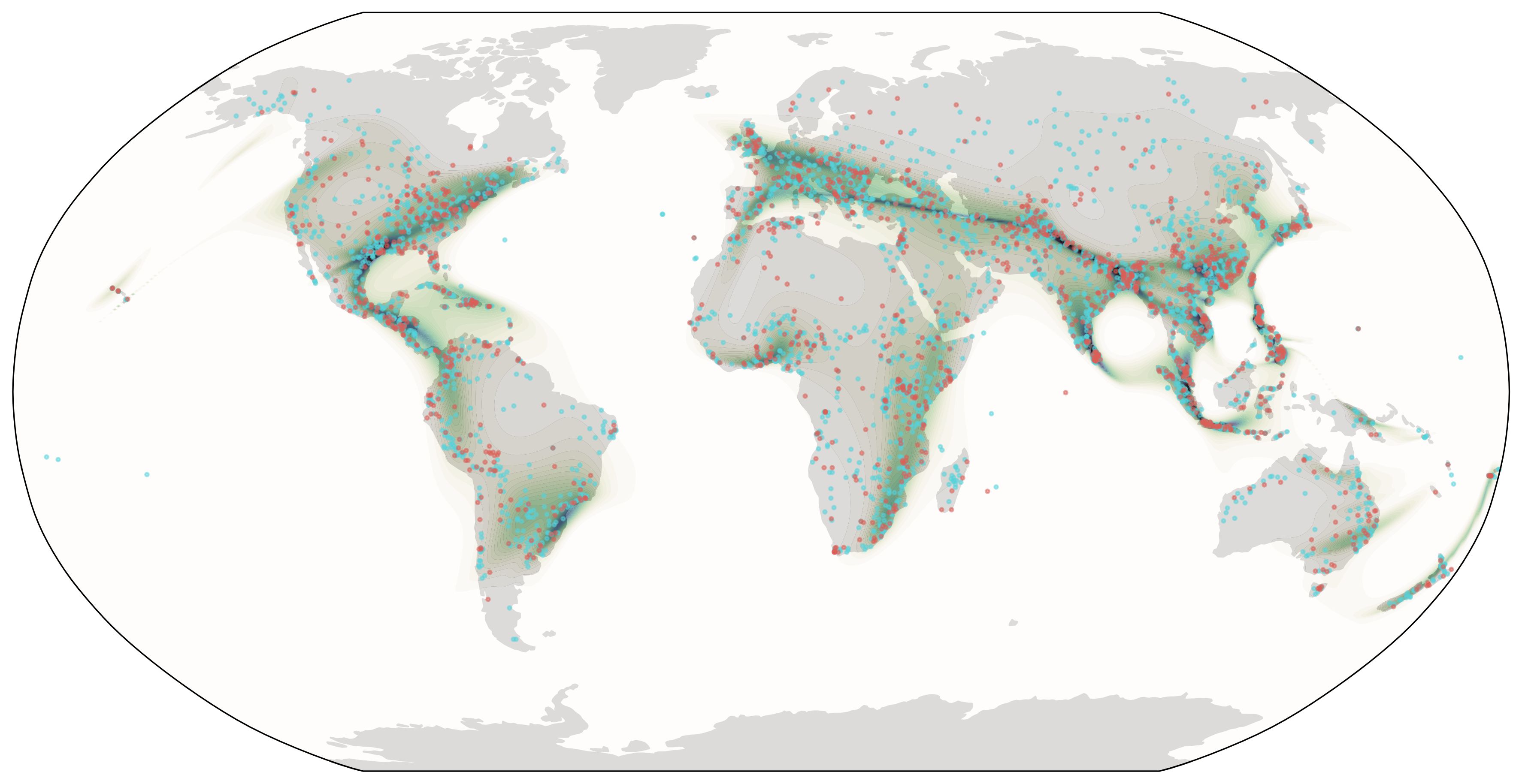}
  \put(-205,35){\rotatebox{90}{Flood}}
\end{subfigure}\hfil
\begin{subfigure}{.49\textwidth}
  \includegraphics[width=\linewidth]{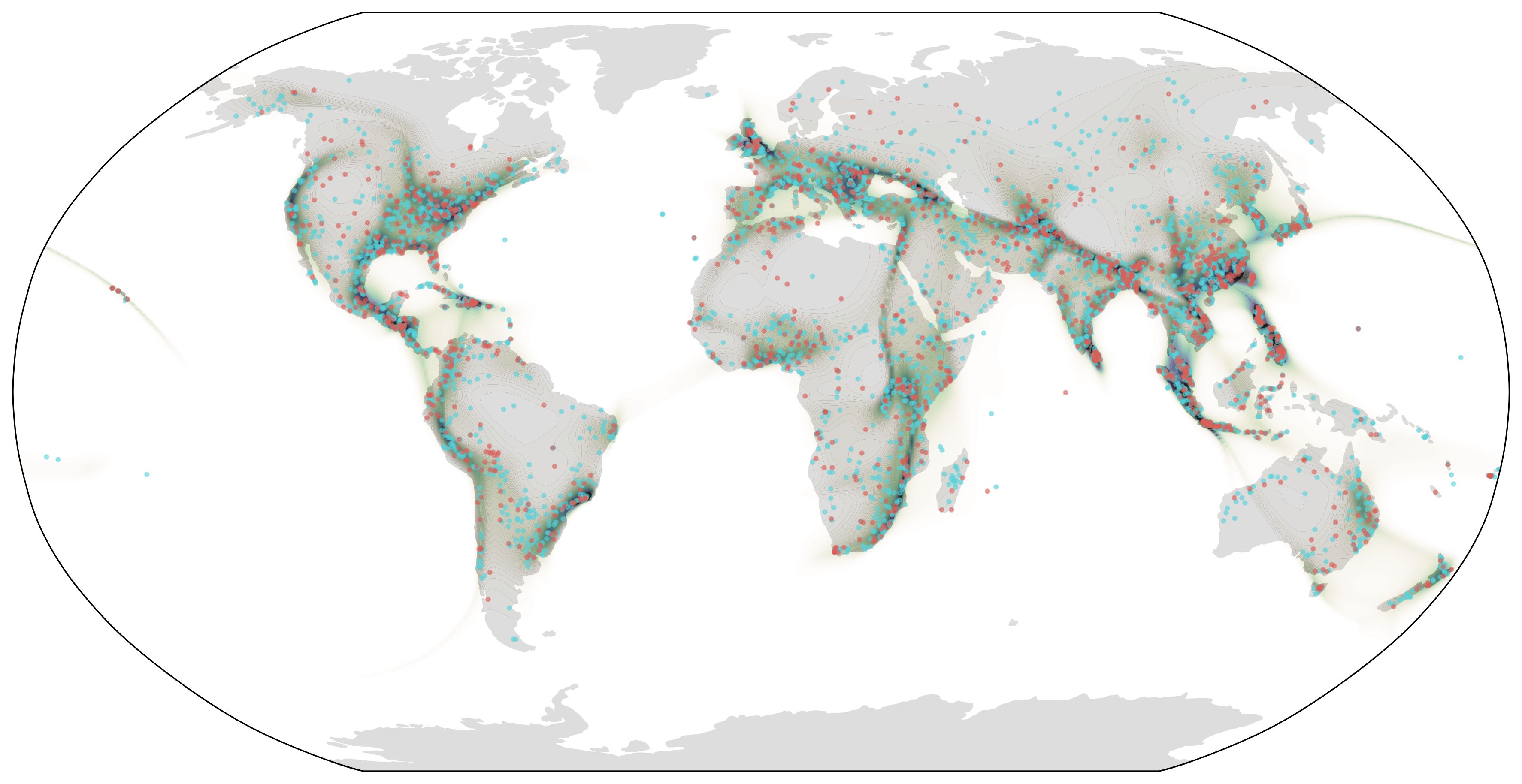}
\end{subfigure}\hfil
\begin{subfigure}{.49\textwidth}
  \includegraphics[width=\linewidth]{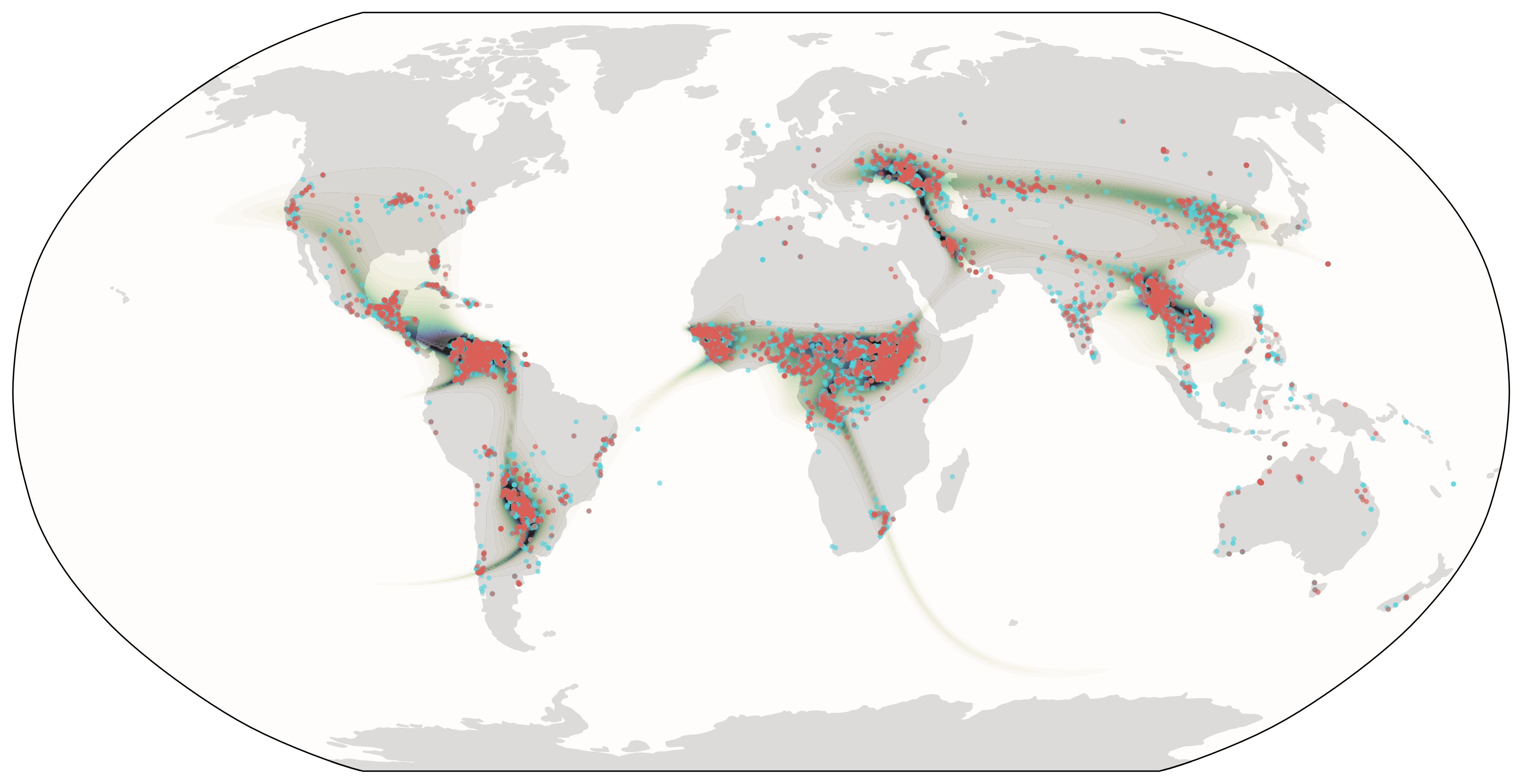}
  \put(-120,-10){Stereographic}
  \put(-205,35){\rotatebox{90}{Fire}}
\end{subfigure}\hfil
\begin{subfigure}{.49\textwidth}
  \includegraphics[width=\linewidth]{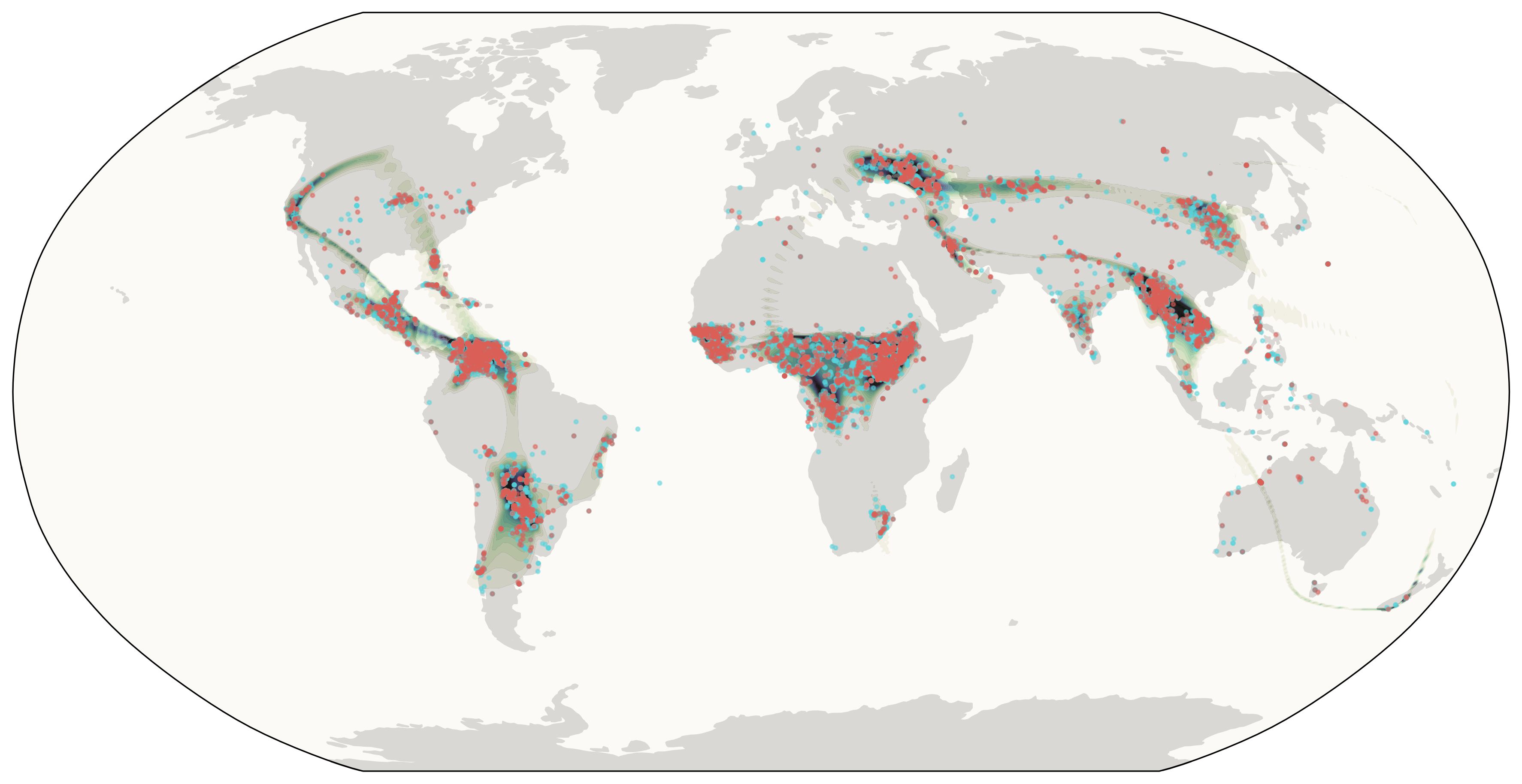}
  \put(-120,-10){Riemannian}
\end{subfigure}
  \caption{
    Density estimation for earth sciences data with Robinson projection.
    Blue and red dots represent training and testing datapoints, respectively.
    Heatmaps depict the log-likelihood of the trained models.
  }
  \label{fig:earth_robinson}
\end{figure}

\end{appendices}

\end{document}